\newcommand{\vct}{\boldsymbol }
\newcommand{\nml}{\mathcal{N}}
\newcommand{\argmin}{\mathrm{argmin}}
\newcommand{\argmax}{\mathrm{argmax}}
\newcommand{\rank}{\mathrm{rank}}
\newcommand{\diag}{\mat{diag}}
\newcommand{\mat}[1]{\mathbf{#1}}
\newcommand{\norm}[1]{\left\|#1\right\|}
\newtheorem{thm}{Theorem}[section]
\newtheorem{lem}{Lemma}[section]
\newtheorem{cor}{Corollary}[section]
\newtheorem{defn}{Definition}[section]
\newtheorem{conj}{Conjecture}[section]
\newtheorem{rem}{Remark}[section]
\title{An Improved Gap-Dependency Analysis of the Noisy Power Method}
\date{}
\author{
Maria Florina Balcan, Simon S. Du, Yining Wang, Adams Wei Yu\\
Machine Learning Department\\
Carnegie Mellon University\\
\texttt{\{ninamf, ssdu, yiningwa, weiyu\}@cs.cmu.edu} \\
}
\begin{document}
\maketitle

\begin{abstract}
	We consider the \emph{noisy power method} algorithm, which has wide applications in machine learning and statistics,
	especially those related to principal component analysis (PCA) under resource (communication, memory or privacy) constraints.
	Existing analysis of the noisy power method \citep{noisy-power-method,li2015rivalry} shows an unsatisfactory dependency over the 
	``consecutive" spectral gap $(\sigma_k-\sigma_{k+1})$ of an input data matrix, which could be very small and hence limits the algorithm's applicability.
	In this paper, we present a new analysis of the noisy power method that achieves improved gap dependency for both sample complexity and noise tolerance bounds.
	More specifically, we improve the dependency over $(\sigma_k-\sigma_{k+1})$ to dependency over $(\sigma_k-\sigma_{q+1})$, 
	where $q$ is an intermediate algorithm parameter and could be much larger than the target rank $k$.
	Our proofs are built upon a novel characterization of proximity between two subspaces that differ from canonical angle characterizations
	analyzed in previous works \citep{noisy-power-method,li2015rivalry}.
	Finally, we apply our improved bounds to distributed private PCA and memory-efficient streaming PCA and obtain bounds
	that are superior to existing results in the literature.
\end{abstract}

%\begin{keywords}
%	principal component analysis, noisy power method, spectral gap.
%\end{keywords}

\section{Introduction}

Principal Component Analysis (PCA) is a fundamental problem in statistics and machine learning.
The objective of PCA is to find a small number of orthogonal directions in the $d$-dimensional Euclidean space $\mathbb R^d$
that have the highest variance of a given sample set.
Mathematically speaking, given a $d\times d$ positive semi-definite matrix $\mat A$ of interest ($\mat A$ is usually the sample covariance matrix
$\mat A=\frac{1}{n}\sum_{i=1}^n{\vct z_i\vct z_i^\top}$ for $n$ data points $\vct z_1,\cdots,\vct z_n$),
one wishes to find the top-$k$ eigen-space of $\mat A$, where $k$ is the number of principal directions of interest and is typically much smaller than the ambient dimension $d$.
A popular algorithm for computing PCA is the \emph{matrix power method}, which starts with a random $d\times p$ matrix ($p\geq k$) $\mat X_0$ with orthonormal columns
and iteratively performs the following computation for $\ell=1,\cdots,L$:
\begin{enumerate}
	\item \textbf{Subspace iteration}: $\mat Y_{\ell} = \mat A\mat X_{\ell-1}$.
	\item \textbf{QR factorization}: $\mat Y_{\ell} = \mat X_{\ell}\mat R_{\ell}$, where $\mat X_{\ell}\in\mathbb R^{d\times p}$ has orthonormal columns and $\mat R_{\ell}\in\mathbb R^{p\times p}$ 
	is an upper-triangular matrix.
\end{enumerate}
It is well-known that when the number of iterations $L$ is sufficiently large, the span of the output $\mat X_L$ can be arbitrarily close to $\mat U_k$, the top-$k$ eigen-space of $\mat A$;
that is, $\|(\mat I-\mat X_L\mat X_L^\top)\mat U_k\|_2\leq\epsilon$ for arbitrarily small $\epsilon > 0$.
One particular drawback of power method is that the rate of convergence depends on the \emph{consecutive} eigengap $(\sigma_k-\sigma_{k+1})$
when $p=k$ (i.e., $\mat X_\ell$ has exactly the same number of columns as the target rank $k$).
The consecutive eigengap could be very small for practical large-scale matrices.
As a remedy, practitioners generally set $p$ to be slightly larger than $k$ for faster convergence and numerical stability~\citep{musco2015stronger}.
\cite{mgu-subspace-iteration} formally justifies this process by proving that under mild conditions, the dependency on $(\sigma_k-\sigma_{k+1})$ could be improved
to the ``larger" spectral gap $(\sigma_k-\sigma_{q+1})$, for some $k\le q\le p$,
which may be significantly larger than the consecutive gap even if $q$ is at the same order of $k$.
\footnote{Sec.~\ref{sec:main} provides such an example matrix with power-law decaying spectrum.}
Despite the wide applicability and extensive analysis of the (exact) matrix power method, 
in practice it is sometimes desired to analyze a \emph{noisy} version of power method,
where each subspace iteration computation is corrupted with noise.
Such noise could come from resource constraints such as inherent machine precision or memory storage,
or artificially imposed constraints for additional objectives such as data privacy preservation.
In both cases, the noise model can be expressed as $\mat Y_\ell=\mat A\mat X_{\ell-1}+\mat G_\ell$,
where $\mat G_\ell$ is a $d\times p$ noise matrix for iteration $\ell$ that can be either stochastic or deterministic (adversarial).
Note that $\mat G_\ell$ could differ from iteration to iteration but
the QR factorization step $\mat Y_\ell=\mat X_\ell\mat R_\ell$ is still assumed to be exact.
The noisy power method has attracted increasing interest from both machine learning and theoretical computer science societies
due to its simplicity and broad applicability \citep{noisy-power-method,li2015rivalry,musco2015stronger,mitliagkas2013memory}.
In particular, \citep{noisy-power-method} establishes both convergence guarantees and error tolerance (i.e., the largest magnitude of the noise matrix $\mat G_\ell$ 
the algorithm allows to produce consistent estimates of $\mat U_k$)
of the noisy power method.
\citep{noisy-power-method} also applied their results to PCA with resource (privacy, memory) constraints
and obtained improved bounds over existing results.

\subsection{Our contributions}

\paragraph{Improved gap dependency analysis of the noisy power method}
Our main contribution is a new analysis of the noisy power method with improved gap dependency.
More specifically, 
we improve the prior gap dependency $(\sigma_k-\sigma_{k+1})$ to $(\sigma_k-\sigma_{q+1})$,
where $q$ is certain integer between the target rank $k$ and the number of columns used in subspace iteration $p$.
Our results partially solve a open question in \citep{noisy-power-method}, which conjectured that such improvement over gap dependency
should be possible if $p$ is larger than $k$.
%\cite{noisy-power-method} conjectured that such improvement should be possible if $p$ is much larger than $k$.
To our knowledge, our bounds are the first to remove dependency over the consecutive spectral gap $(\sigma_k-\sigma_{k+1})$ for the noisy power method.

\paragraph{Gap-independent bounds}
As a by-product of our improved gap dependency analysis, we apply techniques in a recent paper \citep{musco2015stronger}
to obtain \emph{gap-independent} bounds for the approximation error $\|\mat A-\mat X_L\mat X_L^\top\mat A\|_2$.
This partially addresses another conjecture in \citep{noisy-power-method} regarding gap-independent approximation error bounds
with slightly worse bounds on magnitude of error matrices $\mat G_\ell$.

\paragraph{Applications}
The PCA problem has been previously considered under various resource constraints.
Two particularly important directions are private PCA \citep{private-pca-incoherent,analyze-gauss,exponential-pca,noisy-power-method},
where privacy of the data matrix being analyzed is formally preserved,
and distributed PCA \citep{distributed-pca,optimal-distributed-pca} where data matrices are stored separately on several machines
and communications among machines are constrained.
In this paper we propose a \emph{distributed private PCA} problem that unifies these two settings.
Our problem includes the entrywise private PCA setting in \citep{private-pca-incoherent,noisy-power-method} and distributed PCA setting in \citep{distributed-pca}
as special cases
and we demonstrate improved bounds over existing results for both problems.

We also apply our results to the memory-efficient streaming PCA problem considered in \citep{noisy-power-method,li2015rivalry,mitliagkas2013memory},
where data points arrive in streams and the algorithm is only allowed to use memory proportional to the size of the final output.
Built upon our new analysis of the noisy power method we improve state-of-the-art sample complexity bounds obtained in \citep{noisy-power-method}.

\paragraph{Proof techniques}
The noisy power method poses unique challenges for a improved gap dependency analysis.
Previous such analysis for the \emph{exact} power method in \citep{mgu-subspace-iteration,simon-spectral-gap}
considers a variant of the algorithm that only computes QR decomposition after the last subspace iteration.
Such strategy is no longer valid for noisy power method because without per-iteration QR decomposition,
the noise $\mat G_\ell$ will aggregate across iterations and eventually breaks the proximity between the final output $\mat X_L$
and the target top-$k$ eigen-space $\mat U_k$.
In the analysis of \citep{noisy-power-method} the largest principal angle between $\mat X_\ell$ and $\mat U_k$ is considered for every iteration $\ell$.
However, such analysis cannot possibly remove the dependency over $(\sigma_k-\sigma_{k-1})$, as we discuss in Sec.~\ref{sec:proof_sketch}.
To overcome such difficulties, we propose in Eq.~(\ref{eq_rank_perturb}) a novel characterization between a rank-$p$ subspace $\mat X_\ell$ and the rank-$k$ target space $\mat U_k$ through an intermediate subspace $\mathbf{U}_q$,
which we name as \emph{rank-$k$ perturbation on $\mat{U}_q$ by $\mathbf{X}_\ell$}.
This quantity does not correspond to any principal angle between linear subspaces when $p>k$.
Built upon the shrinkage behavior of the proposed quantity across iterations, we are able to obtain improved gap dependency for the noisy power method.
We hope our proof could shed light to the analysis of an even broader family of numerical linear algebra algorithms that involve noisy power iterations.

\subsection{Setup}
For a $d\times d$ positive semi-definite matrix $\mat A$, we denote $\mat A=\mat U\mat\Sigma\mat U^\top$ as its eigen-decomposition,
where $\mat U$ is an orthogonal $d\times d$ matrix and $\mat\Sigma=\diag(\sigma_1,\cdots,\sigma_d)$ is a $d\times d$ diagonal matrix consisting eigenvalues of $\mat A$,
sorted in descending order: $\sigma_1\geq\sigma_2\geq\cdots\geq\sigma_d\geq 0$.
The spectral norm $\|\mat A\|_2$ and Frobenious norm $\|\mat A\|_F$ can then be expressed as
$\|\mat A\|_2 = \sigma_1$ and $\|\mat A\|_F = \sqrt{\sigma_1^2+\cdots+\sigma_d^2}$.
For an integer $k\in[d]$ , we define $\mat U_k$ as a $d\times k$ matrix with orthonormal columns,
whose column space corresponds to the top-$k$ eigen-space of $\mat A$.
Similarly, $\mat\Sigma_k=\diag(\sigma_1,\cdots,\sigma_k)$ corresponds to the top-$k$ eigenvalues of $\mat A$.
Let $\mat A_k\in\argmin_{\mat B: \rank(\mat B)\leq k}\|\mat A-\mat B\|_\xi$ be the optimal rank-$k$ approximation of $\mat A$.
It is well-known that $\mat A_k=\mat U_k\mat\Sigma_k\mat U_k^\top$ is the optimal approximation for both spectral norm ($\xi=2$) and Frobenious norm ($\xi=F$)~\citep{eckart1936approximation}.

QR Factorization is a process to obtain an orthonormal column basis of a matrix.
For a $d \times p$ matrix $\mathbf{Y}$, QR factorization gives us $\mathbf{Y} = \mathbf{X}\mathbf{R}$ where $\mathbf{X} \in \mathbb{R}^{d \times p}$ is orthonormal and $\mathbf{R} \in \mathbb{R}^{p \times p}$ is an upper triangular matrix~\citep{trefethen1997numerical}.

%\section{Preliminaries: review of matrix singular value decomposition}

\section{An improved analysis of the noisy power method}\label{sec:main}

%Let $\mat A\in\mathbb R^{d\times d}$ be a positive semi-definite matrix of dimension $d$.
%\footnote{For simplicity we focus on the symmetric case in this paper.
%An extension general rectangular matrices is straightforward, as remarked in \cite{noisy-power-method}.}

%\begin{algorithm}[h]
%	\caption{The noisy matrix power method}
%	\SetAlgoLined
%	\KwData{positive semi-definite data matrix $\mat A\in\mathbb R^{d\times d}$, target rank $k$, iteration rank $p \geq k$, number of iterations $L$.}
%	\KwResult{approximated eigen-space $\mat X_L\in\mathbb R^{d\times p}$, with orthonormal columns.}
%	\textbf{Initialization}: orthonormal $\mat X_0\in\mathbb R^{d\times p}$ by QR decomposition on a random Gaussian matrix $\mat G_0$;\\
%	\For{$\ell=1$ to $L$}{
%		Observe $\mat Y_\ell = \mat A\mat X_{\ell-1} + \mat G_\ell$ for some noise matrix $\mat G_\ell$;\\
%		QR factorization: $\mat Y_\ell = \mat X_\ell\mat R_\ell$, where $\mat X_\ell$ consists of orthonormal columns;
%	}
%	\label{alg_noisy_power_method}
%\end{algorithm}

\begin{algorithm}[h]
	\begin{algorithmic}
	\caption{The noisy matrix power method}
	%	\SetAlgoLined
	\STATE{\textbf{Input}: positive semi-definite data matrix $\mat A\in\mathbb R^{d\times d}$, target rank $k$, iteration rank $p \geq k$, number of iterations $L$.}
	\STATE{\textbf{Output}: approximated eigen-space $\mat X_L\in\mathbb R^{d\times p}$, with orthonormal columns.}
	\STATE{\textbf{Initialization}}: orthonormal $\mat X_0\in\mathbb R^{d\times p}$ by QR decomposition on  random Gaussian matrix $\mat G_0$;\\
	\FOR{$\ell=1$ to $L$}
	\STATE{
		1. Observe $\mat Y_\ell = \mat A\mat X_{\ell-1} + \mat G_\ell$ for some noise matrix $\mat G_\ell$;}
	\STATE{
		2. QR factorization: $\mat Y_\ell = \mat X_\ell\mat R_\ell$, where $\mat X_\ell$ consists of orthonormal columns;}
	\ENDFOR
	\label{alg_noisy_power_method}
	\end{algorithmic}
\end{algorithm}

The noisy power method is described in Algorithm \ref{alg_noisy_power_method}.
\citep{noisy-power-method} provides the first general-purpose analysis of the convergence rate and noise tolerance of Algorithm \ref{alg_noisy_power_method}.
We cite their main theoretical result below:
\begin{thm}[\cite{noisy-power-method}]
	Fix $\epsilon\in(0,1/2)$ and let $k\leq p$. Let $\mat U_k\in\mathbb R^{d\times k}$ be the top-$k$ eigenvectors of a positive semi-definite matrix $\mat A$ and let $\sigma_1\geq\cdots\geq\sigma_n\geq 0$ denote
	its eigenvalues.
	Suppose at every iteration of the noisy power method the noise matrix $\mat G_\ell$ satisfies
	$$
	5\|\mat G_\ell\|_2 \leq \epsilon(\sigma_k-\sigma_{k+1}) \;\;\;\; \text{and} \;\;\;\; 5\|\mat U_k^\top\mat G_\ell\|_2 \leq (\sigma_k-\sigma_{k+1})\frac{\sqrt{p}-\sqrt{k-1}}{\tau\sqrt{d}}
	$$
	for some fixed constant $\tau$. Assume in addition that the number of iterations $L$ is lower bounded as
	$$
	L = \Omega\left(\frac{\sigma_k}{\sigma_k-\sigma_{k+1}}\log\left(\frac{d\tau}{\epsilon}\right)\right).
	$$
	Then with probability at least $1-\tau^{-\Omega(p+1-k)}-e^{-\Omega(d)}$ we have $\|(\mat I-\mat X_L\mat X_L^\top)\mat U_k\|_2\leq \epsilon$.
	\label{thm_main_before}
\end{thm}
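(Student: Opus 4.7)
The plan is to track a principal-angle-style distance between the subspace spanned by $\mat{X}_\ell$ and the top-$k$ eigenspace $\range(\mat{U}_k)$ across iterations. Following the Hardt--Price approach, I would work with two coupled quantities: a lower bound on $\sigma_{\min}(\mat{U}_k^\top \mat{X}_\ell)$ and an upper bound on $\|\mat{U}_{d-k}^\top \mat{X}_\ell\|_2$, where $\mat{U}_{d-k}$ is an orthonormal basis for the orthogonal complement of $\mat{U}_k$ in $\mathbb{R}^d$. Because the QR step only right-multiplies $\mat{Y}_\ell$ by the invertible matrix $\mat{R}_\ell^{-1}$, the column span of $\mat{X}_\ell$ coincides with that of $\mat{Y}_\ell$, so both quantities can be tracked at the un-normalized $\mat{Y}_\ell$ level and transferred back through $\mat{R}_\ell^{-1}$. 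The target bound $\|(\mat{I}-\mat{X}_L\mat{X}_L^\top)\mat{U}_k\|_2 \leq \epsilon$ is the largest principal sine of $(\mat{U}_k,\range(\mat{X}_L))$, and is controlled once the ratio $\|\mat{U}_{d-k}^\top \mat{X}_L\|_2/\sigma_{\min}(\mat{U}_k^\top \mat{X}_L) \leq \epsilon$.

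First, for the base case I would invoke standard non-asymptotic Gaussian random matrix bounds. Since $\mat{X}_0$ is the orthonormal factor of a $d\times p$ standard Gaussian matrix, with probability at least $1-\tau^{-\Omega(p-k+1)}-e^{-\Omega(d)}$ one gets $\sigma_{\min}(\mat{U}_k^\top \mat{X}_0) \gtrsim (\sqrt{p}-\sqrt{k-1})/(\tau\sqrt{d})$ together with $\|\mat{U}_{d-k}^\top \mat{X}_0\|_2 \leq 1$, yielding an initial tangent of order $\tau\sqrt{d}/(\sqrt{p}-\sqrt{k-1})$. This is the only place the randomness of the initialization is used.

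Next, I would derive the per-iteration recursion. Projecting $\mat{Y}_\ell = \mat{A}\mat{X}_{\ell-1}+\mat{G}_\ell$ onto $\mat{U}_k$ and $\mat{U}_{d-k}$ respectively yields
$$\mat{U}_k^\top \mat{Y}_\ell = \mat{\Sigma}_k \mat{U}_k^\top \mat{X}_{\ell-1} + \mat{U}_k^\top \mat{G}_\ell, \qquad \mat{U}_{d-k}^\top \mat{Y}_\ell = \mat{\Sigma}_{d-k} \mat{U}_{d-k}^\top \mat{X}_{\ell-1} + \mat{U}_{d-k}^\top \mat{G}_\ell,$$
where $\mat{\Sigma}_{d-k}$ is the diagonal of trailing eigenvalues. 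Weyl's inequality then gives $\sigma_{\min}(\mat{U}_k^\top \mat{Y}_\ell) \geq \sigma_k\,\sigma_{\min}(\mat{U}_k^\top \mat{X}_{\ell-1}) - \|\mat{U}_k^\top \mat{G}_\ell\|_2$ and $\|\mat{U}_{d-k}^\top \mat{Y}_\ell\|_2 \leq \sigma_{k+1}\|\mat{U}_{d-k}^\top \mat{X}_{\ell-1}\|_2 + \|\mat{G}_\ell\|_2$. Passing to $\mat{X}_\ell$ via the QR factor and setting $t_\ell := \|\mat{U}_{d-k}^\top \mat{X}_\ell\|_2/\sigma_{\min}(\mat{U}_k^\top \mat{X}_\ell)$, one obtains a recursion of the form $t_\ell \leq (\sigma_{k+1}/\sigma_k)\, t_{\ell-1} + O\big(\|\mat{G}_\ell\|_2/(\sigma_k\,\sigma_{\min}(\mat{U}_k^\top \mat{X}_{\ell-1}))\big)$. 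The projected-noise condition $5\|\mat{U}_k^\top \mat{G}_\ell\|_2 \leq (\sigma_k-\sigma_{k+1})(\sqrt{p}-\sqrt{k-1})/(\tau\sqrt{d})$ is used in parallel to show by induction that $\sigma_{\min}(\mat{U}_k^\top \mat{X}_\ell)$ never falls below, say, half of its initial value.

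Finally, iterating $L$ times and using $\log(\sigma_k/\sigma_{k+1}) \gtrsim (\sigma_k-\sigma_{k+1})/\sigma_k$, the homogeneous contribution contracts to at most $\epsilon$ once $L = \Omega\!\big((\sigma_k/(\sigma_k-\sigma_{k+1}))\log(d\tau/\epsilon)\big)$, while the accumulated noise forms a geometric sum bounded essentially by $\|\mat{G}_\ell\|_2/\big((\sigma_k-\sigma_{k+1})\sigma_{\min}(\mat{U}_k^\top \mat{X}_\ell)\big)$, which is at most $\epsilon$ by the first noise assumption $5\|\mat{G}_\ell\|_2 \leq \epsilon(\sigma_k-\sigma_{k+1})$. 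The main obstacle is precisely this joint induction on $t_\ell$ and $\sigma_{\min}(\mat{U}_k^\top \mat{X}_\ell)$: the QR renormalization couples the upper bound on $t_\ell$ with the lower bound on $\sigma_{\min}(\mat{U}_k^\top \mat{X}_\ell)$, and the two asymmetric noise conditions (an aggregate spectral bound plus a tighter bound on the projection onto $\mat{U}_k$) must be balanced so that the lower bound provided by the random initialization survives all $L$ steps without any per-iteration degradation accumulating.
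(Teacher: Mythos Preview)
The paper does not prove this theorem; it is quoted as prior work from \cite{noisy-power-method}, and only a one-line summary of the Hardt--Price argument appears in Sec.~\ref{sec:proof_sketch}. So there is no detailed proof in the paper to compare against, only the remark that Hardt--Price track $\tan\theta_k(\mat U_k,\mat X_\ell)=\|(\mat U_{d-k}^\top\mat X_\ell)(\mat U_k^\top\mat X_\ell)^\dagger\|_2$ and show it contracts geometrically.

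Your overall strategy (geometric contraction of a principal-angle quantity, Gaussian bounds at initialization, per-iteration Weyl estimates) matches that summary, but the specific scalar you track is wrong when $p>k$. You define $t_\ell=\|\mat U_{d-k}^\top\mat X_\ell\|_2/\sigma_{\min}(\mat U_k^\top\mat X_\ell)$ and assert that $t_L\leq\epsilon$ controls $\|(\mat I-\mat X_L\mat X_L^\top)\mat U_k\|_2$. This fails: take the ideal limit $\range(\mat X_\ell)=\range(\mat U_p)$ with $p>k$. Then $\sigma_{\min}(\mat U_k^\top\mat X_\ell)=1$, but $\|\mat U_{d-k}^\top\mat X_\ell\|_2=1$ as well, because $\mat X_\ell$ necessarily contains directions $\vct u_{k+1},\ldots,\vct u_p$ lying in $\range(\mat U_{d-k})$. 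Hence $t_\ell=1$ even though $\sin\theta_k(\mat U_k,\mat X_\ell)=0$. Your recursion $\|\mat U_{d-k}^\top\mat Y_\ell\|_2\leq\sigma_{k+1}\|\mat U_{d-k}^\top\mat X_{\ell-1}\|_2+\|\mat G_\ell\|_2$ is correct, but after renormalization $\|\mat U_{d-k}^\top\mat X_\ell\|_2$ does not shrink below $1$, so $t_\ell$ never contracts to $\epsilon$.

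The fix is exactly the quantity the paper quotes: work with $\tan\theta_k(\mat U_k,\mat X_\ell)=\|(\mat U_{d-k}^\top\mat X_\ell)(\mat U_k^\top\mat X_\ell)^\dagger\|_2$, which is invariant under right-multiplication by $\mat R_\ell^{-1}$ and does go to zero as $\range(\mat U_k)\subset\range(\mat X_\ell)$. The contraction step then reads, roughly, $\tan\theta_k(\mat U_k,\mat X_{\ell+1})\leq\big(\sigma_{k+1}\tan\theta_k(\mat U_k,\mat X_\ell)+\|\mat G_\ell\|_2/\cos\theta_k\big)/\big(\sigma_k-\|\mat U_k^\top\mat G_\ell\|_2/\cos\theta_k\big)$, and the two noise hypotheses are calibrated to this inequality. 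Your lower-bound maintenance on $\sigma_{\min}(\mat U_k^\top\mat X_\ell)=\cos\theta_k$ is still needed and is the right auxiliary induction; only the ``numerator'' quantity has to be replaced.
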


Theorem \ref{thm_main_before} has one major drawback: both bounds for noise tolerance and convergence rate depend crucially on the ``small" singular value gap $(\sigma_k-\sigma_{k+1})$.
This gap could be extremely small for most data matrices in practice since it concerns the difference between two \emph{consecutive} singular values.
We show in later paragraphs an example where such gap-dependency could lead to significant deterioration in terms of both error tolerance and computing.
A perhaps even more disappointing fact is that the dependency over $(\sigma_k-\sigma_{k+1})$ cannot be improved under the existing analytical framework by increasing $p$, the number of 
components maintained by $\mat X_\ell$ at each iteration.
On the other hand, one expects the noisy power method to be more robust to per-iteration noise when $p$ is much larger than $k$.
This intuition has been formally established in \citep{mgu-subspace-iteration} under the noiseless setting and was also articulated as a conjecture in \citep{noisy-power-method}:
\begin{conj}[\cite{noisy-power-method}]
	The noise tolerance terms in Theorem \ref{thm_main_before} can be improved to 
	\begin{equation}\label{eq_conj1}
	5\|\mat G_\ell\|_2 \leq \epsilon(\sigma_k-\sigma_{p+1}) \;\;\;\; \text{and} \;\;\;\; 5\|\mat U_k^\top\mat G_\ell\|_2 \leq \frac{\sqrt{p}-\sqrt{k-1}}{\tau\sqrt{d}}.
	\end{equation}
	%$$ %\textstyle
	%5\|\mat G_\ell\|_2 \leq \epsilon(\sigma_k-\sigma_{p+1}) \;\;\;\; \text{and} \;\;\;\; 5\|\mat U_k^\top\mat G_\ell\|_2 \leq \frac{\sqrt{p}-\sqrt{k-1}}{\tau\sqrt{d}}.
	%$$
	%The $(\sigma_k-\sigma_{k+1})$ terms in Theorem \ref{thm_main_before} can be improved to $(\sigma_k-\sigma_{p+1})$.
	\label{conj_gap}
\end{conj}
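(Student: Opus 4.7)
The plan is to replace Hardt--Price's principal-angle analysis with a finer proximity measure that passes through an intermediate subspace $\mat U_q$ for some $k\le q\le p$. Split the spectrum of $\mat A$ into three blocks: the target top-$k$ space $\mat U_k$, a middle block $\mat U_{[k+1:q]}$, and the tail $\mat U_{>q}$. The largest-principal-angle quantity $\|(\mat I-\mat X_\ell\mat X_\ell^\top)\mat U_k\|_2$ used in Theorem \ref{thm_main_before} cannot shrink faster than the ratio $\sigma_{k+1}/\sigma_k$ per iteration, because the middle block contributes components to $\mat X_\ell$ that shrink only at that rate. The key insight is that $\mat X_\ell$ is $p$-dimensional, so those middle components \emph{need not} be suppressed at all; we only need the column span of $\mat X_\ell$ to eventually contain a good rank-$k$ approximation to $\mat U_k$. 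I would therefore define (schematically) an ``effective tangent''
\[
\Phi_\ell \;:=\; \bigl\|\mat U_{>q}^\top\mat X_\ell\,\mat Z_\ell\bigr\|_2 \;\big/\; \sigma_k\!\bigl(\mat U_k^\top\mat X_\ell\,\mat Z_\ell\bigr),
\]
where $\mat Z_\ell\in\R^{p\times k}$ is chosen to extract the rank-$k$ ``signal'' within $\mat X_\ell$ that best aligns with $\mat U_k$. This is the ``rank-$k$ perturbation on $\mat U_q$ by $\mat X_\ell$'' promised in the introduction, and it is not a principal angle whenever $p>k$.

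The proof would proceed by (i) expressing the iteration in the eigenbasis of $\mat A$, so that $\mat Y_\ell=\mat A\mat X_{\ell-1}+\mat G_\ell$ multiplies the top, middle, and tail blocks of $\mat X_{\ell-1}$ by $\mat\Sigma_k$, $\mat\Sigma_{[k+1:q]}$, and $\mat\Sigma_{>q}$ respectively; (ii) noting that the QR step replaces $\mat Y_\ell$ by $\mat Y_\ell\mat R_\ell^{-1}$, a right multiplication that is absorbed into $\mat Z_\ell$ without affecting $\Phi_\ell$; and (iii) deriving a recursion of the shape
\[
\Phi_\ell \;\le\; \frac{\sigma_{q+1}}{\sigma_k}\,\Phi_{\ell-1} \;+\; C_1\,\frac{\|\mat G_\ell\|_2}{\sigma_k-\sigma_{q+1}} \;+\; C_2\,\frac{\|\mat U_k^\top\mat G_\ell\|_2}{(\sigma_k-\sigma_{q+1})\,s_\ell},
\]
where $s_\ell>0$ is a lower bound on $\sigma_k(\mat U_q^\top\mat X_\ell\mat Z_\ell)$. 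Iterating $L=\Omega(\sigma_k/(\sigma_k-\sigma_{q+1})\cdot\log(d\tau/\epsilon))$ times then drives $\Phi_L\le\epsilon$, and a short linear-algebra argument converts this to $\|(\mat I-\mat X_L\mat X_L^\top)\mat U_k\|_2\le O(\epsilon)$.

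The initial condition is handled through the random Gaussian initialization: $\mat X_0$ is the QR orthonormalization of a Gaussian matrix, so with probability at least $1-\tau^{-\Omega(p-q+1)}-e^{-\Omega(d)}$ we have $\sigma_k(\mat U_q^\top\mat X_0)\gtrsim(\sqrt{p}-\sqrt{q-1})/(\tau\sqrt{d})$ by standard extreme-singular-value bounds on Gaussian matrices, while $\|\mat U_{>q}^\top\mat X_0\|_2\le 1$. This gives $\Phi_0\lesssim\tau\sqrt{d}/(\sqrt{p}-\sqrt{q-1})$, which is precisely why the factor $(\sqrt{p}-\sqrt{k-1})/(\tau\sqrt{d})$ appears in the second condition of Conjecture \ref{conj_gap}. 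A separate inductive argument is needed to maintain the lower bound $s_\ell\gtrsim s_0$ across all iterations; this is where the second noise condition (on $\mat U_k^\top\mat G_\ell$) is consumed.

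The main obstacle, I expect, is controlling the middle-block noise --- the component of $\mat G_\ell$ that lands in $\mat U_{[k+1:q]}$ --- which is neither damped by $(\sigma_k-\sigma_{q+1})$ in the spectral gap sense nor killed by orthogonality to $\mat U_k$. The definition of $\Phi_\ell$ must be chosen so that this middle-block noise is absorbed by allowing the ``optimal'' rank-$k$ subspace $\mat Z_\ell$ inside $\mat X_\ell$ to drift freely within $\mat U_q$'s span from one iteration to the next, rather than appearing as an additive error. Getting the definition of $\Phi_\ell$ exactly right so that this drift is absorbed while the recursion remains geometric with rate $\sigma_{q+1}/\sigma_k$ is the technical heart of the argument, and is presumably why the resolution is only partial: for $q$ strictly smaller than $p$ the middle block is small enough to absorb cleanly, whereas taking $q=p$ (the full conjecture) would require controlling the middle-block noise down to the edge of the iterate rank.
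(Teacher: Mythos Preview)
The statement is a conjecture (attributed to Hardt--Price) that the paper does \emph{not} prove in full; what the paper establishes is the partial resolution Theorem~\ref{thm:new_main}, with gap $(\sigma_k-\sigma_{q+1})$ for $k\le q\le p$ in place of $(\sigma_k-\sigma_{p+1})$, an extra $\epsilon$ factor in the second noise condition, and $\mat U_q$ in place of $\mat U_k$ there. Your proposal is essentially a sketch of the paper's proof of that partial result: the quantity $\Phi_\ell$ you describe, once you specialize $\mat Z_\ell=(\mat U_q^\top\mat X_\ell)^\dagger\bigl(\begin{smallmatrix}\mat I_k\\ \mat 0\end{smallmatrix}\bigr)$, is exactly the paper's $h_\ell$ in Eq.~\eqref{eq_rank_perturb}; your three-step structure (Gaussian initialization bound, geometric recursion with rate $\sim\sigma_{q+1}/\sigma_k$, linear-algebra conversion to $\sin\theta_k$) matches Lemmas~\ref{thm:h_initialization}, \ref{thm:h_update_new}, and~\ref{lem:h_good} respectively; and your closing diagnosis of why $q<p$ is needed is the paper's stated limitation.

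Two small discrepancies are worth flagging. First, the quantity you must keep bounded below across iterations is $\sigma_{\min}(\mat U_q^\top\mat X_\ell)=\cos\theta_q(\mat U_q,\mat X_\ell)$, and the inductive argument maintaining that (the paper's Lemma~\ref{lem:angle_preserving} and Corollary~\ref{thm:angle_preserving}) consumes a hypothesis on $\|\mat U_q^\top\mat G_\ell\|_2$, not on $\|\mat U_k^\top\mat G_\ell\|_2$---this is precisely why Theorem~\ref{thm:new_main} departs from the conjecture by projecting onto $\mat U_q$ in its second noise condition. Second, the additive noise terms in your schematic recursion should carry denominator $\sigma_k$ rather than $(\sigma_k-\sigma_{q+1})$: as written, the fixed point of your iteration is $O\bigl(\epsilon\,\sigma_k/(\sigma_k-\sigma_{q+1})\bigr)$ rather than $O(\epsilon)$, which would force a quadratically stronger noise assumption than you intend.
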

In this section, we provide a more refined theoretical analysis of the noisy matrix power method presented in Algorithm \ref{alg_noisy_power_method}.
Our analysis significantly improves the gap dependency over existing results in Theorem \ref{thm_main_before} and partially solves Conjecture \ref{conj_gap} 
up to additional constant-level dependencies:
\begin{thm}[Improved gap-dependent bounds for noisy power method]\label{thm:new_main}
	Let $k\leq q \leq p$. Let $\mat U_q\in\mathbb R^{d\times q}$ be the 
	top-$q$ eigenvectors of a positive semi-definite matrix $\mat A$ and let $\sigma_1\geq\cdots\geq\sigma_d\geq 0$ denote
	its eigenvalues and fix $\epsilon = O\left(\frac{\sigma_q}{\sigma_k}\cdot\min\left\{\frac{1}{\log\left(\frac{\sigma_k}{\sigma_q}\right)},\frac{1}{\log\left(\tau d\right)}\right\}\right)$.
	Suppose at every iteration of the noisy power method the noise matrix $\mat G_\ell$ satisfies
	\begin{equation*}
	\|\mat G_\ell\|_2 = O\left(\epsilon(\sigma_k-\sigma_{q+1})\right) \quad \text{and} \quad
	\|\mat U_q^\top\mat G_\ell\|_2 = O\left(\epsilon\left(\sigma_k - \sigma_{q+1}\right)\frac{\sqrt{p}-\sqrt{q-1}}{\tau\sqrt{d}}\right)
	\end{equation*}
	for some constant $\tau>0$.
	Then after
	$$
	L = \Theta\left(\frac{\sigma_k}{\sigma_k-\sigma_{q+1}}\log\left(\frac{\tau d}{\epsilon }\right)\right).
	$$ iterations, with probability at least $1-\tau^{-\Omega(p+1-q)}-e^{-\Omega(d)}$, we have
	$$
	\|(\mat I-\mat X_L\mat X_L^\top)\mat U_k\|_2\leq\epsilon.
	$$
	Furthermore, for $\xi = 2$ or $F$, the low-rank approximation error $\norm{\mat{A}-\mat{X}_L\mat{X}_L^\top\mat{A}}_\xi$ is upper bounded as
	\begin{align*}
	\norm{\mat{A}-\mat{X}_L\mat{X}_L^\top\mat{A}}_\xi \le \left(1+\epsilon\right)\norm{\mat{A}-\mat{A}_k}_\xi,
	\end{align*} where $\mat A_k$ is the optimal rank-$k$ approximation of $\mat A$.	
\end{thm}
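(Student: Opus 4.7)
The plan is to track a new potential---the \emph{rank-$k$ perturbation on $\mat{U}_q$ by $\mat{X}_\ell$} introduced in equation~(\ref{eq_rank_perturb})---which measures the alignment of the column space of $\mat{X}_\ell$ with $\mat{U}_k$ \emph{through} the intermediate top-$q$ subspace $\mat{U}_q$, rather than against $\mat{U}_k$ directly. Concretely, I decompose $\mat{X}_\ell = \mat{U}_q \mat{A}_\ell + \mat{U}_{q,\perp}\mat{B}_\ell$, with $\mat{A}_\ell = \mat{U}_q^\top \mat{X}_\ell \in \mathbb{R}^{q\times p}$ and $\mat{B}_\ell = \mat{U}_{q,\perp}^\top \mat{X}_\ell$, and define the potential so that it depends only on the column space $\Image(\mat{X}_\ell)$, i.e.\ it is invariant under right-multiplication by any invertible $p\times p$ matrix. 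This invariance is crucial: the QR step of Algorithm~\ref{alg_noisy_power_method} is exactly such a right-multiplication, so the potential passes through QR untouched and we never have to fight with how principal angles transform through non-orthonormal bases.

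Next I would establish a per-iteration recurrence. In the noiseless step $\mat{Y}_\ell = \mat{A}\mat{X}_{\ell-1}$, the $\mat{U}_q$-block becomes $\mat{\Sigma}_q \mat{A}_{\ell-1}$ and the orthogonal block becomes a matrix with entries of size at most $\sigma_{q+1}$, while the smallest singular value of the top block is at least $\sigma_q$. The key observation is that within $\mat{U}_q$ the $\mat{U}_k$-coordinates are scaled only by a factor $\ge\sigma_k$, so the signal grows relative to the orthogonal contamination at rate $\sigma_{q+1}/\sigma_k$. The noise contribution splits accordingly: the $\mat{U}_q$-component is absorbed by $\|\mat{U}_q^\top \mat{G}_\ell\|_2$ (whose $\tau$-factor is calibrated to the initial lower bound on $\sigma_{\min}(\mat{A}_0)$) and the orthogonal component by $\|\mat{G}_\ell\|_2 = O(\epsilon(\sigma_k-\sigma_{q+1}))$. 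Unrolling the geometric recurrence yields that $L = \Theta(\sigma_k/(\sigma_k-\sigma_{q+1})\cdot\log(\tau d/\epsilon))$ steps drive the potential below $\epsilon$.

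For initialization, since $\mat{X}_0$ is obtained by QR on a Gaussian matrix, standard extreme-singular-value estimates give $\sigma_{\min}(\mat{U}_q^\top \mat{X}_0)\gtrsim (\sqrt{p}-\sqrt{q-1})/(\tau\sqrt{d})$ with probability at least $1-\tau^{-\Omega(p+1-q)}-e^{-\Omega(d)}$, matching the probability statement in the theorem. Converting the final potential bound into $\|(\mat{I}-\mat{X}_L\mat{X}_L^\top)\mat{U}_k\|_2\le\epsilon$ is then immediate, because any unit vector in $\mat{U}_k$ can be reconstructed by a rank-$k$ combination of columns of $\mat{X}_L$ up to the potential. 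The $(1+\epsilon)$-approximation bound $\|\mat{A}-\mat{X}_L\mat{X}_L^\top\mat{A}\|_\xi \le (1+\epsilon)\|\mat{A}-\mat{A}_k\|_\xi$ follows by plugging the subspace guarantee into the Musco--Musco framework cited in the introduction, which upgrades a per-direction angle bound into a Frobenius/spectral low-rank-approximation guarantee.

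The main obstacle is designing the potential so that all three operations---multiplication by $\mat{A}$, addition of $\mat{G}_\ell$, and QR factorisation---act controllably on it, which the classical tangent of the largest principal angle simply fails to do when $p>q\ge k$. The old quantity $\|(\mat{I}-\mat{X}_\ell\mat{X}_\ell^\top)\mat{U}_k\|_2 / \sigma_{\min}(\mat{U}_k^\top \mat{X}_\ell)$ can only be contracted by $\sigma_{k+1}/\sigma_k$ per $\mat{A}$-multiplication, no matter how large $p$ is, which is precisely the lossy step in Hardt--Price. My new potential sidesteps this by normalising against $\sigma_{\min}(\mat{U}_q^\top \mat{X}_\ell)$, and the hardest part of the argument is the induction that keeps this larger denominator from collapsing along iterations---this is exactly where the per-iteration noise bound on the $\mat{U}_q$-projection (rather than only on $\mat{U}_k$) enters, and where the hypothesis $\epsilon = O\bigl((\sigma_q/\sigma_k)/\log(\sigma_k/\sigma_q)\bigr)$ becomes necessary to control the logarithmic accumulation of the noise over $L$ steps.
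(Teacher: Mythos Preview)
Your proposal is correct and follows essentially the same route as the paper: you track the rank-$k$ perturbation potential $h_\ell$ of equation~(\ref{eq_rank_perturb}), establish a geometric contraction (the paper's Lemma~\ref{thm:h_update_new}), bound the initialization via Gaussian singular-value estimates (Lemma~\ref{thm:h_initialization}), convert the final $h_L$ bound to the subspace guarantee (Lemma~\ref{lem:h_good}), and---crucially---maintain a uniform lower bound on $\cos\theta_q(\mat{U}_q,\mat{X}_\ell)$ along the trajectory (the paper's Corollary~\ref{thm:angle_preserving}), which is exactly where the logarithmic restriction on $\epsilon$ enters. One small correction: the $(1+\epsilon)$ low-rank approximation bound is obtained in the paper via Theorem~9.1 of Halko--Martinsson--Tropp~\citep{halko2011finding}, not the Musco--Musco framework; the latter is used only for the gap-independent result in Section~2.2.
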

\paragraph{Discussion} 
Compared to existing bounds in Theorem \ref{thm_main_before}, the noise tolerance as well as convergence rate of noisy power method is significantly improved in 
Theorem \ref{thm:new_main}, where the main gap-dependent term $(\sigma_k-\sigma_{k+1})$ is improved to $(\sigma_k-\sigma_{q+1})$
for some intermediate singular value $\sigma_q$ with $k\leq q\leq p$.
Since the singular values are non-increasing, setting a large value of $q$ in Theorem \ref{thm:new_main} would improve the bounds.
However, $q$ cannot be too close to $p$ due to the presence of a $(\sqrt{p}-\sqrt{q-1})$ term.
In addition, the convergence rate (i.e., bound on $L$) specified in Theorem \ref{thm:new_main} reproduces recent results in \citep{mgu-subspace-iteration}
for noisy power method under noiseless settings ($\mat G_\ell=\mat 0$).
There are three main differences between our theorems and the conjecture raised by~\citep{noisy-power-method}.
First, the strength of projected noise $\mathbf{U}_q^\top\mathbf{G}$ also depends on $\epsilon$.
However, in many applications, this assumption is implied by the $\|\mat G_\ell\|_2 = O\left(\epsilon(\sigma_k-\sigma_{q+1})\right)$ assumption.
Second, we have $\left(\sqrt{p}-\sqrt{q-1}\right)$ instead of $\left(\sqrt{p}-\sqrt{k-1}\right)$ dependence. 
When $q=\Theta\left(k\right)$ and $q \ge 2p$, then this term is the at the same order as in the conjecture. Lastly, we notice that the second term of \eqref{eq_conj1} is totally independent of $\sigma_k, \sigma_{p+1}$ and their gap, which seems to be either a typo or unattainable result. Nonetheless, Theorem \ref{thm:new_main} has shown significant improvement on Theorem \ref{thm_main_before}.
%Such improvement comes at cost as $\|\mat G_\ell\|_2$ now depends on $(\sigma_q-\sigma_{q+1})$, which could be even smaller than $(\sigma_k-\sigma_{k+1})$.
%However, this latter term does \emph{not} depend on $\epsilon$ and is asymptotically negligible when the target error parameter $\epsilon$ tends to zero.
%As we show in later sections, in many typical noisy power method applications such as differentially private PCA or distributed PCA,
%Theorem \ref{thm:new_main} provides strict improvement over existing results.

To further shed light on the nature of our obtained results, we consider the following example 
to get a more interpretable comparison between Theorem~\ref{thm:new_main} and \ref{thm_main_before}:
\paragraph{Example: power-law decaying spectrum} 
We consider the example where the spectrum of the input data matrix $\mat A$ has \emph{power-law} decay;
that is, $\sigma_k \asymp k^{-\alpha}$ for some parameter $\alpha > 1$.
Many data matrices that arise in practical data applications have such spectral decay property \citep{dp-factorization}.
The small eigengap $(\sigma_k-\sigma_{k+1})$ is on the order of $k^{-\alpha-1}$.
As a result, the number of iterations $L$ should be at least $\Omega(k\log(d/\epsilon))$, 
which implies a total running time of $O(dk^3\log(d/\epsilon))$.
On the other hand, by setting $q=ck$ for some constant $c>1$ the ``large" spectral gap $(\sigma_k-\sigma_{q+1})$ is on the order of $k^{-\alpha}$.
Consequently, the number of iterations $L$ under the new theoretical analysis only needs to scale as $\Omega(\log(d/\epsilon))$
and the total number of flops is $O(dk^2\log(d/\epsilon))$.
This is an $O(k)$ improvement over existing bounds for noisy power method.

Apart from convergence rates, our new analysis also improves the noise tolerance (i.e., bounds on $\|\mat G_\ell\|_2$) in an explicit way when the data matrix $\mat A$
is assumed to have power-law spectral decay.
More specifically, old results in \citep{noisy-power-method} requires the magnitude of the noise matrix $\|\mat G_\ell\|_2$ to be upper bounded by $O(\epsilon k^{-\alpha-1})$,
while under the new analysis (Theorem \ref{thm:new_main}) a bound of the form $\|\mat G_\ell\|_2=O(\epsilon k^{-\alpha})$ suffices,
provided that $q=ck$ for some constant $c>1$ and $\epsilon$ is small.
This is another $O(k)$ improvement in terms of bounds on the maximum tolerable amount of per-iteration noise.

\subsection{Proof of Theorem \ref{thm:new_main}}\label{sec:proof_sketch}
Before presenting our proof of the main theorem (Theorem~\ref{thm:new_main}), we first review the arguments in \citep{noisy-power-method}
and explain why straightforward adaptations of their analysis cannot lead to improved gap dependency.
\citep{noisy-power-method} considered the tangent of the $k$th principle angle between $\mat U_k$ and $\mat X_\ell$:
\begin{equation}
\tan\theta_k(\mat U_k,\mat X_\ell) 
%= \min_{\mat\Pi\in\mathcal P_k}\max_{\substack{\|\vct w\|_2=1\\\mat\Pi\vct w=\vct w}}\frac{\|\mat X\vct w-\mat U_k^\top\mat U_k\mat X\vct w\|_2}{\|\mat U_k\mat X\vct w\|_2}
= \left\|(\mat U_{d-k}^\top\mat X_\ell)(\mat U_k^\top\mat X_\ell)^{\dagger}\right\|_2,
\label{eq_tank}
\end{equation}
where $\mat U_{d-k}\in\mathbb R^{d\times (d-k)}$ is the orthogonal complement of the top-$k$ eigen-space $\mat U_k\in\mathbb R^{d\times k}$ of $\mat A$.
It can then be shown that when both $\|\mat G_\ell\|_2$ and $\|\mat U_k^\top\mat G_\ell\|_2$ are properly bounded,
the angle geometrically shrinks after each power iteration; that is, $\tan\theta_k(\mat U_k,\mat X_{\ell+1}) \leq \rho\tan\theta_k(\mat U_k,\mat X_{\ell})$ for some fixed $\rho\in(0,1)$.
However, as pointed outed by~\citep{noisy-power-method}, this geometric shrinkage might not hold with larger level of noise.
%However, it is not possible to remove the dependency over $(\sigma_k-\sigma_{k+1})$ using arguments based on $\tan\theta_k(\mat U_k,\mat X_\ell)$
%principal angle decreasing at each
%because as point outed by~\cite{noisy-power-method}, 
%because when $\sigma_{k+1}$ is very close to $\sigma_k$, the top-$k$ eigen-space $\mat U_k$ becomes ill-posed
%and even the slightest amount of noise/perturbation would cause a huge difference in the corresponding tangent angle.

To overcome such difficulties, in our analysis we consider a different characterization between $\mat U_k$ (or $\mat U_q$) and $\mat X_\ell$ at each iteration.
Let $\mat U_k\in\mathbb R^{d\times k}$, $\mat U_q\in\mathbb R^{d\times q}$ be the top $k$ and top $q$ eigenvectors of $\mat X$
and let $\mat U_{d-q}\in\mathbb R^{d\times(d-p)}$ be the remaining eigenvectors.
For an orthonormal matrix $\mat X_\ell\in\mathbb R^{d\times p}$, define the \emph{rank-$k$ perturbation on $\mat U_q$ by $\mat X_\ell$} as
\begin{equation}
h_\ell := \left\|(\mat U_{d-q}^\top\mat X_\ell)(\mat U_q^\top\mat X_\ell)^{\dagger}\left(\begin{array}{c} \mat I_{k\times k}\\ \mat 0\end{array}\right)\right\|_2.
\label{eq_rank_perturb}
\end{equation}
Intuitively, $h_\ell$ extracts a certain rank-$k$ component from $\tan\theta_p(\mat U_q,\mat X_\ell)=\|(\mat U_{d-q}^\top\mat X_\ell)(\mat U_q^\top\mat X_\ell)^{\dagger}\|_2$.
Consider the case when $p=q$, then ideally, $\mat{X}_\ell = \mathbf{U}_q$ and $(\mat U_q^\top\mat X_\ell)^{\dagger}$ is the identity matrix.
Here we relieve this goal that we only test whether the first $k$ columns of $(\mat U_q^\top\mat X_\ell)^{\dagger}$ is close to $\left(\begin{array}{c} \mat I_{k\times k}\\ \mat 0\end{array}\right)$.
It is also different from $\tan\theta_k(\mat U_k,\mat X_\ell)$ in Eq.~(\ref{eq_tank}), as the definition of $h_\ell$ involves both $\mat U_k$ and $\mat U_q$.
We can then show the following shrinkage results for $h_\ell$ across iterations:
\begin{lem}\label{thm:h_update_new}
	If the noise matrix at each iteration satisfies
	\begin{align*}
	\norm{\mat{G}_\ell}_2 &\le c \epsilon\left(\sigma_k - \sigma_{q+1}\right), \quad
	\norm{\mat{U}_q^\top\mat{G}_\ell}_2 \le c \cdot \min\{\epsilon\left(\sigma_k - \sigma_{q+1}\right)\cos\theta_q(\mat U_q,\mat X_\ell), \sigma_q\cos\theta_q(\mat U_q,\mat X_\ell)\},
	\end{align*} for some sufficiently small absolute constant $0< c < 1$, define
	$$\rho := \frac{\sigma_{q+1}+C\epsilon\left(\sigma_k - \sigma_{q+1}\right)}{\sigma_k}.$$
	we then have \begin{align*}
	h_{\ell+1} - \frac{C\epsilon\left(\sigma_k - \sigma_{q+1}\right)}{\left(1-\rho\right)\sigma_k} \le\rho\left(h_{\ell} - \frac{C\epsilon\left(\sigma_k - \sigma_{q+1}\right)}{\left(1-\rho\right)\sigma_k} \right),
	\end{align*} for some sufficiently small global constant $0 < C < 1$.
\end{lem}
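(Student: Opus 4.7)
The plan is to derive a one-step recursion for $h_\ell$ by projecting the noisy iteration onto $\mat{U}_q$ and $\mat{U}_{d-q}$, and to exploit a commutation identity that, when the selector $\mat{E}_k := \bigl(\begin{smallmatrix}\mat{I}_k\\ \mat{0}\end{smallmatrix}\bigr)\in\mathbb{R}^{q\times k}$ is applied on the right, replaces $\mat{\Sigma}_q^{-1}$ by $\mat{\Sigma}_k^{-1}$ and so produces a contraction factor $\sigma_{q+1}/\sigma_k$ instead of the naive $\sigma_{q+1}/\sigma_q$. Write $\mat{A}_\ell := \mat{U}_q^\top\mat{X}_\ell$, $\mat{B}_\ell := \mat{U}_{d-q}^\top\mat{X}_\ell$, $\mat{N}^{(1)} := \mat{U}_q^\top\mat{G}_{\ell+1}$ and $\mat{N}^{(2)} := \mat{U}_{d-q}^\top\mat{G}_{\ell+1}$. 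Projecting the iteration $\mat{X}_{\ell+1}\mat{R}_{\ell+1}=\mat{A}\mat{X}_\ell+\mat{G}_{\ell+1}$ onto these two blocks gives $\mat{A}_{\ell+1}\mat{R}_{\ell+1}=\mat{\Sigma}_q\mat{A}_\ell+\mat{N}^{(1)}$ and $\mat{B}_{\ell+1}\mat{R}_{\ell+1}=\mat{\Sigma}_{d-q}\mat{B}_\ell+\mat{N}^{(2)}$, so $\mat{R}_{\ell+1}$ cancels in $\mat{B}_{\ell+1}\mat{A}_{\ell+1}^\dagger$, which is exactly the object entering $h_{\ell+1}$.

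The central algebraic step is next. The noise hypothesis keeps $\cos\theta_q(\mat{U}_q,\mat{X}_\ell)>0$, so $\mat{A}_\ell$ has full row rank and $\mat{A}_\ell\mat{A}_\ell^\dagger=\mat{I}_q$. Inserting this identity and using $\mat{\Sigma}_q\mat{A}_\ell=(\mat{\Sigma}_q\mat{A}_\ell+\mat{N}^{(1)})-\mat{N}^{(1)}$ produces
\begin{equation*}
\mat{B}_{\ell+1}\mat{A}_{\ell+1}^\dagger \;=\; \mat{\Sigma}_{d-q}\mat{B}_\ell\mat{A}_\ell^\dagger\mat{\Sigma}_q^{-1} \;+\; \mat{\Delta}_{\ell+1}\,\bigl(\mat{\Sigma}_q\mat{A}_\ell+\mat{N}^{(1)}\bigr)^\dagger,
\end{equation*}
with $\mat{\Delta}_{\ell+1}=\mat{N}^{(2)}-\mat{\Sigma}_{d-q}\mat{B}_\ell\mat{A}_\ell^\dagger\mat{\Sigma}_q^{-1}\mat{N}^{(1)}+\mat{\Sigma}_{d-q}\mat{B}_\ell(\mat{I}_p-\mat{A}_\ell^\dagger\mat{A}_\ell)$, the last piece accounting for the component of $\mat{X}_\ell$ outside row$(\mat{A}_\ell)$. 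Right-multiplying by $\mat{E}_k$ and invoking the identity $\mat{\Sigma}_q^{-1}\mat{E}_k=\mat{E}_k\mat{\Sigma}_k^{-1}$ (a consequence of $\mat{\Sigma}_q$ being diagonal with top-$k$ block $\mat{\Sigma}_k$) rewrites the dominant term as $\mat{\Sigma}_{d-q}(\mat{B}_\ell\mat{A}_\ell^\dagger\mat{E}_k)\mat{\Sigma}_k^{-1}$, whose spectral norm is at most $(\sigma_{q+1}/\sigma_k)\,h_\ell$. This is precisely where the improved gap $\sigma_k-\sigma_{q+1}$ enters.

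To close the recurrence I would bound the error $\|\mat{\Delta}_{\ell+1}(\mat{\Sigma}_q\mat{A}_\ell+\mat{N}^{(1)})^\dagger\mat{E}_k\|_2$ factor by factor: $\|\mat{N}^{(2)}\|_2\leq\|\mat{G}_{\ell+1}\|_2$, $\|\mat{\Sigma}_{d-q}\mat{B}_\ell\mat{A}_\ell^\dagger\mat{\Sigma}_q^{-1}\|_2\leq(\sigma_{q+1}/\sigma_q)\tan\theta_q(\mat{U}_q,\mat{X}_\ell)$, and the Weyl-type estimate $\|(\mat{\Sigma}_q\mat{A}_\ell+\mat{N}^{(1)})^\dagger\|_2\leq[(1-c)\sigma_q\cos\theta_q]^{-1}$. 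The stray piece $(\mat{I}_p-\mat{A}_\ell^\dagger\mat{A}_\ell)(\mat{\Sigma}_q\mat{A}_\ell)^\dagger=0$, so its noisy version is second-order in $\mat{N}^{(1)}$ by standard pseudo-inverse perturbation. Under the noise hypotheses the $\cos\theta_q$ factor on $\|\mat{U}_q^\top\mat{G}_{\ell+1}\|_2$ cancels the $1/\cos\theta_q$ in the pseudo-inverse bound, collapsing everything to $C\epsilon(\sigma_k-\sigma_{q+1})/\sigma_k$. The resulting affine recurrence $h_{\ell+1}\leq\rho h_\ell+C\epsilon(\sigma_k-\sigma_{q+1})/\sigma_k$ yields the stated contraction after subtracting its fixed point $C\epsilon(\sigma_k-\sigma_{q+1})/[(1-\rho)\sigma_k]$ from both sides. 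The main obstacle is book-keeping: the clean recurrence requires $\cos\theta_q$ to stay of constant order along the trajectory, and $h_\ell$ alone does not control it, so I would carry a companion inductive argument on $\tan\theta_q(\mat{U}_q,\mat{X}_\ell)$ in parallel, exploiting that the same choice of $q$ forces it to contract at the same rate $\sigma_{q+1}/\sigma_k$ and that $\mat{\Sigma}_q\mat{A}_\ell+\mat{N}^{(1)}$ retains full row rank throughout.
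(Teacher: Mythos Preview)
Your decomposition and the commutation identity $\mat{\Sigma}_q^{-1}\mat{E}_k=\mat{E}_k\mat{\Sigma}_k^{-1}$ correctly produce the leading contraction $(\sigma_{q+1}/\sigma_k)h_\ell$; that part matches the paper. The gap is in how you control the additive error $\|\mat{\Delta}_{\ell+1}(\mat{\Sigma}_q\mat{A}_\ell+\mat{N}^{(1)})^\dagger\mat{E}_k\|_2$. You bound the pseudo-inverse factor by $\|(\mat{\Sigma}_q\mat{A}_\ell+\mat{N}^{(1)})^\dagger\|_2\le [(1-c)\sigma_q\cos\theta_q]^{-1}$, discarding the selector $\mat{E}_k$, and then argue that the $\cos\theta_q$ appearing in $\|\mat{U}_q^\top\mat{G}_{\ell}\|_2$ cancels the $1/\cos\theta_q$. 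But the dominant piece of $\mat{\Delta}_{\ell+1}$ is $\mat{N}^{(2)}=\mat{U}_{d-q}^\top\mat{G}_{\ell}$, whose hypothesis $\|\mat{G}_\ell\|_2\le c\epsilon(\sigma_k-\sigma_{q+1})$ carries \emph{no} $\cos\theta_q$ factor. So this term contributes an uncancelled $O\bigl(\epsilon(\sigma_k-\sigma_{q+1})/(\sigma_q\cos\theta_q)\bigr)$. Since the companion argument you propose for $\tan\theta_q(\mat{U}_q,\mat{X}_\ell)$ only keeps $\cos\theta_q\gtrsim(\sqrt{p}-\sqrt{q-1})/(\tau\sqrt{d})$ --- it does \emph{not} contract at rate $\sigma_{q+1}/\sigma_k$ as you claim, because the relevant ratio for the full rank-$q$ tangent is $\sigma_{q+1}/\sigma_q$, possibly $1$ --- your additive error is of order $\epsilon(\sigma_k-\sigma_{q+1})\tau\sqrt{d}/\sigma_q$, not $\epsilon(\sigma_k-\sigma_{q+1})/\sigma_k$, and the recurrence fails.

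The paper closes this gap with an auxiliary inequality you are missing: $\bigl\|(\mat{U}_q^\top\mat{X}_\ell)^\dagger\mat{E}_k\bigr\|_2\le 1+h_\ell$. This is proved by writing $\mat{U}_q^\top\mat{X}_\ell=\widetilde{\mat U}\widetilde{\mat\Sigma}\widetilde{\mat V}^\top$ and observing that $\|\widetilde{\mat\Sigma}^{-1}\widetilde{\mat U}^\top\mat{E}_k\|_2$ is bounded by $\|\mat{X}_\ell(\mat{U}_q^\top\mat{X}_\ell)^\dagger\mat{E}_k\|_2\le 1+h_\ell$ after splitting $\mat{X}_\ell$ along $\mat{U}_q$ and $\mat{U}_{d-q}$. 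With this in hand the paper expands $(\mat{A}_\ell+\mat{\Sigma}_q^{-1}\mat{N}^{(1)})^\dagger$ via Woodbury and bounds $\|(\mat{A}_\ell+\mat{\Sigma}_q^{-1}\mat{N}^{(1)})^\dagger\mat{E}_k\|_2$ by $(1+h_\ell)(1+O(\epsilon(\sigma_k-\sigma_{q+1})/\sigma_q))$ rather than $1/\cos\theta_q$. That is precisely what makes the additive error scale with $(1+h_\ell)/\sigma_k$ and fold into the affine recurrence. In short, the selector $\mat{E}_k$ must be kept attached to the pseudo-inverse and exploited a second time; using only the Weyl-type bound on the full pseudo-inverse loses a $\sqrt{d}$ factor.
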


The following lemma bounds the rank-$k$ perturbation on $\mat U_q$ by $\mat X_0$ when it is initialized via QR decomposition on a random Gaussian matrix $\mat G_0$,
as described in Algorithm \ref{alg_noisy_power_method}.
\begin{lem}\label{thm:h_initialization} With all but $\tau^{-\Omega\left(p+1-q\right)} + e^{-\Omega\left(d\right)}$ probability, we have that
	$$
	h_0 \leq \tan\theta_q(\mat U_q, \mat X_0) \leq \frac{\tau\sqrt{d}}{\sqrt{p}-\sqrt{q-1}}.
	$$
\end{lem}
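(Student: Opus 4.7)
The plan is to handle the two inequalities in the displayed chain separately: the first is deterministic and the second is a Gaussian random-subspace argument that parallels Lemma~2.4 of \citep{noisy-power-method} with the parameter renaming $k\mapsto q$. The first inequality, $h_0\le\tan\theta_q(\mat U_q,\mat X_0)$, is immediate from the definition~\eqref{eq_rank_perturb}: submultiplicativity of the spectral norm together with $\left\|\left(\begin{array}{c}\mat I_{k\times k}\\\mat 0\end{array}\right)\right\|_2=1$ gives
\begin{align*}
h_0=\left\|(\mat U_{d-q}^\top\mat X_0)(\mat U_q^\top\mat X_0)^{\dagger}\left(\begin{array}{c}\mat I_{k\times k}\\\mat 0\end{array}\right)\right\|_2\le\left\|(\mat U_{d-q}^\top\mat X_0)(\mat U_q^\top\mat X_0)^{\dagger}\right\|_2=\tan\theta_q(\mat U_q,\mat X_0).
\end{align*}
This step uses no randomness and no structure of $\mat A$.

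For the second inequality I would first reduce the problem to a question about a uniformly random $p$-dimensional subspace. Since QR preserves column spans, $\text{span}(\mat X_0)=\text{span}(\mat G_0)$, so $\tan\theta_q(\mat U_q,\mat X_0)=\tan\theta_q(\mat U_q,\mat G_0)$. Setting $\mat W:=\mat U^\top\mat G_0$, orthogonal invariance of Gaussians implies $\mat W\in\mathbb R^{d\times p}$ has i.i.d.\ standard Gaussian entries, and in this rotated frame $\mat U_q$ is spanned by the first $q$ coordinates. Partition $\mat W$ rowwise as $\mat W=\left(\begin{array}{c}\mat W_q\\\mat W_{d-q}\end{array}\right)$ with independent Gaussian blocks $\mat W_q\in\mathbb R^{q\times p}$ and $\mat W_{d-q}\in\mathbb R^{(d-q)\times p}$. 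A short variational computation (optimize away the component in $\Kernel(\mat W_q)$, which is nontrivial exactly because $p\geq q$) yields
\begin{align*}
\tan\theta_q(\mat U_q,\mat X_0)=\|\mat W_{d-q}\mat W_q^{\dagger}\|_2\le\frac{\|\mat W_{d-q}\|_2}{\sigma_{\min}(\mat W_q)}.
\end{align*}

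It then remains to control the numerator and denominator by standard Gaussian matrix concentration. A Davidson--Szarek bound gives $\|\mat W_{d-q}\|_2\le\sqrt{d-q}+\sqrt{p}+t$ with probability at least $1-2e^{-t^2/2}$; taking $t=\Theta(\sqrt d)$ delivers $\|\mat W_{d-q}\|_2=O(\sqrt d)$ with failure probability $e^{-\Omega(d)}$. For the denominator I would invoke the classical tail bound for the smallest singular value of a short-fat $q\times p$ Gaussian matrix, which gives $\sigma_{\min}(\mat W_q)\ge(\sqrt p-\sqrt{q-1})/\tau$ with failure probability at most $\tau^{-\Omega(p+1-q)}$. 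A union bound then produces the desired estimate. The main technical ingredient is this last singular-value concentration, whose polynomial-in-$\tau$ tail is delicate to prove from scratch; fortunately it is exactly the bound already used in Lemma~2.4 of \citep{noisy-power-method}, so the entire second inequality can be completed by citing that lemma after verifying that our hypothesis $p\ge q$ plays precisely the role that $p\ge k$ did there, and no genuinely new probabilistic ideas are required.
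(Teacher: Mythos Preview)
Your proposal is correct and follows essentially the same approach as the paper: the deterministic inequality $h_0\le\tan\theta_q(\mat U_q,\mat X_0)$ via the submatrix/submultiplicativity observation, followed by the random-subspace bound inherited from \citep{noisy-power-method} with $k$ replaced by $q$. The only cosmetic differences are that the paper cites Lemma~2.5 rather than Lemma~2.4 of \citep{noisy-power-method}, and it invokes that lemma directly without unpacking the Davidson--Szarek and smallest-singular-value ingredients you spell out.
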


Finally, Lemma \ref{lem:h_good} shows that small $h_L$ values imply small angles between $\mat X_L$ and $\mat U_k$.
\begin{lem}
	For any $\epsilon\in(0,1)$,
	if $h_L\leq\epsilon/4$ then $\tan\theta_k(\mat U_k,\mat X_L)\leq \epsilon$.
	\label{lem:h_good}
\end{lem}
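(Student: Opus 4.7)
The plan is to bound $\sin\theta_k(\mat U_k,\mat X_L)=\|(\mat I-\mat X_L\mat X_L^\top)\mat U_k\|_2$ directly by $h_L$, and then convert sine to tangent, which is immediate once $h_L$ is known to be small. The key step will be to use the defining expression in Eq.~(\ref{eq_rank_perturb}) to exhibit an explicit $k$-dimensional subspace of $\range(\mat X_L)$ that is $h_L$-close to $\range(\mat U_k)$ in spectral norm.

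Concretely, I would introduce the $p\times k$ witness matrix
\[
\mat Z \;:=\; (\mat U_q^\top \mat X_L)^{\dagger}\begin{pmatrix}\mat I_{k\times k}\\ \mat 0\end{pmatrix}.
\]
Assuming $\mat U_q^\top\mat X_L$ has full row rank (which is ensured along the trajectory by the initialization Lemma~\ref{thm:h_initialization} and the shrinkage Lemma~\ref{thm:h_update_new}, both of which keep $\tan\theta_q(\mat U_q,\mat X_L)$ finite and hence $\cos\theta_q(\mat U_q,\mat X_L)>0$), the Moore--Penrose identity yields $(\mat U_q^\top\mat X_L)\,\mat Z=\begin{pmatrix}\mat I_{k\times k}\\ \mat 0\end{pmatrix}$. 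Reading this equality blockwise gives $\mat U_k^\top(\mat X_L\mat Z)=\mat I_{k\times k}$ and $\mat U_{q-k}^\top(\mat X_L\mat Z)=\mat 0$, while $\|\mat U_{d-q}^\top(\mat X_L\mat Z)\|_2=h_L$ by the very definition of $h_L$. Expanding in the full eigenbasis of $\mat A$, this is equivalent to the clean decomposition $\mat X_L\mat Z=\mat U_k+\mat U_{d-q}\mat R$ for some matrix $\mat R$ of spectral norm exactly $h_L$.

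With this witness in hand, the sine bound is immediate: for any unit vector $\vct a\in\mathbb R^k$ the vector $\mat X_L\mat Z\vct a$ lies in $\range(\mat X_L)$ and satisfies $\|\mat U_k\vct a-\mat X_L\mat Z\vct a\|_2=\|\mat U_{d-q}\mat R\vct a\|_2\le h_L$. Since the orthogonal projection onto $\range(\mat X_L)$ achieves the minimum distance, $\|(\mat I-\mat X_L\mat X_L^\top)\mat U_k\vct a\|_2\le h_L$; taking the supremum over $\|\vct a\|_2=1$ gives $\sin\theta_k(\mat U_k,\mat X_L)\le h_L\le\epsilon/4$. Using $h_L\le 1/4$ then yields $\cos\theta_k\ge\sqrt{1-h_L^2}\ge\sqrt{15}/4$, so
\[
\tan\theta_k(\mat U_k,\mat X_L)\;\le\;\frac{h_L}{\sqrt{1-h_L^2}}\;\le\;\frac{\epsilon/4}{\sqrt{15}/4}\;<\;\epsilon.
\]

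The one delicate point I expect to have to argue carefully is the full-row-rank hypothesis on $\mat U_q^\top\mat X_L$ that allows the Moore--Penrose identity to produce an \emph{exact} equality (rather than an equality up to a projection); once that is in place, via the companion lemmas controlling $\tan\theta_q(\mat U_q,\mat X_L)$, the rest reduces to algebraic block-reading in the eigenbasis of $\mat A$ plus elementary trigonometry. No auxiliary perturbation inequalities, and no facts about $\mat A$ beyond its eigendecomposition, are needed.
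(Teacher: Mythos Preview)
Your proof is correct and rests on the same key witness $\mat Z=(\mat U_q^\top\mat X_L)^{\dagger}\begin{pmatrix}\mat I_{k\times k}\\ \mat 0\end{pmatrix}$ as the paper, but your route is genuinely more direct. The paper embeds $\mat Z$ as the first block-column of a larger $p\times p$ change-of-basis matrix $\mat X=\big[(\mat U_q^\top\mat X_L)^{\dagger}\;\;\widehat{\mat X}\big]$, performs a QR factorization $\mat X_L\mat X=\widehat{\mat Q}\widehat{\mat R}$, and then bounds $\|(\mat I-\widehat{\mat Q}_1\widehat{\mat Q}_1^\top)\mat U_k\|_2$ via an explicit block expression for the projector and Woodbury's identity, obtaining two terms of size roughly $\epsilon/2$ each. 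You instead read off from the block structure that $\mat X_L\mat Z=\mat U_k+\mat U_{d-q}\mat R$ with $\|\mat R\|_2=h_L$ and use the elementary fact that orthogonal projection onto $\range(\mat X_L)$ minimizes distance, giving $\sin\theta_k(\mat U_k,\mat X_L)\le h_L$ in one line. Your argument is shorter, avoids the auxiliary QR and Woodbury manipulations entirely, and yields a tighter sine bound (by about a factor of $4$). The one hypothesis you correctly flag---full row rank of $\mat U_q^\top\mat X_L$---is implicitly used in the paper's proof as well and is supplied in context by the companion lemmas controlling $\tan\theta_q(\mat U_q,\mat X_L)$.
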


The proofs of Lemma~\ref{thm:h_update_new}, \ref{thm:h_initialization} and \ref{lem:h_good} involve some fairly technical matrix computations and is thus deferred to Appendix~\ref{appsec:proof}. 
We are now ready to prove Theorem \ref{thm:new_main}:

\begin{proof}[\textbf{Theorem} \ref{thm:new_main}]
	First, the chosen $\epsilon$ ensures Corollary \ref{thm:angle_preserving} in Appendix~\ref{appsec:proof} holds, therefore, the noise conditions in Theorem \ref{thm:new_main} imply those noise conditions in Lemma \ref{thm:h_update_new} with high probability.
	As a result, the following holds for all $\ell\in[L]$:
	\begin{equation}
	h_{\ell+1} - \frac{C\epsilon\left(\sigma_k - \sigma_{q+1}\right)}{\left(1-\rho\right)\sigma_k} \le\rho\left(h_{\ell} - \frac{C\epsilon\left(\sigma_k - \sigma_{q+1}\right)}{\left(1-\rho\right)\sigma_k} \right),
	\label{eq_proof_hl}
	\end{equation}
	where $\rho=\frac{\sigma_{q+1}+C\epsilon(\sigma_k-\sigma_{q+1})}{\sigma_k}$ and $C$ is an absolute constant.
	Define $g_\ell := h_{\ell} -  \frac{C\epsilon\left(\sigma_k - \sigma_{q+1}\right)}{\left(1-\rho\right)\sigma_k}$.
	Eq.~(\ref{eq_proof_hl}) is then equivalent to $g_{\ell+1}\leq\rho g_\ell$.
	In addition, Lemma \ref{thm:h_initialization} yields
	$$
	g_0 \leq h_0 \leq \frac{\tau\sqrt{d}}{\sqrt{p}-\sqrt{q-1}}
	$$
	with high probability. Consequently, with $L=O(\log(g_0/\epsilon)/\log(1/\rho))$ iterations we have $g_L\leq\epsilon/2$.
	$h_L$ can then be bounded by 
	$$
	h_L = g_L+\frac{C\epsilon(\sigma_k-\sigma_{q+1})}{(1-\rho)\sigma_k} = \frac{\epsilon}{2} + \frac{C\epsilon\left(\sigma_k - \sigma_{q+1}\right)}{\sigma_k} \cdot\frac{\sigma_k}{\sigma_k - \sigma_{q+1}-C\epsilon\left(\sigma_k - \sigma_{q+1}\right)} \leq \epsilon.
	$$
	Subsequently, invoking Lemma \ref{lem:h_good} we get $\|(\mat I-\mat X_L\mat X_L^\top)\mat U_k\|_2 =\sin\theta_k(\mat U_k,\mat X_\ell)\leq \tan\theta_k(\mat U_k,\mat X_\ell) \le 8\epsilon=O(\epsilon)$, where we adopt the definition of $\sin\theta_k(\mat U_k,\mat X_\ell)$ from~\citep{noisy-power-method}.
	By Theorem 9.1 of~\citep{halko2011finding}, we can also obtain the desired bound on the residue norm $\|\mat A-\mat X_L\mat X_L^\top\mat A\|_\xi \leq (1+O(\epsilon))\|\mat A-\mat A_k\|_\xi$.
	The constant in $O(\epsilon)$ can be absorbed into the bounds of $\mat G_\ell$ and $L$.
	
	We next simplify the bound $L=O(\log(g_0/\epsilon)/\log(1/\rho))$. We first upper bound the shrinkage parameter $\rho$ as follows:
	\begin{align*}
	\rho  =& \frac{\sigma_{q+1}}{\sigma_k}+\frac{C\left(\sigma_k - \sigma_{q+1}\right)\epsilon}{\sigma_k} 
	\le  \frac{\sigma_{q+1} + \left(\sigma_k - \sigma_{q+1}\right)\epsilon/4}{\sigma_{q+1} + \left(\sigma_k - \sigma_{q+1}\right)/2} \\
	=& \frac{\sigma_{q+1} + \left(\sigma_{k}-\sigma_{q+1}\right)/4}{\sigma_{q+1} + \left(\sigma_{k}-\sigma_{q+1}\right)/2}\cdot \frac{\sigma_{q+1}}{\sigma_{q+1} + \left(\sigma_{k}-\sigma_{q+1}\right)/4}+
	\frac{ \left(\sigma_{k}-\sigma_{q+1}\right)/4}{\sigma_{q+1} + \left(\sigma_{k}-\sigma_{q+1}\right)/2} \cdot \epsilon\\
	\le & \max\left(\frac{\sigma_{q+1}}{\sigma_{q+1} + \left(\sigma_{k}-\sigma_{q+1}\right)/4},\epsilon\right),
	\end{align*}
	where the last inequality is due to that weighted mean is no larger than the maximum of two terms. Then we further have
	\begin{align*}
	\log(1/\rho)
	\ge & \log\left[\min\left( \frac{\sigma_{q+1}+(\sigma_k-\sigma_{q+1})/4}{\sigma_{q+1}}, {1\over\epsilon} \right)\right]
	\ge \min\left( \log{\sigma_{k}+3\sigma_{q+1}\over4\sigma_{q+1}}, 1 \right) \\
	\ge & \min \left( 1- {4\sigma_{q+1}\over \sigma_{k}+3\sigma_{q+1}}, 1  \right) = 1- {4\sigma_{q+1}\over \sigma_{k}+3\sigma_{q+1}}
	= {\sigma_k - \sigma_{q+1}\over \sigma_{k}+3\sigma_{q+1}}
	\end{align*}
	where the last inequality results from  $\log{\sigma_{k}+3\sigma_{q+1}\over4\sigma_{q+1}} \ge 1- {4\sigma_{q+1}\over \sigma_{k}+3\sigma_{q+1}}$.
	Subsequently, $\log(g_0/\epsilon)/\log(1/\rho)$ can be upper bounded as
	$$
	\frac{\log\left(g_0/\epsilon\right)}{\log\left(1/\rho\right)} 
	= O\left(\frac{\log\left(\tan\theta_q\left(\mat{U}_q,\mat{X}_0 \right)/\epsilon\right)}{(\sigma_k - \sigma_{q+1})/ (\sigma_{k}+3\sigma_{q+1})} \right)
	= O \left(\frac{\sigma_k}{\sigma_k - \sigma_{q+1}}\log\left(\frac{\tau d}{\epsilon}\right)\right),
	$$
	where we use the fact that $g_0 \le h_0 \le \tan\theta_q\left(\mat{U}_q,\mat{X}_0 \right)$ and the term $3\sigma_{q+1}$ is absorbed to $\sigma_k$.
	%and $\log(\sigma_k/\sigma_{q+1})\geq\frac{\sigma_k-\sigma_{q+1}}{\sigma_k}$.
\end{proof}

\subsection{Gap-independent bounds}

We lead a slight astray here to consider \emph{gap-independent} bounds for the noisy power method, which is a straightforward application of our derived 
gap-dependent bounds in Theorem \ref{thm:new_main}.
%In certain applications of PCA the approximation error $\|\mat A-\mat X_L\mat X_L^\top\mat A\|_\xi$, $\xi=2$ or $F$ might be more important than the principle angle 
%$\sin\theta_k(\mat U_k,\mat X_L)=\|(\mat I-\mat X_L\mat X_L^\top)\mat U_k\|_2$ between the learnt subspace $\mat X_L$ and the top-$k$ subspace $\mat U_k$.
It is clear that the angle $\sin\theta_k(\mat U_k,\mat X_L)=\|(\mat I-\mat X_L\mat X_L^\top)\mat U_k\|_2$ cannot be gap-free,
because the top-$k$ eigen-space $\mat U_k$ is ill-defined when the spectral gap $(\sigma_k-\sigma_{k+1})$ or $(\sigma_k-\sigma_{q+1})$ is small.
On the other hand, it is possible to derive gap-independent bounds for the approximation error $\|\mat A-\mat X_L\mat X_L^\top\mat A\|_2$
because $\mat X_L$ does not need to be close to $\mat U_k$ to achieve good approximation of the original data matrix $\mat A$.
This motivates Hardt and Price to present the following conjecture on gap-independent bounds of noisy power method:
\begin{conj}[\cite{noisy-power-method}]\footnote{We rephrase the original conjecture to make $\epsilon$ not scale with singular values.}
	Fix $\epsilon\in (0,1)$, $p \ge 2k$ and suppose $\mat G_\ell$ satisfies
	\begin{equation}\label{eq_conj2}
	\|\mat G_\ell\|_2 = O(\epsilon\sigma_{k+1}), \quad \|\mat U_k^\top\mat G_\ell\|_2 = O\left(\epsilon\sigma_{k+1}\sqrt{k/d}\right)
	\end{equation}
	for all iterations $\ell=1,\cdots,L$.
	Then with high probability, after $L=O(\frac{\log d}{\epsilon})$ iterations we have
	$$
	\|\mat A-\mat X_L\mat X_L^\top\mat A\|_2 \leq (1+O(\epsilon))\|\mat A-\mat A_k\|_2 = (1+O(\epsilon))\sigma_{k+1}.
	$$
	\label{conj_gap_independent}
\end{conj}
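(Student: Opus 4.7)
The plan is to reduce the gap-independent setting to the gap-dependent Theorem~\ref{thm:new_main} via a spectral-thresholding argument in the style of \cite{musco2015stronger}. The key observation is that any singular value $\sigma_i$ with $i\le k$ lying within a $(1+\epsilon)$ factor of $\sigma_{k+1}$ contributes only $O(\epsilon)\sigma_{k+1}$ to the residual norm, so the algorithm does not need to resolve the corresponding eigendirections in order to attain the approximation bound.

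Concretely, I would let $r\in\{0,1,\ldots,k\}$ be the largest index with $\sigma_r\ge(1+c\epsilon)\sigma_{k+1}$ for a small absolute constant $c$. The case $r=0$ is immediate: $\|\mathbf{A}\|_2=\sigma_1\le(1+c\epsilon)\sigma_{k+1}$, so the trivial bound $\|(\mathbf{I}-\mathbf{X}_L\mathbf{X}_L^\top)\mathbf{A}\|_2\le\sigma_1$ already meets the conclusion. Otherwise I would invoke Theorem~\ref{thm:new_main} with the \emph{modified} target rank $k'=r$ and intermediate rank $q'=k\in[r,p]$ on the same run of Algorithm~\ref{alg_noisy_power_method} (the algorithm itself is agnostic to $k$). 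The effective gap $\sigma_{k'}-\sigma_{q'+1}=\sigma_r-\sigma_{k+1}$ is at least $c\epsilon\sigma_{k+1}$; the projected-noise hypothesis $\|\mathbf{U}_r^\top\mathbf{G}_\ell\|_2\le\|\mathbf{U}_k^\top\mathbf{G}_\ell\|_2$ is automatic since $\mathbf{U}_r$ is a sub-basis of $\mathbf{U}_k$; and the factor $(\sqrt{p}-\sqrt{q'-1})/\sqrt{d}=\Theta(\sqrt{k/d})$ for $p\ge 2k$ matches the conjectured $\sqrt{k/d}$ scaling. Setting the theorem's accuracy parameter to $\epsilon'=\Theta(1/\log(\tau d))$ converts its spectral-norm noise tolerance $O(\epsilon'(\sigma_r-\sigma_{k+1}))$ into $O(\epsilon\sigma_{k+1}/\log(\tau d))$, which is precisely the ``slightly worse'' noise bound alluded to in the paper's abstract.

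The ``furthermore'' conclusion of Theorem~\ref{thm:new_main} applied at target rank $r$ then yields
\begin{align*}
\|\mathbf{A}-\mathbf{X}_L\mathbf{X}_L^\top\mathbf{A}\|_2 \;\le\; (1+\epsilon')\|\mathbf{A}-\mathbf{A}_r\|_2 \;=\; (1+\epsilon')\sigma_{r+1} \;\le\; (1+O(\epsilon))\sigma_{k+1},
\end{align*}
where the last inequality uses $\sigma_{r+1}<(1+c\epsilon)\sigma_{k+1}$ by maximality of $r$. For the iteration count, the identity $\sigma_r/(\sigma_r-\sigma_{k+1})=1+\sigma_{k+1}/(\sigma_r-\sigma_{k+1})\le 1+1/(c\epsilon)$ bounds Theorem~\ref{thm:new_main}'s $L=O\bigl(\tfrac{\sigma_r}{\sigma_r-\sigma_{k+1}}\log(\tau d/\epsilon')\bigr)$ by $O(\log(d)/\epsilon)$, matching the conjectured rate.

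The main obstacle, and the reason this only partially resolves Conjecture~\ref{conj_gap_independent}, is the upper-bound constraint $\epsilon'=O\bigl(\tfrac{\sigma_{q'}}{\sigma_{k'}}\cdot\min\{1/\log(\sigma_{k'}/\sigma_{q'}),\,1/\log(\tau d)\}\bigr)=O((\sigma_k/\sigma_r)/\log(\tau d))$ inherited from Theorem~\ref{thm:new_main}. When $r\ll k$ with a sharp decay between $\sigma_r$ and $\sigma_k$, the ratio $\sigma_k/\sigma_r$ can be arbitrarily small and the permissible noise correspondingly shrinks. Removing this factor appears to require either a sharper version of Lemma~\ref{thm:h_update_new} whose shrinkage rate does not depend on $\sigma_q/\sigma_k$ in the flat-spectrum regime, or a doubling-style case split that selects an intermediate index between $r$ and $k$ on which the relevant singular-value ratio is $\Omega(1)$ and then combines the resulting bounds via a union argument across $O(\log d)$ spectral scales.
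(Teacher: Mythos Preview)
Your reduction is exactly the one the paper uses to derive Theorem~\ref{thm:gap_independent}: pick the largest index $m$ (your $r$) with $\sigma_m-\sigma_{k+1}\ge c\epsilon\sigma_{k+1}$, dispose of the case $m=0$ trivially, and otherwise run the gap-dependent analysis with target rank $m$ and intermediate rank $q'=k$, then finish with the low-rank residual bound $\|\mat A-\mat X_L\mat X_L^\top\mat A\|_2\le(1+\epsilon')\sigma_{m+1}$ from \cite{halko2011finding}. So the strategy is right and matches the paper.

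The gap in your execution is the choice of the inner accuracy $\epsilon'$. You set $\epsilon'=\Theta(1/\log(\tau d))$ and then claim
\[
(1+\epsilon')\sigma_{r+1}\;\le\;(1+O(\epsilon))\sigma_{k+1}.
\]
This step fails whenever $\epsilon\ll 1/\log(\tau d)$: the left side is at least $(1+\Theta(1/\log d))\sigma_{k+1}$, which is not $(1+O(\epsilon))\sigma_{k+1}$. To secure the conclusion of the conjecture you must take $\epsilon'=\Theta(\epsilon)$. But then the noise tolerance of Theorem~\ref{thm:new_main} becomes $O(\epsilon'(\sigma_r-\sigma_{k+1}))$, and since $\sigma_r-\sigma_{k+1}$ can be as small as $c\epsilon\sigma_{k+1}$ at the threshold, you are forced to assume $\|\mat G_\ell\|_2=O(\epsilon^2\sigma_{k+1})$ rather than $O(\epsilon\sigma_{k+1})$. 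This is precisely the paper's Theorem~\ref{thm:gap_independent}, and the extra factor of $\epsilon$ (not $1/\log d$) is the ``slightly worse'' noise bound referred to in the introduction. Your claimed $O(\epsilon\sigma_{k+1}/\log(\tau d))$ tolerance is therefore not what this argument delivers.

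Two minor remarks. First, the projected-noise condition in Theorem~\ref{thm:new_main} is on $\mat U_{q'}=\mat U_k$, not on $\mat U_r$, so your sub-basis observation is unnecessary---the conjecture's hypothesis on $\|\mat U_k^\top\mat G_\ell\|_2$ is exactly what is needed. Second, the $\sigma_k/\sigma_r$ obstacle you flag in the last paragraph is real and is the same unstated side condition lurking behind the paper's proof of Theorem~\ref{thm:gap_independent}; it does not, however, rescue the $\epsilon'=\Theta(1/\log d)$ choice.
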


Built upon the gap-dependent bound we derived in the previous section and a recent technique introduced in \citep{musco2015stronger} for the analysis of block Lanczos methods,
we are able to prove the following theorem that partially solves Conjecture \ref{conj_gap_independent}.
\begin{thm}\label{thm:gap_independent}
	Fix $0< \epsilon < 1$ and suppose the noise matrix satisfies \begin{align*}
	\norm{\mat{G}_\ell}_2 = O\left(\epsilon^2\sigma_{k+1}\right) \qquad \text{and} \qquad \norm{\mat{U}_k^\top \mat{G}_\ell}_2 = O \left( \frac{\epsilon^2\left(\sqrt{p}-\sqrt{k-1}\right)\sigma_{k+1}}{\tau\sqrt{d}} \right)
	\end{align*}for some constant $\tau>0$.
	Then after
	$$
	L = \Theta\left(\frac{1}{\epsilon}\log\left(\frac{\tau d}{\epsilon }\right)\right)
	$$ iterations,
	with probability at least $1-\tau^{-\Omega(p+1 - q)}-e^{-\Omega(d)}$, we have
	\begin{equation*}
	\norm{\mat{A} - \mat{X}_L\mat{X}_L^\top \mat{A}}_2\le \left(1+\epsilon\right)\norm{\mat{A}-\mat{A}_k}_2 = (1+\epsilon)\sigma_{k+1}.
	\end{equation*}
\end{thm}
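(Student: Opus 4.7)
The plan is to bootstrap the gap-dependent Theorem~\ref{thm:new_main} via a case analysis on the spectrum near $\sigma_{k+1}$, in the spirit of the \citep{musco2015stronger} block Lanczos analysis. I will target an ``effective'' rank $k^*\le k$ at which the spectrum has a natural separation of size $\Omega(\epsilon\sigma_{k+1})$, invoke Theorem~\ref{thm:new_main} at that rank, and then convert the resulting subspace guarantee back into a residual bound at rank $k$.

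First I would split into two cases based on the size of the natural gap $\sigma_k-\sigma_{k+1}$. If $\sigma_k>(1+c\epsilon)\sigma_{k+1}$ for a suitable absolute constant $c$, the gap at $k$ is already $\Omega(\epsilon\sigma_{k+1})$, so I apply Theorem~\ref{thm:new_main} directly with intermediate rank $q=k$ and accuracy parameter $\Theta(\epsilon)$. The noise hypothesis $\norm{\mat{G}_\ell}_2=O(\epsilon^2\sigma_{k+1})$ meets the theorem's requirement $O(\epsilon(\sigma_k-\sigma_{k+1}))$ because $\epsilon^2\sigma_{k+1}\le\epsilon\cdot c\epsilon\sigma_{k+1}$, and the iteration count $O(\sigma_k/(\sigma_k-\sigma_{k+1})\log(\tau d/\epsilon))$ simplifies to the claimed $O((1/\epsilon)\log(\tau d/\epsilon))$; the residual bound is then immediate from the $(1+\epsilon)$ low-rank approximation part of Theorem~\ref{thm:new_main}.

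Otherwise $\sigma_k\le(1+c\epsilon)\sigma_{k+1}$, and I let $k^*=\max\{j\le k:\sigma_j>(1+c\epsilon)\sigma_{k+1}\}$. If the set is empty then $\norm{\mat{A}}_2=\sigma_1\le(1+\epsilon)\sigma_{k+1}$ and the residual bound holds trivially, so assume $k^*\ge 1$. By construction $\sigma_{k^*}-\sigma_{k+1}\ge c\epsilon\sigma_{k+1}$ and $\sigma_{k^*+1}\le(1+c\epsilon)\sigma_{k+1}$, and I apply Theorem~\ref{thm:new_main} with target rank $k^*$ and intermediate rank $q=k$. Repeating the noise and iteration accounting above (with $\sigma_{k^*}-\sigma_{k+1}$ in place of $\sigma_k-\sigma_{k+1}$; the ratio $\sigma_{k^*}/(\sigma_{k^*}-\sigma_{k+1})$ is maximized, and equals $O(1/\epsilon)$, precisely when $\sigma_{k^*}\approx(1+c\epsilon)\sigma_{k+1}$) yields $\norm{(\mat{I}-\mat{X}_L\mat{X}_L^\top)\mat{U}_{k^*}}_2\le O(\epsilon)$.

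To finish I would convert this subspace bound at rank $k^*$ into a residual bound at rank $k$ by invoking the deterministic decomposition of~\citep{halko2011finding} (their Theorem 9.1) at the nonstandard split $k^*$; this is valid because $\mat{X}_L$ has $p\ge k\ge k^*$ orthonormal columns, and it gives
\begin{align*}
\norm{\mat{A}-\mat{X}_L\mat{X}_L^\top\mat{A}}_2^2 \;\le\; \sigma_{k^*+1}^2\bigl(1+\tan^2\theta_{k^*}(\mat{U}_{k^*},\mat{X}_L)\bigr).
\end{align*}
With $\sigma_{k^*+1}\le(1+c\epsilon)\sigma_{k+1}$ and $\tan\theta_{k^*}=O(\epsilon)$, this collapses to $(1+O(\epsilon))^2\sigma_{k+1}^2$, and choosing $c$ small enough absorbs the constants into the stated $(1+\epsilon)$. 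The main obstacle I expect is precisely this final step: the naive detour $\norm{\mat{A}-\mat{X}_L\mat{X}_L^\top\mat{A}}_2\le \norm{(\mat{I}-\mat{X}_L\mat{X}_L^\top)\mat{U}_{k^*}}_2\cdot\norm{\mat{A}}_2$ would cost a spurious factor of $\sigma_1/\sigma_{k+1}$ and be useless in the gap-free regime, so the Halko--Martinsson--Tropp split at $k^*$ is essential; it replaces the ``head'' singular values with the tail $\sigma_{k^*+1},\sigma_{k^*+2},\ldots$, all of which by construction lie within a $(1+c\epsilon)$ factor of $\sigma_{k+1}$. Verifying that this decomposition applies cleanly at the non-algorithmic split $k^*$, and that all constants propagate through both cases, is the most delicate part of the argument.
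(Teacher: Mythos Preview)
Your proposal is correct and follows essentially the same route as the paper: define the effective rank $k^*$ (the paper's $m=\max\{i:\sigma_i-\sigma_{k+1}\ge\epsilon\sigma_{k+1}\}$), apply Theorem~\ref{thm:new_main} with target rank $k^*$ and intermediate rank $q=k$ so that the $O(\epsilon^2\sigma_{k+1})$ noise becomes $O(\epsilon(\sigma_{k^*}-\sigma_{k+1}))$, and finish with the Halko--Martinsson--Tropp split at $k^*$ using $\sigma_{k^*+1}\le(1+\epsilon)\sigma_{k+1}$. The paper does not separate your Case~1 ($k^*=k$) explicitly---it is absorbed into the general definition of $m$---so your two-case split is a cosmetic difference only.
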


%Again, we notice \eqref{eq_conj2} has entirely got rid of the dependency on any singular value, which we highly suspect as a typo in \citep{noisy-power-method} or an unrealizable goal. If it were the first case,
The major difference between Theorem \ref{thm:gap_independent} and its targeted Conjecture \ref{conj_gap_independent} 
%(where $\sigma_{k+1}$ should show up in r.h.s of both terms in \eqref{eq_conj2}) 
is an extra $O(\epsilon)$ term
in the noise bound of both $\|\mat G_\ell\|_2$ and $\|\mat U_k^\top\mat G_\ell\|_2$.
Whether such a gap can be closed remains an important open question.
The main idea of the proof is to find $m=\max_{0\leq i\leq k}\{\sigma_i-\sigma_{k+1}\geq\epsilon\sigma_{k+1}\}$
and apply Theorem \ref{thm:new_main} with $m$ as the new targeted rank and $k$ as the intermediate rank $q$.
A complete proof is deferred to Appendix \ref{appsec:gapfree}.
%In any case, the improvement from $(\sigma_{k+1}-\sigma_k)$ to $(\sigma_{k+1}-\sigma_k)$ is still significant.
\section{Application to distributed private PCA}\label{sec:dppca}
Our main result can readily lead to improvement of several downstream applications, which will be highlighted in the this section and next. Specifically, we will discuss the benefit brought to distributed private PCA setting in this section, and memory-efficient streaming PCA in the next.

\subsection{The model}\label{subsec:dppca-model}

In our distributed private PCA model there are $s\geq 1$ computing nodes, each storing a positive semi-definite $d\times d$ matrix $\mat A^{(i)}$.
$\mat A^{(i)}$ can be viewed as the sample covariance matrix of data points stored on node $i$.
There is also a central computing node, with no data stored.
The objective is to approximately compute the top-$k$ eigen-space $\mat U_k$ of 
the aggregated data matrix $\mat A=\sum_{i=1}^s{\mat A^{(i)}}$
without leaking information of each data matrix $\mat A^{(1)},\cdots,\mat A^{(s)}$.
Each of the $s$ computing nodes can and only can communicate with the central node via a \emph{public} channel, where all bits communicated are public to the other nodes
as well as any malicious party.
We are interested in algorithms that meet the following formal guarantees:
\paragraph{Privacy guarantee} We adopt the concept of $(\varepsilon,\delta)$-differential privacy proposed in \citep{dwork2006our}.
Fix privacy parameters $\varepsilon,\delta\in(0,1)$.
Let $D$ be all bits communicated via the public channels between the $s$ computing nodes and the central node.
For every $i\in\{1,\cdots,s\}$ and all $\mat A^{(i)'}$ that differs from $\mat A^{(i)}$ in at most one entry with absolute difference at most 1, 
the following holds
\begin{equation}
\Pr\left[D\in\mathcal D|\mat A^{(i)},\mat A^{(-i)}\right] \leq e^{\varepsilon}\Pr\left[D\in\mathcal D|\mat A^{(i)'},\mat A^{(-i)}\right] + \delta,
\label{eq_privacy}
\end{equation}
where $\mat A^{(-i)}=(\mat A^{(1)},\cdots,\mat A^{(i-1)},\mat A^{(i+1)},\cdots,\mat A^{(s)})$ and $\mathcal D$ is any measurable set of $D$ bits communicated.

\paragraph{Utility guarantee} Suppose $\mat X_L$ is the $d\times p$ dimensional output matrix.
It is required that
$$
\sin\theta_k(\mat U_k, \mat X_L) = \|(\mat I-\mat X_L\mat X_L^\top)\mat U_k\|_2 \leq \epsilon
$$
with probability at least 0.9, where $\epsilon$ characterizes the error level
and $\mat U_k$ is the top-$k$ eigen-space of the aggregated data matrix $\mat A=\mat A^{(1)}+\cdots+\mat A^{(s)}$.

\paragraph{Communication guarantee} The total amount of bits communicated between the $s$ computing nodes and the central node is constrained.
More specifically, we assume only $M$ real numbers can be communicated via the public channels.

The model we considered is very general and reduces to several existing models of private or communication constrained PCA as special cases.
Below we give two such examples that were analyzed in prior literature.
\begin{rem}[Reduction from private PCA]
	Setting $s=1$ in our distributed private PCA model we obtain the private PCA model previously considered in \citep{noisy-power-method,private-pca-incoherent},
	\footnote{The $s=1$ case is actually harder than models considered in \citep{noisy-power-method,private-pca-incoherent} in that intermediate steps of noisy power method are released to the public
		as well. However this does not invalidate the analysis of noisy power method based private PCA algorithms because of the privacy composition rule.}
	where neighboring data matrices differ by one \emph{entry} with bounded absolute difference.
	%Note that this is different from models considered in \citep{analyze-gauss,exponential-pca} where neighboring matrices differ by one \emph{column} rather than one \emph{entry}.
	\label{rem_private_pca}
\end{rem}

\begin{rem}[Reduction from distributed PCA]
	Setting $\varepsilon\to\infty$ and $\delta=0$ we obtain the distributed PCA model previously considered in \citep{distributed-pca},
	where columns (data points) are split and stored separately on different computing nodes.
	\label{rem_distributed_pca}
\end{rem}

\subsection{Algorithm and analysis}

We say an algorithm solves the $(\varepsilon,\delta,\epsilon,M)$-distributed private PCA problem if it satisfies all three guarantees mentioned in Sec.~\ref{subsec:dppca-model}
with corresponding parameters.
Algorithm \ref{alg_dppca} describes the idea of executing the noisy power method with Gaussian noise in a distributed manner.
%Below we give such an algorithm based on the idea of executing the noisy power method with Gaussian noise in a distributed manner:
\begin{algorithm}[h]
	\begin{algorithmic}
	\caption{Distributed private PCA via distributed noisy power method}
%	\SetAlgoLined
	\STATE{\textbf{Input}: distributedly stored data matrices $\mat A^{(1)},\cdots,\mat A^{(s)}\in\mathbb R^{d\times d}$, number of iterations $L$, target rank $k$, iteration rank $p\geq k$, 
		private parameters $\varepsilon,\delta$.}
	\STATE{\textbf{Output}: approximated eigen-space $\mat X_L\in\mathbb R^{d\times p}$, with orthonormal columns.}
	\STATE{\textbf{Initialization}}: orthonormal $\mat X_0\in\mathbb R^{d\times p}$ by QR decomposition on a random Gaussian matrix $\mat G_0$;
	noise variance parameter $\nu=4\varepsilon^{-1}\sqrt{pL\log(1/\delta)}$;\\
	\FOR{$\ell=1$ to $L$}
	\STATE{
		1. The central node broadcasts $\mat X_{\ell-1}$ to all $s$ computing nodes;\\
		2. Computing node $i$ computes $\mat Y^{(i)}_\ell=\mat A^{(i)}\mat X_{\ell-1}+\mat G^{(i)}_\ell$ with $\mat G^{(i)}_\ell\sim\nml(0,\|\mat X_{\ell-1}\|_{\infty}^2\nu^2)^{d\times p}$
		and sends $\mat Y^{(i)}_\ell$ back to the central node;\\
		3. The central node computes $\mat Y_\ell=\sum_{i=1}^s{\mat Y_\ell^{(i)}}$ and QR factorization $\mat Y_\ell=\mat X_\ell\mat R_\ell$.
	}
	\ENDFOR
	\end{algorithmic}
	\label{alg_dppca}
\end{algorithm}

The following theorem shows that Algorithm \ref{alg_dppca} solves the $(\varepsilon,\delta,\epsilon,M)$-distributed private PCA problem
with detailed characterization of the utility parameter $\epsilon$ and communication complexity $M$.
Its proof is deferred to Appendix \ref{appsec:dppca}.
\begin{thm}[Distributed private PCA]\label{thm:dppca}
	Let $s$ be the number of nodes and $\mat A^{(1)},\cdots,\mat A^{(s)}\in\mathbb R^{d\times d}$ be data matrices stored separately on the $s$ nodes.
	Fix target rank $k$, intermediate rank $q\geq k$ and iteration rank $p$ with $2q\leq p\leq d$.
	Suppose the number of iterations $L$ is set as $L=\Theta(\frac{\sigma_k}{\sigma_k-\sigma_{q+1}}\log(d))$.
	Let $\varepsilon,\delta\in(0,1)$ be privacy parameters.
	Then Algorithm \ref{alg_dppca} solves the $(\varepsilon,\delta,\epsilon,M)$-distributed PCA problem with
	$$
	\epsilon = O\left(\frac{\nu\sqrt{\mu(\mat A)s\log d\log L}}{\sigma_k-\sigma_{q+1}}\right)
	\quad\text{and}\quad
	M=O(spdL)=O\left(\frac{\sigma_k}{\sigma_k-\sigma_{q+1}}spd\log d\right).
	$$
	Here assuming conditions in Theorem~\ref{thm:new_main} are satisfied,  $\nu=\varepsilon^{-1}\sqrt{4pL\log(1/\delta)}$ and $\mu(\mat A)$ is the \emph{incoherence} \citep{private-pca-incoherent}
	of the aggregate data matrix $\mat A=\sum_{i=1}^s{\mat A^{(i)}}$;
	more specifically, $\mu(\mat A)=d\|\mat U\|_{\infty}$ where $\mat A=\mat U\mat\Lambda\mat U^{\top}$ is the eigen-decomposition of $\mat A$.
\end{thm}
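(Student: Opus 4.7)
The plan is to decompose the theorem into three independent guarantees (privacy, utility, communication) and reduce the utility claim to Theorem~\ref{thm:new_main}. The communication bound is immediate: in each of the $L$ rounds the central node broadcasts $dp$ reals of $\mat X_{\ell-1}$ and each of the $s$ nodes replies with the $d\times p$ matrix $\mat Y_\ell^{(i)}$, for a total of $O(spdL)$ reals, and plugging in $L=\Theta(\frac{\sigma_k}{\sigma_k-\sigma_{q+1}}\log d)$ yields the claimed expression for $M$. So the real work is on privacy and utility.

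For the privacy guarantee, I would fix a neighboring pair $(\mat A^{(i)},\mat A^{(i)\prime})$ differing in a single entry by at most $1$ and argue round by round. Viewing $\mat Y_\ell^{(i)}$ as a release conditioned on the public transcript that determines $\mat X_{\ell-1}$, the per-column $L_2$-sensitivity of $\mat A^{(i)}\mat X_{\ell-1}$ is $O(\|\mat X_{\ell-1}\|_\infty)$ because the changed entry (together with its symmetric partner) picks out at most two rows of $\mat X_{\ell-1}$. Hence the Gaussian mechanism with coordinate standard deviation $\|\mat X_{\ell-1}\|_\infty\nu$ yields an $(\varepsilon_0,\delta_0)$-DP release per column, and there are $p$ columns per round and $L$ rounds, giving $pL$ elementary Gaussian mechanism invocations in total. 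Applying Dwork--Rothblum--Vadhan advanced composition with $\nu=4\varepsilon^{-1}\sqrt{pL\log(1/\delta)}$ delivers the desired $(\varepsilon,\delta)$-DP; crucially, the transcript $\mat X_{\ell-1}$ is a post-processing of earlier releases so no extra budget is lost for broadcasting it.

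For utility, I would first describe the aggregate noise at the central node as $\mat G_\ell=\sum_{i=1}^s \mat G_\ell^{(i)}\sim\nml(0,s\|\mat X_{\ell-1}\|_\infty^2\nu^2)^{d\times p}$, then verify the two noise preconditions of Theorem~\ref{thm:new_main} with the intermediate rank $q$. Standard Gaussian concentration gives $\|\mat G_\ell\|_2=O(\|\mat X_{\ell-1}\|_\infty\nu\sqrt{s(d+p)})$; after left-multiplying by the orthonormal $\mat U_q^\top$ the entries are still i.i.d.\ Gaussian, so $\|\mat U_q^\top\mat G_\ell\|_2=O(\|\mat X_{\ell-1}\|_\infty\nu\sqrt{s(q+p)})$, which for $p\ge 2q$ yields the $\frac{\sqrt{p}-\sqrt{q-1}}{\tau\sqrt{d}}$ factor demanded by Theorem~\ref{thm:new_main} up to constants. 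Combining these bounds with the incoherence-based control on $\|\mat X_{\ell-1}\|_\infty$ and taking a union bound over the $L$ rounds yields an $\epsilon$ of order $\frac{\nu\sqrt{\mu(\mat A)s\log d\log L}}{\sigma_k-\sigma_{q+1}}$, matching the statement. Plugging into Theorem~\ref{thm:new_main} with target rank $k$ and intermediate rank $q$ completes the utility bound.

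The main obstacle is controlling $\|\mat X_{\ell-1}\|_\infty$ across all iterations, since without it the noise scale $s\|\mat X_{\ell-1}\|_\infty^2\nu^2$ could be as large as $s\nu^2$ and the dependence on $\mu(\mat A)$ would disappear. I would handle this by invoking the incoherence-preservation argument from the private-PCA literature: the Gaussian initialization is incoherent with $\|\mat X_0\|_\infty=O(\sqrt{\log d/d})$ with high probability, and provided the iterates $\mat X_\ell$ stay $O(\epsilon)$-close to the top-$q$ subspace of $\mat A$ one can show by induction that $\|\mat X_\ell\|_\infty=O(\sqrt{\mu(\mat A)\log d/d})$ uniformly in $\ell\le L$. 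This is where the $\sqrt{\mu(\mat A)}$ and an extra $\sqrt{\log L}$ factor enter the final $\epsilon$ via a union bound, and where one must be careful that the high-probability events for incoherence, spectral norm of Gaussian noise, and the conclusion of Theorem~\ref{thm:new_main} are simultaneously compatible.
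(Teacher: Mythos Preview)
Your proposal is correct and follows essentially the same three-part decomposition as the paper's proof: the communication bound is the same counting argument; for privacy the paper simply cites Claim~4.2 of \citep{noisy-power-method} (which is exactly the Gaussian-mechanism-plus-advanced-composition argument you unpack); and for utility the paper likewise aggregates the per-node Gaussians into $\mat G_\ell\sim\nml(0,s\|\mat X_{\ell-1}\|_\infty^2\nu^2)^{d\times p}$, invokes Gaussian concentration (Lemma~A.2 of \citep{noisy-power-method}) to verify the noise conditions of Theorem~\ref{thm:new_main}, and handles the $\|\mat X_{\ell-1}\|_\infty$ obstacle you flag by citing the incoherence-preservation result Theorem~4.9 of \citep{noisy-power-method}. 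Your write-up is more explicit than the paper's (which is largely a string of citations), but the logical structure and the external lemmas invoked are identical.
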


It is somewhat difficult to evaluate the results obtained in Theorem \ref{thm:dppca} because our work, to our knowledge, is the first to consider distributed private PCA with the public channel communication model.
Nevertheless, on the two special cases of private PCA in Remark~\ref{rem_private_pca} and distributed PCA in Remark~\ref{rem_distributed_pca},
our result does significantly improve existing analysis.
More specifically, we have the following two corollaries based on Theorem \ref{thm:dppca} and Theorem \ref{thm:new_main}.
\begin{cor}[Improved private PCA]
	For the case of $s=1$ and $2p\leq q\leq d$, Algorithm \ref{alg_dppca} is $(\varepsilon,\delta)$-differentially private and $\mat X_L$ satisfies
	$$
	\|(\mat I-\mat X_L\mat X_L^\top)\mat U_k\|_2 \leq \epsilon = O\left(\frac{\nu\sqrt{\mu(\mat A)\log d\log L}}{\sigma_k-\sigma_{q+1}}\right)
	$$
	with probability at least 0.9. Here $\mat U_k$ is the top-$k$ eigen-space of input data matrix $\mat A\in\mathbb R^{d\times d}$.
	\label{cor_private_pca}
\end{cor}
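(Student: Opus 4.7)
The plan is to obtain Corollary~\ref{cor_private_pca} as a direct specialization of Theorem~\ref{thm:dppca} to the single-node case $s=1$. With $s=1$, Algorithm~\ref{alg_dppca} collapses to the noisy power method of Algorithm~\ref{alg_noisy_power_method} applied to $\mat A$, with additive Gaussian noise $\mat G_\ell \sim \nml(0, \|\mat X_{\ell-1}\|_\infty^2\nu^2)^{d\times p}$ at each iteration, where $\nu = 4\varepsilon^{-1}\sqrt{pL\log(1/\delta)}$. The central node never stores any data, so the only quantities released on the public channel are the iterates $\mat Y_1,\dots,\mat Y_L$.

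For privacy, the sensitivity of $\mat A\mat X_{\ell-1}$ under a neighboring relation that modifies one entry of $\mat A$ by at most $1$ is bounded (up to constants) by $\|\mat X_{\ell-1}\|_\infty$, which is precisely the standard deviation scaling used in the added Gaussian noise. The Gaussian mechanism therefore makes each iterate $(\varepsilon_0,\delta_0)$-differentially private, and advanced composition across the $L$ iterations calibrates $\nu$ so that the overall protocol is $(\varepsilon,\delta)$-differentially private. This is exactly the argument underlying the privacy half of Theorem~\ref{thm:dppca}, so the conclusion transfers immediately upon setting $s=1$.

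For utility, I simply substitute $s=1$ into the bound
$$
\epsilon = O\left(\frac{\nu\sqrt{\mu(\mat A)\,s\,\log d \,\log L}}{\sigma_k-\sigma_{q+1}}\right)
$$
from Theorem~\ref{thm:dppca}, which gives the stated bound verbatim. To justify this substitution, I need only check that the Gaussian perturbation $\mat G_\ell$ verifies the noise preconditions of Theorem~\ref{thm:new_main}. The spectral-norm condition $\|\mat G_\ell\|_2 = O(\epsilon(\sigma_k-\sigma_{q+1}))$ follows from the standard bound $\|\mat G_\ell\|_2 = O(\|\mat X_{\ell-1}\|_\infty\nu(\sqrt{d}+\sqrt{p}))$ for Gaussian random matrices, while the projected-noise condition on $\|\mat U_q^\top\mat G_\ell\|_2$ inherits the $(\sqrt{p}-\sqrt{q-1})/(\tau\sqrt{d})$ factor from the structure of $\mat U_q^\top\mat G_\ell$, which is itself Gaussian with covariance scaling in $\|\mat X_{\ell-1}\|_\infty^2\nu^2$.

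The only genuinely delicate point, already absorbed inside the proof of Theorem~\ref{thm:dppca}, is uniformly controlling $\|\mat X_{\ell-1}\|_\infty$ across all $L$ iterations in terms of the incoherence $\mu(\mat A)$. Noisy subspace iterates do not a priori inherit incoherence, so one argues inductively: once $\mat X_{\ell-1}$ is close to $\textrm{span}(\mat U_q)$ (which is incoherent by assumption), then $\|\mat X_{\ell-1}\|_\infty = O(\sqrt{\mu(\mat A)\log d/d})$, and this suffices to keep the Gaussian noise small enough at the next iteration that closeness to $\textrm{span}(\mat U_q)$ is preserved. Propagating this invariant produces the $\sqrt{\mu(\mat A)\log d}$ factor and an extra $\sqrt{\log L}$ from a union bound over iterations, which together with $s=1$ yields the corollary.
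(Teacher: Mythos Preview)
Your proposal is correct and follows exactly the paper's approach: the paper's proof is the single line ``Setting $s=1$ in Theorem~\ref{thm:dppca} we immediately get this corollary.'' Your additional discussion of privacy via the Gaussian mechanism, the Gaussian spectral-norm bounds, and the incoherence-based control of $\|\mat X_{\ell-1}\|_\infty$ simply unpacks arguments already absorbed into the proof of Theorem~\ref{thm:dppca}, so while more detailed, it is the same route.
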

\begin{cor}[Improved distributed PCA]
	Fix error tolerance parameter $\epsilon\in(0,1)$ and
	set $\nu=0$, $L=\Theta(\frac{\sigma_k}{\sigma_k-\sigma_{q+1}}\log(d/\epsilon))$ in Algorithm \ref{alg_dppca}. 
	We then have with high probability,
	$$
	\|(\mat I-\mat X_L\mat X_L^\top)\mat U_k\|_2 \leq \epsilon.
	$$
	Here $\mat U_k$ is the top-$k$ eigen-space of the aggregated matrix $\mat A=\sum_{i=1}^{s}{\mat A^{(i)}}$.
	\label{cor_distributed_pca}
\end{cor}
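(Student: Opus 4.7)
The plan is to observe that setting $\nu=0$ in Algorithm \ref{alg_dppca} kills the per-iteration Gaussian perturbation entirely: the covariance $\|\mat X_{\ell-1}\|_\infty^2\nu^2$ collapses to zero, so $\mat G_\ell^{(i)}\equiv\mat 0$ and each node transmits the exact matrix $\mat A^{(i)}\mat X_{\ell-1}$. After the central node sums the $s$ messages, $\mat Y_\ell=\mat A\mat X_{\ell-1}$ exactly, where $\mat A=\sum_{i=1}^s\mat A^{(i)}$. Thus Algorithm \ref{alg_dppca} with $\nu=0$ is literally Algorithm \ref{alg_noisy_power_method} applied to $\mat A$ with noise matrices $\mat G_\ell=\mat 0$ for every $\ell$.

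The corollary then follows from a direct invocation of Theorem \ref{thm:new_main}. Both noise tolerance conditions $\|\mat G_\ell\|_2=O(\epsilon(\sigma_k-\sigma_{q+1}))$ and $\|\mat U_q^\top\mat G_\ell\|_2=O(\epsilon(\sigma_k-\sigma_{q+1})(\sqrt{p}-\sqrt{q-1})/(\tau\sqrt{d}))$ are trivially satisfied since the left-hand sides are zero, so there is no restriction coming from the noise-bound side. The iteration count prescribed in the corollary matches the bound $L=\Theta(\sigma_k/(\sigma_k-\sigma_{q+1})\log(\tau d/\epsilon))$ from Theorem \ref{thm:new_main} after fixing $\tau$ to be a sufficiently large absolute constant, which simultaneously absorbs $\log\tau$ into the $\log(d/\epsilon)$ factor and makes the failure probability $\tau^{-\Omega(p+1-q)}+e^{-\Omega(d)}$ smaller than any desired constant, using that $p\geq 2q>q$ as inherited from Theorem \ref{thm:dppca}. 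This yields $\|(\mat I-\mat X_L\mat X_L^\top)\mat U_k\|_2\leq\epsilon$ with high probability.

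The only mild subtlety is that Theorem \ref{thm:new_main} imposes an upper bound $\epsilon=O((\sigma_q/\sigma_k)\min\{1/\log(\sigma_k/\sigma_q),1/\log(\tau d)\})$ coming from Corollary \ref{thm:angle_preserving}, which is there to balance the noise-induced drift against the shrinkage. With $\mat G_\ell=\mat 0$ this drift disappears, so Lemma \ref{thm:h_update_new} reduces to the noiseless contraction $h_{\ell+1}\leq(\sigma_{q+1}/\sigma_k)\,h_\ell$, valid for every target $\epsilon\in(0,1)$ without any additional upper-bound restriction. The hardest step is therefore bookkeeping: verifying that the warm-start bound $h_0\leq \tau\sqrt{d}/(\sqrt{p}-\sqrt{q-1})$ from Lemma \ref{thm:h_initialization} still propagates through $h_L\leq(\sigma_{q+1}/\sigma_k)^L h_0\leq \epsilon/4$ under the prescribed $L$, after which Lemma \ref{lem:h_good} gives $\tan\theta_k(\mat U_k,\mat X_L)\leq\epsilon$ and hence the claimed sine bound. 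No new probabilistic or matrix-analytic ingredient is required beyond what Theorem \ref{thm:new_main} already establishes.
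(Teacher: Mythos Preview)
Your proposal is correct and follows essentially the same route as the paper: observe that $\nu=0$ forces $\mat G_\ell\equiv\mat 0$, so Algorithm~\ref{alg_dppca} reduces to the noiseless power method on $\mat A=\sum_i\mat A^{(i)}$, and then invoke Theorem~\ref{thm:new_main}. The paper's own proof is a two-line version of exactly this; you are in fact more careful than the paper in flagging that the upper-bound restriction on $\epsilon$ in Theorem~\ref{thm:new_main} becomes vacuous once the noise vanishes, a point the paper's proof silently glosses over.
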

The proofs of Corollary \ref{cor_private_pca} and \ref{cor_distributed_pca} are simple and deferred to Appendix \ref{appsec:dppca}.
We now compare them with existing results in the literature.
For private PCA, our bound has better spectral-gap dependency compared to the $O(\frac{\nu\sqrt{\mu(\mat A)\log d\log L}}{\sigma_k-\sigma_{k-1}})$ bound
%\footnote{Here we assume $q\geq ck$ and $p\geq cq\geq c^2k$ for some absolute constant $c>1$ in both analysis.}
obtained in \citep{noisy-power-method}.
For distributed PCA, our bound achieves an \emph{exponential} improvement over the $O(spd/\epsilon)$ communication complexity bound obtained in \citep{distributed-pca}.
\footnote{
	Lemma 8 of~\cite{distributed-pca} gives a communication upper bound that depends on all singular values bigger than $k$.
	It is not obvious which bound is better, but in the worst case, their bound is still linear in $\frac{1}{\epsilon}$.
}

\section{Application to memory-efficient streaming PCA}

In the streaming PCA setting a computing machine receives a stream of samples $\vct{z}_1,\cdots\vct{z}_n \in \mathbb{R}^d$ drawn i.i.d~from an unknown underlying distribution $\mathcal{D}$. 
The objective is to compute the leading $k$ eigenvectors of the population covariance matrix $\mathbf{E}_{\vct{z}\sim \mathcal{D}}[\vct z\vct z^\top]$ 
with memory space constrained to the output size $O(kd)$.
\citep{mitliagkas2013memory} gave an algorithm for this problem based on the noisy power method. Algorithm~\ref{algo:streaming} gives the details. 

\begin{algorithm}[h]
	\begin{algorithmic}
	\caption{Memory-efficient Streaming PCA \citep{mitliagkas2013memory}}
	\STATE{\textbf{Input}: data stream $\vct z_1,\cdots,\vct z_n\overset{i.i.d.}{\sim}\mathcal D$, target rank $k$, iteration rank $p \geq k$, iteration number $L$.}
	\STATE{\textbf{Output}: approximated eigen-space $\mat X_L\in\mathbb R^{d\times p}$, with orthonormal columns.}
	\STATE{\textbf{Initialization}: uniformly sampled orthonormal matrix $\mat X_0\in\mathbb R^{d\times p}$; $T = \lfloor n/L\rfloor$.}
	\FOR{$\ell=1$ to $L$}
	\STATE{
		1. \textbf{Power update}: $\mat{Y}_\ell = \mat{A}_\ell\mat{X}_{\ell-1}$, where $\mat{A}_\ell = \sum_{i=\left(\ell-1\right)T+1}^{\ell T}\vct{z}_i\vct{z}_i^\top$;\\
		2. \textbf{QR factorization}: $\mat Y_\ell = \mat X_\ell\mat R_\ell$, where $\mat X_\ell$ consists of orthonormal columns.
	}
	\ENDFOR
	\label{algo:streaming}
	\end{algorithmic}
\end{algorithm}

\citep{noisy-power-method} are among the first ones that analyze Algorithm~\ref{algo:streaming} for a broad class of distributions $\mathcal D$
based on their analysis of the noisy power method.
More specifically, \citep{noisy-power-method} analyzed a family of distributions that have fast tail decay and proved gap-dependent sample complexity bounds for the 
memory-efficient streaming PCA algorithm.
\begin{defn}[$(B,p)$-round distributions, \citep{noisy-power-method}]
	A distribution $\mathcal{D}$ over $\mathbb{R}^d$ is $\left(B,p\right)-round$ if for every $p$-dimension projection $\mat{\Pi}$ and all $t\ge 1$, we have that
	$$
	\max\left\{\Pr_{\vct z\sim\mathcal D}\left[\|\vct z\|_2\geq t\right], \Pr_{\vct z\sim\mathcal D}\left[\|\mat\Pi\vct z\|_2\geq t\sqrt{Bp/d}\right]\right\} \leq \exp(-t).
	$$
\end{defn}

\begin{thm}[\citep{noisy-power-method}]
	Suppose $\mathcal D$ is a $(B,p)$-round distribution over $\mathbb R^d$. 
	Let $\sigma_1\geq\cdots\geq\sigma_d\geq 0$ be the singular values of the population covariance matrix $\mathbb E_{\vct z\sim\mathcal D}[\vct z\vct z^\top]$.
	If Algorithm \ref{algo:streaming} is run with $L=\Theta(\frac{\sigma_k}{\sigma_k-\sigma_{k+1}}\log(d/\epsilon))$ and $n$ satisfies
	\footnote{In the $\widetilde\Omega(\cdot)$ notation we omit poly-logarithmic terms.}
	$$
	n = \widetilde\Omega\left(\frac{\sigma_k B^2p\log^2d}{(\sigma_k-\sigma_{k+1})^3d\epsilon^2}\right),
	$$
	then with probability at least 0.9 we have that $\|(\mat I-\mat X_L\mat X_L^\top)\mat U_k\|_2\leq\epsilon$,
	where $\mat U_k$ is the top-$k$ eigen-space of $\mathbb E_{\vct z\sim\mathcal D}[\vct z\vct z^\top]$.
	\label{thm_streaming_pca_original}
\end{thm}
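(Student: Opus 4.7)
The plan is to cast Algorithm~\ref{algo:streaming} as an instance of the noisy power method (Algorithm~\ref{alg_noisy_power_method}) applied to the population covariance $\mat A := \mathbb E_{\vct z\sim\mathcal D}[\vct z\vct z^\top]$ and then invoke Theorem~\ref{thm_main_before} on that instance. The reduction is immediate: writing $\mat A_\ell = T\mat A + (\mat A_\ell - T\mat A)$ and rescaling by $1/T$, each iteration becomes $\mat A\mat X_{\ell-1} + \mat G_\ell$ with effective noise
\begin{equation*}
\mat G_\ell \;:=\; \frac{1}{T}\bigl(\mat A_\ell - T\mat A\bigr)\mat X_{\ell-1} \;=\; \frac{1}{T}\sum_{i=(\ell-1)T+1}^{\ell T}\bigl(\vct z_i\vct z_i^\top - \mat A\bigr)\mat X_{\ell-1}.
\end{equation*}
The QR step is unchanged under this positive rescaling, since it affects only $\mat R_\ell$ and not $\mat X_\ell$, so Theorem~\ref{thm_main_before} is applicable as soon as the noise conditions on $\mat G_\ell$ are verified.

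The heart of the argument is therefore to show that $\mat G_\ell$ obeys both noise requirements of Theorem~\ref{thm_main_before} with high probability, for $T=\lfloor n/L\rfloor$ chosen sufficiently large. The critical structural observation is that the \emph{fresh-batch} design makes $\mat X_{\ell-1}$ measurable with respect to $\vct z_1,\ldots,\vct z_{(\ell-1)T}$ and thus independent of the samples $\vct z_{(\ell-1)T+1},\ldots,\vct z_{\ell T}$ used to form $\mat A_\ell$. Conditioning on $\mat X_{\ell-1}$, the quantity $T\mat G_\ell$ is a sum of $T$ i.i.d.\ zero-mean $d\times p$ random matrices, which places us in the standard setting of a matrix Bernstein inequality.

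To bound the spectral norm I would combine matrix Bernstein with the $(B,p)$-round hypothesis, which controls the summands in two complementary ways: $\|\vct z_i\|_2$ has a sub-exponential tail, and because $\mat X_{\ell-1}\mat X_{\ell-1}^\top$ is a $p$-dimensional projector, $\|\mat X_{\ell-1}^\top\vct z_i\|_2 \lesssim \sqrt{Bp/d}\cdot\log d$ with high probability. Plugging these into the trivial bound $\|(\vct z_i\vct z_i^\top-\mat A)\mat X_{\ell-1}\|_2 \le \|\vct z_i\|_2\,\|\mat X_{\ell-1}^\top\vct z_i\|_2 + \sigma_1$ gives a sub-exponential summand of typical size $\sqrt{Bp/d}\,\mathrm{polylog}(d)$ together with a matching matrix-variance proxy, and Bernstein then yields a bound of the form $\|\mat G_\ell\|_2 = \widetilde O\bigl(\sigma_1\sqrt{Bp/(Td)}\bigr)$. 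For the projected quantity $\|\mat U_k^\top\mat G_\ell\|_2$ the same calculation carried out with the \emph{fixed} $k$-dimensional projector $\mat U_k^\top$ in place of $\mat X_{\ell-1}^\top$ produces an analogous estimate with $p$ replaced by $k$, which matches the $(\sqrt p-\sqrt{k-1})/\sqrt d$ factor appearing in the second condition of Theorem~\ref{thm_main_before}.

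Finally, equating these two noise bounds to the thresholds $\epsilon(\sigma_k-\sigma_{k+1})$ and $(\sigma_k-\sigma_{k+1})(\sqrt p-\sqrt{k-1})/(\tau\sqrt d)$ forces a choice of $T$, and multiplying by $L = \Theta\bigl(\tfrac{\sigma_k}{\sigma_k-\sigma_{k+1}}\log(d/\epsilon)\bigr)$ produces the advertised total sample complexity $n=LT$. The main obstacle is the concentration step: one has to extract precisely the right dependence on $B$, $p$, and the spectral gap out of matrix Bernstein using only the fact that roundness applies to \emph{projections} of $\vct z$ rather than to $\vct z$ itself, and then to discharge the usual technicalities, namely a union bound over the $L$ iterations together with a careful inductive conditioning that preserves the independence between $\mat X_{\ell-1}$ and the $\ell$th batch at every step of the recursion.
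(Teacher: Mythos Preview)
Your proposal is correct and follows the same approach the paper takes. Note that Theorem~\ref{thm_streaming_pca_original} is actually a cited result from \citep{noisy-power-method} and is not re-proved in this paper; however, the paper's own proof of the improved analogue (Theorem~\ref{thm_streaming_pca_improve}) proceeds exactly as you outline---identify $\mat G_\ell=(\mat A-\mat A_\ell)\mat X_{\ell-1}$, invoke a concentration lemma (the paper simply cites Lemma~3.5 of \citep{noisy-power-method} where you sketch the matrix Bernstein argument directly), and then apply the relevant noisy-power-method theorem to read off $n=LT$.
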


Recently, \citep{li2015rivalry} proposed a modified power method that achieves a logarithmic sample complexity improvement with respect to $1/\epsilon$.
Nevertheless, both bounds in \citep{noisy-power-method} and \citep{li2015rivalry} depend on the consecutive spectral gap $(\sigma_k - \sigma_{k+1})$,
which could be very small for real-world data distributions.
Built upon our analysis for the noisy power method, we obtain the following result for streaming PCA with improved gap dependencies:
\begin{thm}
	Fix $k\leq q\leq p\leq d$.
	Suppose $\mathcal D$ is a $(B,p)$-round distribution over $\mathbb R^d$. 
	Let $\sigma_1\geq\cdots\geq\sigma_d\geq 0$ be the singular values of the population covariance matrix $\mathbb E_{\vct z\sim\mathcal D}[\vct z\vct z^\top]$.
	If Algorithm \ref{algo:streaming} is run with $L=\Theta(\frac{\sigma_k}{\sigma_k-\sigma_{q+1}}\log(d/\epsilon))$ and $n$ satisfies
	%\footnote{In the $\widetilde\Omega(\cdot)$ notation we omit poly-logarithmic terms.}
	$$
	n = \widetilde\Omega\left(\frac{\sigma_k B^2p\log^2d}{(\sigma_k-\sigma_{q+1})^3d\epsilon^2}\right),
	$$
	then with probability at least 0.9 we have that $\|(\mat I-\mat X_L\mat X_L^\top)\mat U_k\|_2\leq\epsilon$.
	\label{thm_streaming_pca_improve}
\end{thm}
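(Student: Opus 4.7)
The plan is to reduce the streaming PCA algorithm to an instance of the noisy power method analyzed in Theorem~\ref{thm:new_main}, and then invoke matrix concentration inequalities to translate the i.i.d.\ sampling noise into the deterministic bounds on $\|\mat G_\ell\|_2$ and $\|\mat U_q^\top \mat G_\ell\|_2$ required there. Since QR factorization is invariant under positive scaling, the power update $\mat Y_\ell = \mat A_\ell \mat X_{\ell-1}$ is equivalent to $\mat Y_\ell = \widehat{\mat A}_\ell \mat X_{\ell-1}$ for $\widehat{\mat A}_\ell = \mat A_\ell/T$, which can be decomposed as $\widehat{\mat A}_\ell \mat X_{\ell-1} = \mat A \mat X_{\ell-1} + \mat G_\ell$ with $\mat A = \mathbb{E}_{\vct z\sim\mathcal D}[\vct z\vct z^\top]$ and noise $\mat G_\ell = (\widehat{\mat A}_\ell - \mat A)\mat X_{\ell-1}$. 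Algorithm~\ref{algo:streaming} thus fits the template of Algorithm~\ref{alg_noisy_power_method} exactly.

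Next I would bound $\|\mat G_\ell\|_2$ and $\|\mat U_q^\top \mat G_\ell\|_2$ using matrix Bernstein (or a truncated Chernoff-style argument as in the proof of Theorem~\ref{thm_streaming_pca_original}). Writing $\mat G_\ell$ as an average of $T$ i.i.d.\ centered summands $(\vct z_i\vct z_i^\top - \mat A)\mat X_{\ell-1}$, the $(B,p)$-round property controls both the per-sample norm $\|\mat X_{\ell-1}^\top\vct z_i\|_2 \lesssim \sqrt{Bp/d}\cdot\log d$ and, since $q\leq p$ means any $q$-dimensional projection is dominated by a $p$-dimensional one, also $\|\mat U_q^\top\vct z_i\|_2 \lesssim \sqrt{Bp/d}\cdot\log d$. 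Matrix Bernstein then yields, with high probability, bounds of the form
\begin{equation*}
\|\mat G_\ell\|_2 \leq \widetilde O\!\left(\sqrt{\frac{B^2 p\log^2 d}{Td}}\right),\qquad \|\mat U_q^\top \mat G_\ell\|_2 \leq \widetilde O\!\left(\sqrt{\frac{B^2 p\log^2 d}{Td}}\right),
\end{equation*}
and a union bound over the $L$ iterations makes these hold simultaneously with constant probability (say, $0.9$ after absorbing the failure probability of Theorem~\ref{thm:new_main}).

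Finally, I would invoke Theorem~\ref{thm:new_main} with target rank $k$ and intermediate rank $q$. To meet its noise tolerance condition $\|\mat G_\ell\|_2 = O(\epsilon(\sigma_k - \sigma_{q+1}))$ (the projected-noise condition is of the same order up to a $(\sqrt{p}-\sqrt{q-1})/(\tau\sqrt d)$ factor, which is easily accommodated by choosing $\tau$ a constant), it suffices to take
\begin{equation*}
T = \widetilde\Omega\!\left(\frac{B^2 p\log^2 d}{d\,\epsilon^2(\sigma_k-\sigma_{q+1})^2}\right),
\end{equation*}
and then $n = LT$ with $L = \Theta\!\left(\frac{\sigma_k}{\sigma_k-\sigma_{q+1}}\log(d/\epsilon)\right)$ gives exactly the stated sample complexity. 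Theorem~\ref{thm:new_main} then guarantees $\|(\mat I - \mat X_L\mat X_L^\top)\mat U_k\|_2 \leq \epsilon$.

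The main obstacle, and the only place the argument is genuinely technical, lies in the concentration step: one has to verify that the $(B,p)$-round tail bounds can be fed into a matrix Bernstein inequality to produce the right $\sqrt{B^2 p/(Td)}$ scaling for both $\|\mat G_\ell\|_2$ and $\|\mat U_q^\top\mat G_\ell\|_2$, including handling unbounded summands via truncation and absorbing logarithmic factors. Fortunately, this calculation is essentially identical to the one performed in \citep{noisy-power-method} for their Theorem~\ref{thm_streaming_pca_original}; the only structural change is that the downstream noisy power method bound now depends on $\sigma_k - \sigma_{q+1}$ rather than $\sigma_k - \sigma_{k+1}$, which is precisely what Theorem~\ref{thm:new_main} supplies.
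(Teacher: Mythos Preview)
Your proposal is correct and follows essentially the same approach as the paper: cast Algorithm~\ref{algo:streaming} as an instance of the noisy power method with $\mat G_\ell = (\widehat{\mat A}_\ell - \mat A)\mat X_{\ell-1}$, invoke the concentration argument from \citep{noisy-power-method} (the paper cites their Lemma~3.5 directly) to obtain the required per-iteration noise bounds, and then apply Theorem~\ref{thm:new_main} with the $(\sigma_k-\sigma_{q+1})$ gap to conclude. The only minor wrinkle is that your displayed bounds for $\|\mat G_\ell\|_2$ and $\|\mat U_q^\top\mat G_\ell\|_2$ are written identically, whereas the $(B,p)$-round property actually makes the projected noise smaller by an extra $\sqrt{p/d}$-type factor (which is what lets you meet the tighter $\|\mat U_q^\top\mat G_\ell\|_2$ condition in Theorem~\ref{thm:new_main}); but you already have the right ingredient in your bound on $\|\mat U_q^\top\vct z_i\|_2$, so this is only a presentational slip.
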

\begin{proof}
	Note that Algorithm \ref{algo:streaming} is a direct application of noisy power method with
	$\mat{G}_\ell = \left(\mat{A}-\mat{A}_\ell\right)\mat{X}_{\ell-1}$, where $\mat A=\mathbb E_{\vct z\sim\mathcal D}[\vct z\vct z^\top]$
	is the covariance matrix of the population distribution of interest.
	By Lemma 3.5 of~\citep{noisy-power-method}, we have that 
	\begin{align*}
	T = \widetilde{\Omega}\left(\frac{B^2p\log\left(d\right)}{\epsilon^2\left(\sigma_k - \sigma_{q+1}\right)^2}\right),
	\end{align*} 
	is sufficient to guarantee that $\mat{G}_\ell$ satisfy the conditions in Theorem~\ref{thm:new_main} with high probability. 
	Therefore, in total we need $n=LT=\widetilde\Omega(\frac{\sigma_k B^2p\log^2d}{(\sigma_k-\sigma_{q+1})^3d\epsilon^2})$ data points.
\end{proof}

\section{Conclusions and Future Work}
In this paper we give a novel analysis of spectral gap dependency for noisy power method, which partially solves a conjecture raised in~\citep{noisy-power-method} with additional mild conditions.
As a by product, we derive a spectral gap independent bound which partially solved another conjecture in~\citep{noisy-power-method}.
Furthermore, our analysis directly leads to improved utility guarantees and sample complexity for downstream applications such as distributed PCA, private PCA and streaming PCA problems.

To completely solve the two conjectures in~\citep{noisy-power-method}, we need a finer robustness analysis of $\mat{U}_{p-k}$ space.
\citep{wang2015improved} gave a related analysis, but only for the noiseless case.
Potentially, we may define a new function (like Eq.~\eqref{eq_rank_perturb} in our case) to characterize the convergence behavior, and show it shrinks multiplicatively at each iteration.

In parallel to power method based algorithms, Krylov iteration is another method shown to converge faster in the noiseless case~\citep{musco2015stronger}.
It is also interesting to give a noise tolerance analysis for Krylov iteration and apply it to downstream applications.
% Acknowledgments---Will not appear in anonymized version
%\acks{We thank a bunch of people.}

\bibliography{improvenpm_arxiv}
\bibliographystyle{abbrv}

\newpage
\appendix

\section{Proofs of technical lemmas in Sec.~\ref{sec:proof_sketch}}\label{appsec:proof}

\begin{lem}[Lemma \ref{thm:h_update_new}]
	If the noise matrix at each iteration satisfies
	\begin{align*}
	\norm{\mat{G}_\ell}_2 &\le c \epsilon\left(\sigma_k - \sigma_{q+1}\right), \quad
	\norm{\mat{U}_q^\top\mat{G}_\ell}_2 \le c \cdot \min\{\epsilon\left(\sigma_k - \sigma_{q+1}\right)\cos\theta_q(\mat U_q,\mat X_\ell), \sigma_q\cos\theta_q(\mat U_q,\mat X_\ell)\},
	\end{align*} for some sufficiently small absolute constant $0< c < 1$, define
	$$\rho := \frac{\sigma_{q+1}+C\epsilon\left(\sigma_k - \sigma_{q+1}\right)}{\sigma_k}.$$
	we then have \begin{align*}
	h_{\ell+1} - \frac{C\epsilon\left(\sigma_k - \sigma_{q+1}\right)}{\left(1-\rho\right)\sigma_k} \le\rho\left(h_{\ell} - \frac{C\epsilon\left(\sigma_k - \sigma_{q+1}\right)}{\left(1-\rho\right)\sigma_k} \right),
	\end{align*} for some sufficiently small global constant $0 < C < 1$.
\end{lem}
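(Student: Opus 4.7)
The plan is to track the evolution of $h_\ell$ across one noisy iteration and show it satisfies a one-step inequality $h_{\ell+1} \le \frac{\sigma_{q+1}}{\sigma_k}h_\ell + \frac{C\epsilon(\sigma_k-\sigma_{q+1})}{\sigma_k}$, then algebraically rearrange this into the stated affine contraction. Set $\mat{A}_\ell = \mat{U}_q^\top \mat{X}_\ell$, $\mat{B}_\ell = \mat{U}_{d-q}^\top \mat{X}_\ell$, $\mat{G}_1 = \mat{U}_q^\top \mat{G}_{\ell+1}$, and $\mat{G}_2 = \mat{U}_{d-q}^\top \mat{G}_{\ell+1}$. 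The eigendecomposition $\mat{A} = \mat{U}_q\mat\Sigma_q\mat{U}_q^\top + \mat{U}_{d-q}\mat\Sigma_{d-q}\mat{U}_{d-q}^\top$ yields $\mat{U}_q^\top \mat{Y}_{\ell+1} = \mat\Sigma_q(\mat{A}_\ell + \mat\Sigma_q^{-1}\mat{G}_1)$ and $\mat{U}_{d-q}^\top \mat{Y}_{\ell+1} = \mat\Sigma_{d-q}\mat{B}_\ell + \mat{G}_2$. I would first observe that $h_\ell$ depends only on $\mathrm{col}(\mat{X}_\ell)$, because $(\mat{C}\mat{O})^\dagger = \mat{O}^\top \mat{C}^\dagger$ for orthogonal $\mat{O}$ makes the ambiguity in the orthonormal basis of the iterate subspace cancel in the product $\mat{B}_\ell \mat{A}_\ell^\dagger \mat{M}_k$. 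Combined with the key algebraic identity $\mat\Sigma_q^{-1}\mat{M}_k = \mat{M}_k \mat\Sigma_k^{-1}$---which is precisely where the selector $\mat{M}_k$ converts the naive $\sigma_q^{-1}$ prefactor into the $\sigma_k^{-1}$ that appears in the final $\rho$---one rewrites $h_{\ell+1} = \|(\mat\Sigma_{d-q}\mat{B}_\ell + \mat{G}_2)(\mat{A}_\ell + \mat\Sigma_q^{-1}\mat{G}_1)^\dagger \mat{M}_k\mat\Sigma_k^{-1}\|_2$.

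For the perturbation analysis, the hypothesis $\|\mat{U}_q^\top \mat{G}_\ell\|_2 \le c\sigma_q \cos\theta_q(\mat{U}_q,\mat{X}_\ell)$ combined with $\|\mat{A}_\ell^\dagger\|_2 = 1/\cos\theta_q(\mat{U}_q,\mat{X}_\ell)$ yields $\|\mat\Sigma_q^{-1}\mat{G}_1 \mat{A}_\ell^\dagger\|_2 \le c < 1$, so the Neumann expansion $(\mat{A}_\ell + \mat\Sigma_q^{-1}\mat{G}_1)^\dagger = \mat{A}_\ell^\dagger\sum_{j\ge 0}(-\mat\Sigma_q^{-1}\mat{G}_1\mat{A}_\ell^\dagger)^j$ converges. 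Expanding and applying triangle inequalities produces four kinds of terms: the zeroth-order term $\mat\Sigma_{d-q}\mat{B}_\ell \mat{A}_\ell^\dagger \mat{M}_k\mat\Sigma_k^{-1}$, whose spectral norm is at most $(\sigma_{q+1}/\sigma_k) h_\ell$ by submultiplicativity; a $\mat{G}_2$-only numerator correction of norm $O(\|\mat{G}_\ell\|_2/\sigma_k) = O(\epsilon(\sigma_k-\sigma_{q+1})/\sigma_k)$; the first-order $\mat{G}_1$ cross-term $\mat\Sigma_{d-q}\mat{B}_\ell\mat{A}_\ell^\dagger\mat\Sigma_q^{-1}\mat{G}_1 \mat{A}_\ell^\dagger\mat{M}_k\mat\Sigma_k^{-1}$, bounded using the tighter hypothesis $\|\mat{U}_q^\top \mat{G}_\ell\|_2 \le c\epsilon(\sigma_k-\sigma_{q+1})\cos\theta_q$; and higher-order Neumann tails that sum geometrically. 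Summing gives the desired $h_{\ell+1} \le (\sigma_{q+1}/\sigma_k)h_\ell + C\epsilon(\sigma_k-\sigma_{q+1})/\sigma_k$.

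To convert this to the stated form, note that with $\rho = (\sigma_{q+1}+C\epsilon(\sigma_k-\sigma_{q+1}))/\sigma_k$ and $a = C\epsilon(\sigma_k-\sigma_{q+1})/((1-\rho)\sigma_k)$ one has $(1-\rho)a = C\epsilon(\sigma_k-\sigma_{q+1})/\sigma_k$, so $h_{\ell+1} \le \rho h_\ell + (1-\rho)a$, which is equivalent to $h_{\ell+1} - a \le \rho(h_\ell - a)$. The hardest part is controlling the first-order cross-term: the potentially large prefactor $\|\mat\Sigma_{d-q}\mat{B}_\ell\mat{A}_\ell^\dagger\|_2 = \sigma_{q+1}\tan\theta_q(\mat{U}_q,\mat{X}_\ell)$ (which blows up as $\cos\theta_q\to 0$) is tamed exactly by the $\cos\theta_q$ factor encoded in the hypothesis $\|\mat{U}_q^\top \mat{G}_\ell\|_2 \le c\epsilon(\sigma_k-\sigma_{q+1})\cos\theta_q$. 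The two separate noise hypotheses on $\|\mat{U}_q^\top \mat{G}_\ell\|_2$ play distinct roles---the $\epsilon$-dependent one controls the size of first-order corrections, while the $\sigma_q$-dependent one ensures convergence of the Neumann expansion---and keeping their bookkeeping straight across the matrix expansion is the main technical hurdle.
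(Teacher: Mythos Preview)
Your high-level plan---derive a one-step affine recursion $h_{\ell+1}\le \rho\, h_\ell + \text{const}$ and rearrange---matches the paper exactly, and you correctly isolate the crucial identity $\mat\Sigma_q^{-1}\mat{M}_k=\mat{M}_k\mat\Sigma_k^{-1}$ that converts $\sigma_q$ into $\sigma_k$ in the contraction rate. However, two technical steps fail as written.

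First, the Neumann expansion $(\mat{A}_\ell+\mat\Sigma_q^{-1}\mat{G}_1)^\dagger=\mat{A}_\ell^\dagger\sum_{j\ge0}(-\mat\Sigma_q^{-1}\mat{G}_1\mat{A}_\ell^\dagger)^j$ is false for the Moore--Penrose pseudo-inverse of a strictly wide matrix: with $\mat{A}_\ell=(1,0)$ and perturbation $\mat{E}=(0,t)$ your formula gives $(1,0)^\top$, whereas the true pseudo-inverse is $(1+t^2)^{-1}(1,t)^\top$. The right-hand side of your expansion is merely \emph{a} right-inverse, and since $\ker(\mat{A}_\ell+\mat{E})\not\subset\ker(\mat\Sigma_{d-q}\mat{B}_\ell+\mat{G}_2)$ in general, different right-inverses yield different products. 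The paper sidesteps this by writing $\mat{C}^\dagger=\mat{C}^\top(\mat{C}\mat{C}^\top)^{-1}$, taking the SVD $\mat{U}_q^\top\mat{X}_\ell=\widetilde{\mat U}\widetilde{\mat\Sigma}\widetilde{\mat V}^\top$, and applying Woodbury to the \emph{square} factor $(\mat{C}\mat{C}^\top)^{-1}$.

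Second, and more damaging, your bounds on the $\mat{G}_2$-only and cross terms implicitly assume $\|\mat{A}_\ell^\dagger\mat{M}_k\|_2=O(1)$. The naive estimate $\|\mat{A}_\ell^\dagger\|_2=1/\cos\theta_q$ is of order $\sqrt d$ and would inflate every additive term by that factor, destroying the recursion. The paper supplies this missing piece as a separate lemma, proving $\bigl\|(\mat{U}_q^\top\mat{X}_\ell)^\dagger\mat{M}_k\bigr\|_2\le 1+h_\ell$ via the splitting $\mat{X}_\ell(\mat{U}_q^\top\mat{X}_\ell)^\dagger\mat{M}_k=(\mat{U}_q\mat{U}_q^\top+\mat{U}_{d-q}\mat{U}_{d-q}^\top)\mat{X}_\ell(\mat{U}_q^\top\mat{X}_\ell)^\dagger\mat{M}_k$. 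This $(1+h_\ell)$ factor is also why the true multiplicative coefficient is $\rho=(\sigma_{q+1}+C\epsilon(\sigma_k-\sigma_{q+1}))/\sigma_k$ rather than your claimed $\sigma_{q+1}/\sigma_k$; your final rearrangement still works, but the intermediate one-step inequality you stated is too strong.
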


\begin{proof}
	First notice that \begin{align*}
	\mat{U}_q^\top	\left(\mat{A}\mat{X}_\ell+ \mat{G}_\ell\right)\mat{R}_{\ell+1}^{-1}\left(\mat{R}_{\ell+1}\left(\mat{U}_q^\top\left(\mat{A}\mat{X}_\ell+\mat{G}_\ell\right)\right)^{\dagger}\right) = \mat{I}_{q \times q}.
	\end{align*}
	Therefore, the pseudo-inverse of $\mat{U}_q^\top\left(\mat{A}\mat{X}_\ell+ \mat{G}_\ell\right)\mat{R}_{\ell+1}^{-1}$ is $\mat{R}_{\ell+1}\left(\mat{U}_q^\top\left(\mat{A}\mat{X}_\ell+\mat{G}_\ell\right)\right)^{\dagger}$. We can then write out $h_{\ell + 1}$ explicitly:
	\begin{align*}
	h_{\ell+1} & = \norm{\mat{U}_{d-q}^\top\mat{X}_{\ell+1}\left(\mat{U}_q^\top\mat{X}_{\ell+1}\right)^{\dagger} \begin{pmatrix}
		\mat{I}_{k \times k} \\
		\mat{0}
		\end{pmatrix}}_2 \\
	& = \norm{
		\mat{U}_{d-q}^\top 
		\left(\mat{A}\mat{X}_\ell+ \mat{G}_\ell\right)
		\mat{R}_{\ell+1}^{-1}
		\left(\mat{U}_{q}^\top
		\left(\mat{A}\mat{X}_\ell + \mat{G}_\ell\right)\mat{R}_{\ell+1}^{-1}
		\right)^{\dagger}
		\begin{pmatrix}
		\mat{I}_{k \times k}\\
		\mat{0}
		\end{pmatrix}		
	}_2\\
	& = \norm{
		\mat{U}_{d-q}^\top 
		\left(\mat{A}\mat{X}_\ell+ \mat{G}_\ell\right)
		\left(\mat{U}_{q}^\top
		\left(\mat{A}\mat{X}_\ell + \mat{G}_\ell\right)
		\right)^{\dagger}
		\begin{pmatrix}
		\mat{I}_{k \times k}\\
		\mat{0}
		\end{pmatrix}		
	}_2 \\
	& = \norm{
		\left(
		\mat{\Sigma}_{d-q}\mat{U}_{d-q}^\top\mat{X}_\ell + \mat{U}_{d-q}^\top\mat{G}_\ell
		\right)
		\left(
		\mat{\Sigma}_p \mat{U}_{q}^\top\mat{X}_\ell + \mat{U}_{q}\mat{G}_\ell
		\right)^{\dagger}
		\begin{pmatrix}
		\mat{I}_{k \times k}\\
		\mat{0}
		\end{pmatrix}
	}_2\\
	& = \norm{
		\left(
		\mat{\Sigma}_{d-q}\mat{U}_{d-q}^\top\mat{X}_\ell + \mat{U}_{d-q}^\top\mat{G}_\ell
		\right)
		\left(
		\mat{U}_{q}^\top\mat{X}_\ell + \mat{\Sigma}^{-1}_p \mat{U}_{q}\mat{G}_\ell
		\right)^{\dagger}
		\begin{pmatrix}
		\mat{\Sigma}_{k}\\
		\mat{0}
		\end{pmatrix}
	}_2.
	\end{align*} Now we focus on the pseudo-inverse in the above expression. Our analysis relies on the following singular value decomposition (SVD) of $\mat{U}_{q}^\top\mat{X}_\ell$:
	\begin{align*}
	\mat{U}_{q}^\top\mat{X}_\ell = \widetilde{\mat{U}}\widetilde{\mat \Sigma}\widetilde{\mat V}^\top \in \mat{R}^{p \times q}.
	\end{align*} For simplicity, define \begin{align*}
	\widetilde{\mat{P}} = \widetilde{\mat{U}}\widetilde{\mat{\Sigma}}.
	\end{align*} 
	Subsequently, we have that
	\begin{align*}
	\mat{U}_{q}^\top\mat{X}_\ell = \widetilde{\mat{P}} \widetilde{\mat{V}}^\top \qquad \text{and} \qquad \mat{X}_\ell^\top\mat{U}_{q}\widetilde{\mat{P}}^{-\top} = \widetilde{\mat{V}}.
	\end{align*}
	By definition of pseudo-inverse, we have 
	\begin{align*}
	&\left(\mat{U}_{q}^\top\mat{X}_\ell + \mat{\Sigma}_p^{-1}\mat{U}_{q}^\top\mat{G}_\ell\right)^{\dagger} \\
	=& \left(\mat{U}_{q}^\top\mat{X}_\ell + \mat{\Sigma}_p^{-1}\mat{U}_{q}^\top\mat{G}_\ell\right)^\top \left[\left(\mat{U}_{q}^\top\mat{X}_\ell + \mat{\Sigma}_p^{-1}\mat{U}_{q}^\top\mat{G}_\ell\right)\left(\mat{U}_{q}^\top\mat{X}_\ell + \mat{\Sigma}_p^{-1}\mat{U}_{q}^\top\mat{G}_\ell\right)^{\top}\right]^{-1}.
	\end{align*} The inversion in the above expression can be related to our assumptions of noise:
	\begin{align*}
	& \left[\left(\mat{U}_{q}^\top\mat{X}_\ell + \mat{\Sigma}_p^{-1}\mat{U}_{q}^\top\mat{G}_\ell\right)\left(\mat{U}_{q}^\top\mat{X}_\ell + \mat{\Sigma}_p^{-1}\mat{U}_{q}^\top\mat{G}_\ell\right)^{\top}\right]^{-1} \\
	= &\left[
	\left(\widetilde{\mat P}\widetilde{\mat{V}}^\top + \mat{\Sigma}_q \mat{U}_{q}^\top\mat{G}_\ell\right)
	\left(\widetilde{\mat{V}}\widetilde{\mat{P}}^\top + \mat{G}_\ell^\top\mat{U}_{q}\mat{\Sigma}_q^{-1}\right)
	\right]^{-1} \\
	= & \widetilde{\mat{P}}^{-\top}\left[
	\left(\widetilde{\mat{V}}^\top + \widetilde{\mat{P}}^{-1}\mat{\Sigma}_q^{-1}\mat{U}_{q}^\top\mat{G}_\ell\right)
	\left(\widetilde{\mat{V}} + \mat{G}_\ell^\top\mat{U}_{q} \mat{\Sigma}_q^{-1}\widetilde{\mat{P}}^{-\top}\right)
	\right]^{-1}\widetilde{\mat{P}}^{-1}\\
	= & \widetilde{\mat{P}}^{-\top}\left[ \mat{I} + \widetilde{\mat{V}}^\top\mat{G}_\ell^\top\mat{U}_{q} \mat{\Sigma}_q^{-1}\widetilde{\mat{P}}^{-\top} + \widetilde{\mat{P}}^{-1}\mat{\Sigma}_q^{-1}\mat{U}_{q}^\top\mat{G}_\ell\widetilde{\mat{V}}  + \widetilde{\mat{P}}^{-1}\mat{\Sigma}_q^{-1}\mat{U}_{q}^\top\mat{G}_\ell\mat{G}_\ell^\top\mat{U}_{q} \mat{\Sigma}_q^{-1}\widetilde{\mat{P}}^{-\top}
	\right]^{-1}\widetilde{\mat{P}}^{-1} \\
	= & \widetilde{\mat{P}}^{-\top} \left(\mat{I} - \left(\mat{I} + \mat{Y}\right)^{-1}\mat{Y}\right) \widetilde{\mat{P}}^{-1},
	\end{align*} where $\mat{Y} =  \widetilde{\mat{V}}^\top\mat{G}_\ell^\top\mat{U}_{q} \mat{\Sigma}_q^{-1}\widetilde{\mat{P}}^{-\top} + \widetilde{\mat{P}}^{-1}\mat{\Sigma}_q^{-1}\mat{U}_{q}^\top\mat{G}_\ell\widetilde{\mat{V}}  + \widetilde{\mat{P}}^{-1}\mat{\Sigma}_q^{-1}\mat{U}_{q}^\top\mat{G}_\ell\mat{G}_\ell^\top\mat{U}_{q} \mat{\Sigma}_q^{-1}\widetilde{\mat{P}}^{-\top}$ and the last equation is by Woodbury's identity. Based on our noise assumptions, we can bound $\mat{Y}$ as
	\begin{equation}
	\norm{\mat{Y}}_2  \le 2 \frac{\norm{\mat{U}_{q}^\top\mat{G}_\ell}_2}{\sigma_q \sigma_{\min}\left(\mat{U}_{q}^\top\mat{X}_\ell\right)} + \frac{\norm{\mat{U}_{q}^\top\mat{G}_\ell}_2^2}{\sigma_q^2 \sigma_{\min}^2\left(\mat{U}_{q}^\top\mat{X}_\ell\right)} \\
	\le c_1 \min\left\{\frac{\epsilon\left(\sigma_k - \sigma_{q+1}\right)}{\sigma_q},1\right\},
	\label{eq_y_bound_1}
	\end{equation}
	for some constant $0< c_1 <1$. 
	Subsequently, we have that
	\begin{equation}
	\norm{\left(\mat{I} + \mat{Y}\right)^{-1}\mat{Y}}_2 \le \frac{\norm{\mat{Y}}_2}{1-\norm{\mat{Y}}_2} \\
	\le c_2 \frac{\epsilon\left(\sigma_k - \sigma_{q+1}\right)}{\sigma_q},
	\label{eq_y_bound_2}
	\end{equation}
	for some constant $0 < c_2 < 1$. 
	Applying triangle inequality we obtain upper bounds on $h_{\ell+1}$:
	\begin{align*}
	h_{\ell+1} \le & \norm{\mat{\Sigma}_{d-q}\mat{U}_{d-q}^\top\mat{X}_\ell \left(\mat{U}_{q}^\top\mat{X}_\ell + \mat{\Sigma}_q^{-1}\mat{U}_{q}^\top\mat{G}_\ell\right)^{\dagger} \begin{pmatrix}
		\mat{\Sigma}_k^{-1}\\
		\mat{0}
		\end{pmatrix}}_2 \\
	+
	&\norm{\mat{U}_{d-q}\mat{G}_\ell \left(\mat{U}_{q}^\top\mat{X}_\ell + \mat{\Sigma}_q^{-1}\mat{U}_{q}^\top\mat{G}_\ell\right)^{\dagger} \begin{pmatrix}
		\mat{\Sigma}_k^{-1}\\
		\mat{0}
		\end{pmatrix}}_2.
	\end{align*}
	We next bound the two terms in the right-hand side of the above inequality separately.
	For the first term, we have that
	%For the first term, plug in the pseudo-inverse of $\left(\mat{U}_{q}^\top\mat{X}_\ell + \mat{\Sigma}_q^{-1}\mat{U}_{q}^\top\mat{G}_\ell\right)^{\dagger}$, based on the assumptions on the noise, the bounds for $\mat{Y}$, and the Lemma~\ref{thm:modifed_cos_tan_inequlity} below we have 
	\begin{align*}
	&\norm{\mat{\Sigma}_{d-q}\mat{U}_{d-q}^\top\mat{X}_\ell \left(\mat{U}_{q}^\top\mat{X}_\ell + \mat{\Sigma}_q^{-1}\mat{U}_{q}^\top\mat{G}_\ell\right)^{\dagger} \begin{pmatrix}
		\mat{\Sigma}_k^{-1}\\
		\mat{0}
		\end{pmatrix}}_2 \\
	= & 
	\left\|\mat{\Sigma}_{d-q}\mat{U}_{d-q}^\top\mat{X}_\ell
	\left[ \left(\mat{U}_{q}^\top\mat{X}_\ell\right)^{\dagger} +
	\mat{G}_\ell^\top\mat{U}_{q}\mat{\Sigma}_q^{-1}\widetilde{\mat{P}}^{-\top}\widetilde{\mat{P}}^{-1} \right.\right.\\
	&\left.\left.	+
	\left(\mat{U}_{q}^\top\mat{X}_\ell\right)^\top\widetilde{\mat{P}}^{-\top}\left(\mat{I}+\mat{Y}\right)^{-1}\mat{Y}\widetilde{\mat{P}}^{-1} + 
	\mat{G}_\ell^\top\mat{U}_{q}\mat{\Sigma}_q\widetilde{\mat{P}}^{-\top}\left(\mat{I}+\mat{Y}\right)^{-1}\mat{Y}\widetilde{\mat{P}}^{-1} \begin{pmatrix}
	\mat{\Sigma}_k^{-1} \\
	\mat{0}
	\end{pmatrix}
	\right]	
	\right\|_2\\
	\le & \frac{1}{\sigma_k}\left(
	\sigma_{q+1}h_\ell + \frac{c_1\sigma_{q+1}\epsilon\left(\sigma_k - \sigma_{q+1}\right)}{\sigma_q}\left(1+h_\ell\right) + \frac{c_2\sigma_{q+1}\epsilon\left(\sigma_k - \sigma_{q+1}\right)}{\sigma_q}\left(1+h_\ell\right) \right.\\
	&\left. + 
	\frac{c_1\sigma_{q+1}\epsilon\left(\sigma_k - \sigma_{q+1}\right)}{\sigma_q}\frac{c_2\epsilon\left(\sigma_k - \sigma_{q+1}\right)}{\sigma_q}\left(1+h_\ell\right)
	\right) \\
	\le& \frac{\sigma_{q+1}+c_4\epsilon\left(\sigma_k - \sigma_{q+1}\right)}{\sigma_k}h_\ell + \frac{c_4\epsilon\left(\sigma_k - \sigma_{q+1}\right)}{\sigma_k},
	\end{align*} 
	for some constant $0< c_4 < 1$.
	Here the second inequality is due to Eq.~(\ref{eq_y_bound_1},\ref{eq_y_bound_2}) and Lemma~\ref{thm:modified_cos_tan_inequality},
	Similarly, for the second term related to $\mat{U}_{d-q}\mat{G}_\ell$ we have that \begin{equation*}
	\norm{\mat{U}_{d-q}\mat{G}_\ell \left(\mat{U}_{q}^\top\mat{X}_\ell + \mat{\Sigma}_q^{-1}\mat{U}_{q}^\top\mat{G}_\ell\right)^{\dagger} \begin{pmatrix}
		\mat{I}_{k \times k}\\
		\mat{0}
		\end{pmatrix}}_2 
	\le \frac{c_5\epsilon\left(\sigma_k - \sigma_{q+1}\right)}{\sigma_k}h_\ell + \frac{c_5\epsilon\left(\sigma_k - \sigma_{q+1}\right)}{\sigma_k},
	\end{equation*} 
	for some constant $0 < c_5 < 1$. Merging these two bounds we arrive at our desired result.
\end{proof}

\begin{lem}\label{thm:modified_cos_tan_inequality}
	\begin{align*}
	\norm{\widetilde{\mat{P}}^{-1} \begin{pmatrix}
		\mat{I}_{k \times k}\\
		\mat{0}
		\end{pmatrix}}_2 \le 1 + h_\ell.
	\end{align*}
\end{lem}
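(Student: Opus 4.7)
The plan is to lift the $q \times k$ matrix $\widetilde{\mat{P}}^{-1}\begin{pmatrix}\mat{I}_{k\times k}\\ \mat{0}\end{pmatrix}$ to a $d \times k$ object via left-multiplication by $\mat{X}_\ell \widetilde{\mat{V}}$, which will turn out to be an isometry, and then split the resulting matrix using the orthogonal decomposition $\mat{I}_d = \mat{U}_q \mat{U}_q^\top + \mat{U}_{d-q}\mat{U}_{d-q}^\top$. After the split, the first piece will have spectral norm exactly $1$ and the second will have spectral norm exactly $h_\ell$, so that the triangle inequality immediately yields the desired bound.

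Concretely, first I would set $\mat{B} := \mat{X}_\ell \widetilde{\mat{V}} \in \mathbb{R}^{d\times q}$ and observe that $\mat{B}^\top \mat{B} = \widetilde{\mat{V}}^\top \mat{X}_\ell^\top \mat{X}_\ell \widetilde{\mat{V}} = \mat{I}_q$, since $\mat{X}_\ell$ has orthonormal columns and $\widetilde{\mat{V}}$ is the right singular basis of $\mat{U}_q^\top \mat{X}_\ell$. Thus left-multiplication by $\mat{B}$ preserves spectral norm, and
\[
\left\|\widetilde{\mat{P}}^{-1}\begin{pmatrix}\mat{I}_{k\times k}\\ \mat{0}\end{pmatrix}\right\|_2 = \left\|\mat{B}\,\widetilde{\mat{P}}^{-1}\begin{pmatrix}\mat{I}_{k\times k}\\ \mat{0}\end{pmatrix}\right\|_2.
\]
Using the SVD $\mat{U}_q^\top \mat{X}_\ell = \widetilde{\mat{U}}\widetilde{\mat{\Sigma}}\widetilde{\mat{V}}^\top$ together with $\widetilde{\mat{P}} = \widetilde{\mat{U}}\widetilde{\mat{\Sigma}}$, a direct computation then gives two identities: $\mat{U}_q^\top \mat{B}\widetilde{\mat{P}}^{-1} = \widetilde{\mat{U}}\widetilde{\mat{\Sigma}}\widetilde{\mat{V}}^\top \widetilde{\mat{V}}\widetilde{\mat{\Sigma}}^{-1}\widetilde{\mat{U}}^\top = \mat{I}_q$, and $\mat{U}_{d-q}^\top \mat{B}\widetilde{\mat{P}}^{-1} = \mat{U}_{d-q}^\top \mat{X}_\ell (\mat{U}_q^\top \mat{X}_\ell)^\dagger$, where the second uses the SVD expression $\widetilde{\mat{V}}\widetilde{\mat{\Sigma}}^{-1}\widetilde{\mat{U}}^\top$ for the pseudo-inverse of $\mat{U}_q^\top \mat{X}_\ell$.

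Applying $\mat{I}_d = \mat{U}_q \mat{U}_q^\top + \mat{U}_{d-q}\mat{U}_{d-q}^\top$ to $\mat{B}\widetilde{\mat{P}}^{-1}\begin{pmatrix}\mat{I}_{k\times k}\\ \mat{0}\end{pmatrix}$ and invoking the two identities gives
\[
\mat{B}\widetilde{\mat{P}}^{-1}\begin{pmatrix}\mat{I}_{k\times k}\\ \mat{0}\end{pmatrix} = \mat{U}_q \begin{pmatrix}\mat{I}_{k\times k}\\ \mat{0}\end{pmatrix} + \mat{U}_{d-q}\mat{U}_{d-q}^\top \mat{X}_\ell (\mat{U}_q^\top \mat{X}_\ell)^\dagger \begin{pmatrix}\mat{I}_{k\times k}\\ \mat{0}\end{pmatrix}.
\]
The first summand has spectral norm $1$ because it is just the first $k$ columns of the orthonormal matrix $\mat{U}_q$, and the second has spectral norm exactly $h_\ell$ by Eq.~\eqref{eq_rank_perturb} (the outer $\mat{U}_{d-q}$ is an isometry so can be dropped for norm purposes). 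Triangle inequality then produces $1 + h_\ell$, as claimed. I do not foresee a real obstacle in carrying this out: the only tacit assumption is the invertibility of $\widetilde{\mat{P}}$, equivalently $\cos\theta_q(\mat{U}_q, \mat{X}_\ell) > 0$, which is implicit throughout Sec.~\ref{sec:proof_sketch} because $h_\ell$ and the $\cos\theta_q$-dependent noise conditions in Lemma~\ref{thm:h_update_new} are only meaningful in that regime.
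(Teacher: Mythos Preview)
Your proposal is correct and follows essentially the same route as the paper: both arguments reduce to bounding $\bigl\|\mat{X}_\ell(\mat{U}_q^\top\mat{X}_\ell)^\dagger\begin{pmatrix}\mat{I}_{k\times k}\\\mat{0}\end{pmatrix}\bigr\|_2$ and then split via $\mat{I}_d=\mat{U}_q\mat{U}_q^\top+\mat{U}_{d-q}\mat{U}_{d-q}^\top$ to obtain $1+h_\ell$. Your version is slightly cleaner in that you reach $\mat{X}_\ell(\mat{U}_q^\top\mat{X}_\ell)^\dagger$ in a single isometry step $\mat{B}=\mat{X}_\ell\widetilde{\mat{V}}$, whereas the paper inserts $\widetilde{\mat{V}}^\top\widetilde{\mat{V}}=\mat{I}$ and $\mat{X}_\ell^\top\mat{X}_\ell=\mat{I}$ in two stages with intermediate (in fact equality) bounds, but the substance is identical.
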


\begin{proof}
	\begin{align*}
	\norm{\widetilde{\mat{P}}^{-1}\begin{pmatrix}
		\mat{I}_{k \times k}\\
		\mat{0}
		\end{pmatrix}}_2
	&= \norm{
		\left(\widetilde{\mat{U}}\widetilde{\mat{\Sigma}}\right)^{-1}\begin{pmatrix}
		\mat{I}_{k \times k}\\
		\mat{0}
		\end{pmatrix}
	}_2
	= \norm{
		\widetilde{\mat{\Sigma}}^{-1}\widetilde{\mat{U}}^\top \begin{pmatrix}
		\mat{I}_{k \times k}\\
		\mat{0}
		\end{pmatrix}			
	}_2\\
	& = \norm{
		\widetilde{\mat{V}}^\top\widetilde{\mat{V}}\widetilde{\mat{\Sigma}}^{-1}\widetilde{\mat{U}}\begin{pmatrix}
		\mat{I}_{k \times k}\\
		\mat{0}
		\end{pmatrix}
	}_2
	\le   \norm{
		\widetilde{\mat{V}}\widetilde{\mat{\Sigma}}^{-1}\widetilde{\mat{U}}\begin{pmatrix}
		\mat{I}_{k \times k}\\
		\mat{0}
		\end{pmatrix}
	}_2 
	= \norm{
		\left(\mat{U}_{q}^\top\mat{X}_\ell^\top\right)^{\dagger} \begin{pmatrix}
		\mat{I}_{k \times k}\\
		\mat{0}
		\end{pmatrix}
	}_2\\
	%			=& 	\norm{\mat{I }\left(\mathbf{U}_1^\top \mat{X}_l\right)^{-1}\begin{pmatrix}
	%				\mat{I} \\
	%				\mat{0}
	%				\end{pmatrix}}_2 \\
	&= \norm{\mat{X}_\ell^\top \mat{X}_\ell\left(\mathbf{U}_1^\top \mat{X}_l\right)^{\dagger}\begin{pmatrix}
		\mat{I}_{k \times k} \\
		\mat{0}
		\end{pmatrix}}_2 
	\le \norm{ \mat{X}_\ell\left(\mathbf{U}_1^\top \mat{X}_l\right)^{\dagger}\begin{pmatrix}
		\mat{I}_{k \times k} \\
		\mat{0}
		\end{pmatrix}}_2 \\
	&= \norm{\left(\mat{U}_q\mat{U}_q^\top + \mat{U}_{d-q} \mat{U}_{d-q}^\top\right)\mat{X}_\ell\left(\mathbf{U}_1^\top \mat{X}_l\right)^{\dagger}\begin{pmatrix}
		\mat{I}_{k \times k} \\
		\mat{0}
		\end{pmatrix}}_2 \\
	&\le 1 + \norm{\mat{U}^\top_2\mat{X}_\ell\left(\mathbf{U}_1^\top \mat{X}_\ell\right)^{-1}\begin{pmatrix}
		\mat{I}_{k \times k} \\
		\mat{0}
		\end{pmatrix}}_2
	=  1 + h_\ell.
	\end{align*}
\end{proof}

\begin{lem}[Lemma~\ref{thm:h_initialization}]
	With all but $\tau^{-\Omega\left(p+1-q\right)} + e^{-\Omega\left(d\right)}$ probability, we have thta
	$$
	h_0 \leq \tan\theta_q(\mat U_q, \mat X_0) \leq \frac{\tau\sqrt{d}}{\sqrt{p}-\sqrt{q-1}}.
	$$
\end{lem}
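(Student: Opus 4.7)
The plan is to handle the two inequalities separately. The first inequality $h_0\le \tan\theta_q(\mat U_q,\mat X_0)$ is essentially a submultiplicativity statement: the truncation operator $\bigl(\begin{smallmatrix}\mat I_{k\times k}\\ \mat 0\end{smallmatrix}\bigr)\in\mathbb R^{q\times k}$ has spectral norm exactly $1$, so by the definition of $h_0$ in Eq.~(\ref{eq_rank_perturb}) and the expression $\tan\theta_q(\mat U_q,\mat X_0)=\|(\mat U_{d-q}^\top\mat X_0)(\mat U_q^\top\mat X_0)^{\dagger}\|_2$, I simply apply $\|MN\|_2\le\|M\|_2\|N\|_2$ to peel off the truncation operator. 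This step is immediate and I would dispose of it in one line.

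The second inequality $\tan\theta_q(\mat U_q,\mat X_0)\le \tau\sqrt{d}/(\sqrt{p}-\sqrt{q-1})$ requires random matrix concentration. The first observation is that QR factorization preserves column spans, so $\mat X_0$ and the initial Gaussian $\mat G_0$ span the same subspace, giving $\tan\theta_q(\mat U_q,\mat X_0)=\tan\theta_q(\mat U_q,\mat G_0)$. Thus I can replace $\mat X_0$ by $\mat G_0$ in the $\tan$ expression and work with a raw $d\times p$ standard Gaussian matrix, for which rotational invariance implies that $\mat U_q^\top\mat G_0$ and $\mat U_{d-q}^\top\mat G_0$ are independent standard Gaussians of sizes $q\times p$ and $(d-q)\times p$ respectively.

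Now I would bound the two pieces in $\|(\mat U_{d-q}^\top\mat G_0)(\mat U_q^\top\mat G_0)^{\dagger}\|_2$ separately. For the numerator, the largest singular value of the $(d-q)\times p$ Gaussian matrix $\mat U_{d-q}^\top\mat G_0$ is at most $O(\sqrt{d})$ with probability $1-e^{-\Omega(d)}$ by standard concentration (Gordon's inequality combined with Gaussian Lipschitz concentration). For the denominator, I need a lower bound on $\sigma_{\min}(\mat U_q^\top\mat G_0)$ of a $q\times p$ Gaussian matrix; the standard Chen--Dongarra style bound (also used in \citep{noisy-power-method}) gives $\sigma_{\min}(\mat U_q^\top\mat G_0)\ge (\sqrt{p}-\sqrt{q-1})/\tau$ with probability at least $1-\tau^{-\Omega(p+1-q)}$. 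Combining these via submultiplicativity and a union bound yields the stated bound with the claimed failure probability.

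The potentially delicate step is getting exactly the $(\sqrt{p}-\sqrt{q-1})/\tau$ form with the tail $\tau^{-\Omega(p+1-q)}$, since off-the-shelf smallest-singular-value bounds come in several forms. I would cite the same non-asymptotic estimate that \citep{noisy-power-method} uses (essentially Lemma~2.6 there) applied to the Gaussian matrix $\mat U_q^\top\mat G_0$; the rest is a clean triangle/union bound, with no serious computation.
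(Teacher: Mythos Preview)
Your proposal is correct and matches the paper's proof essentially line for line: the first inequality is exactly the submatrix/submultiplicativity observation (the paper phrases it as ``a sub-matrix of $\mat U_{d-q}^\top\mat X_0(\mat U_q^\top\mat X_0)^\dagger$''), and the second inequality is handled by directly citing the same random-matrix bound from \citep{noisy-power-method} (Lemma~2.5 there). Your extra remarks about QR preserving column span and rotational invariance are precisely what underlies that cited lemma, so the only difference is that you unpack slightly more of what the paper treats as a black-box citation.
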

\begin{proof}
	Notice that $\mat{U}_{d-q}^\top\mat{X}_0\left(\mat{U}_q^\top\mat{X}_0\right)^{\dagger}\begin{pmatrix}
	\mat{I}_{k \times k}\\
	\mat{0}
	\end{pmatrix}$ 
	is a sub-matrix of $\mat{U}_{d-q}^\top\mat{X}_0\left(\mat{U}_q^\top\mat{X}_0\right)^\dagger$. Therefore,
	\begin{equation*}
	h_0 = \norm{\mat{U}_{d-q}^\top\mat{X}_0\left(\mat{U}_q^\top\mat{X}_0\right)^\dagger\begin{pmatrix}
		\mat{I}_{k \times k}\\
		\mat{0}
		\end{pmatrix}}_2
	\le \norm{\mat{U}_{d-q}^\top\mat{X}_0\left(\mat{U}_q^\top\mat{X}_0\right)^\dagger}_2
	= \tan\theta_q\left(\mat U_q, \mat{X}_0\right).
	\end{equation*}
	By $\mat X_0$ is the column space of a $d\times p$ random Gaussian matrix,
	Lemma 2.5 in \citep{noisy-power-method} yields
	\begin{align*}
	\tan\theta_q\left(\mat U_q, \mat{X}_0 \right) \le \frac{\tau\sqrt{d}}{\sqrt{p}-\sqrt{q-1}}
	\end{align*} with all but $\tau^{-\Omega\left(p+1-q\right)} + e^{-\Omega\left(d\right)}$ probability.
\end{proof}

\begin{lem}[Lemma \ref{lem:h_good}]
	If $h_L\leq\epsilon/4$ then $\tan\theta_k(\mat U_k, \mat X_L)\leq \epsilon$.
\end{lem}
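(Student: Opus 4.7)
The plan is to prove the sharper, unit-free inequality $\tan\theta_k(\mat U_k,\mat X_L)\le h_L$, from which $h_L\le\epsilon/4\le\epsilon$ immediately gives the lemma with factor $4$ to spare. The conceptual key is to recognize that both $\tan^2\theta_k$ and $h_L^2$ can be rewritten, after subtracting $1$, as squared spectral norms of two different right-inverses of $\mat A:=\mat U_k^\top\mat X_L$, after which the comparison reduces to the minimum-spectral-norm property of the Moore--Penrose pseudoinverse.

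Concretely, define $\mat Z_A:=\mat A^\dagger=(\mat U_k^\top\mat X_L)^\dagger$ and $\mat Z:=(\mat U_q^\top\mat X_L)^\dagger\begin{pmatrix}\mat I_{k\times k}\\ \mat 0\end{pmatrix}$. Using the block identity $\mat A=\begin{pmatrix}\mat I_{k\times k}&\mat 0\end{pmatrix}\mat U_q^\top\mat X_L$ together with $\mat M\mat M^\dagger=\mat I_q$ for $\mat M:=\mat U_q^\top\mat X_L$ (which holds provided $\mat M$ has full row rank $q$; this is the regime in which $h_L$ carries its intended geometric meaning, and is preserved throughout the iteration by Lemma~\ref{thm:h_update_new}), one checks that $\mat A\mat Z=\mat I_{k\times k}$ and $\mat U_q^\top\mat X_L\mat Z=\begin{pmatrix}\mat I_k\\ \mat 0\end{pmatrix}$. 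Writing $\mat U_{d-k}^\top$ as the vertical stack of the middle-eigenvector block (columns $k+1,\dots,q$ of $\mat U$) on top of $\mat U_{d-q}^\top$, the top $(q-k)\times k$ block of $\mat U_{d-k}^\top\mat X_L\mat Z$ vanishes, hence $\|\mat U_{d-k}^\top\mat X_L\mat Z\|_2=\|\mat U_{d-q}^\top\mat X_L\mat Z\|_2=h_L$.

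Next, for any right-inverse $\mat Y$ of $\mat A$ and any unit vector $v\in\R^k$, orthonormality of the columns of $\mat X_L$, the splitting $\mat I_d=\mat U_k\mat U_k^\top+\mat U_{d-k}\mat U_{d-k}^\top$, and $\mat A\mat Y v=v$ combine to give
\begin{equation*}
\|\mat Y v\|_2^2=\|\mat X_L\mat Y v\|_2^2=\|\mat U_k^\top\mat X_L\mat Y v\|_2^2+\|\mat U_{d-k}^\top\mat X_L\mat Y v\|_2^2=1+\|\mat U_{d-k}^\top\mat X_L\mat Y v\|_2^2.
\end{equation*}
Taking the supremum over unit $v$ yields $\|\mat Y\|_2^2=1+\|\mat U_{d-k}^\top\mat X_L\mat Y\|_2^2$. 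Specializing to $\mat Y=\mat Z_A$ produces $\tan^2\theta_k(\mat U_k,\mat X_L)=\|\mat Z_A\|_2^2-1$; specializing to $\mat Y=\mat Z$ and using the identity of the previous paragraph produces $h_L^2=\|\mat Z\|_2^2-1$.

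The final ingredient is that $\mat Z_A$ minimizes spectral norm over right-inverses of $\mat A$: writing any right-inverse as $\mat Z=\mat Z_A+\mat N$ with $\mat A\mat N=\mat 0$, the columns of $\mat N$ lie in $\mathrm{null}(\mat A)$ and those of $\mat Z_A$ in $\mathrm{row}(\mat A)$, so $\|\mat Z v\|_2^2=\|\mat Z_A v\|_2^2+\|\mat N v\|_2^2\ge\|\mat Z_A v\|_2^2$ pointwise in $v$, whence $\|\mat Z\|_2\ge\|\mat Z_A\|_2$. Plugging into the two identities above gives $\tan^2\theta_k=\|\mat Z_A\|_2^2-1\le\|\mat Z\|_2^2-1=h_L^2$, finishing the proof. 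The main obstacle is not a technical one but a framing one: one must spot the right-inverse dictionary for $h_L$; after that, the only fragile hypothesis is that $\mat U_q^\top\mat X_L$ has full row rank, which is harmless in the theorem's regime since the noise conditions keep $\cos\theta_q(\mat U_q,\mat X_\ell)>0$.
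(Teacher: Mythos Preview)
Your proof is correct and in fact establishes the sharp inequality $\tan\theta_k(\mat U_k,\mat X_L)\le h_L$, which is stronger than what the lemma asks for. The argument is also genuinely different from the paper's.

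The paper proceeds by right-multiplying $\mat X_L$ by a carefully chosen matrix so that the product has the block form $\mat U\begin{pmatrix}\mat I&\mat 0&\mat 0\\ \mat 0&\mat I&\mat 0\\ \mat H_1&\mat H_2&\mat H_3\end{pmatrix}$ with $\|\mat H_1\|_2=h_L$, then performs a QR factorization of this product, extracts the projector onto the first $k$ columns, and bounds $\sin\theta_k(\mat U_k,\mat X_L)$ term by term via Woodbury's identity. This is a heavier construction, it loses the constant (hence the factor $4$ in the hypothesis), and its final conversion from $\sin\theta_k$ to $\tan\theta_k$ is a bit loose. Your route sidesteps all of this by observing that both $\tan^2\theta_k$ and $h_L^2$ equal $\|\cdot\|_2^2-1$ for two different right-inverses of $\mat U_k^\top\mat X_L$, and then invoking the minimum-spectral-norm property of the Moore--Penrose pseudoinverse. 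The key identity $\|\mat Y\|_2^2=1+\|\mat U_{d-k}^\top\mat X_L\mat Y\|_2^2$ for any right-inverse $\mat Y$ is the elegant pivot, and the vanishing of the middle $(q-k)\times k$ block of $\mat U_{d-k}^\top\mat X_L\mat Z$ is exactly what connects $h_L$ to this framework. Your approach is shorter, yields the sharp constant, and makes transparent \emph{why} $h_L$ dominates $\tan\theta_k$; the paper's QR-based argument, while more cumbersome here, is closer in spirit to the residual-norm analyses of \cite{halko2011finding,mgu-subspace-iteration} and may be more extensible to bounding quantities like $\|\mat A-\mat X_L\mat X_L^\top\mat A\|$ directly. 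The only hypothesis you rely on beyond the statement is that $\mat U_q^\top\mat X_L$ has full row rank $q$, which is indeed guaranteed in the theorem's regime and is also tacitly used by the paper's own proof.
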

\begin{proof}
	First, we write $\mat{X}_L$ as \begin{align*}
	\mat{X}_L = \mat{U}\mat{U}^\top\mat{X}_L = \mat{U}\begin{pmatrix}
	\mat{U}_q^\top\mat{X}_L\\
	\mat{U}_{d-q}^\top\mat{X}_L
	\end{pmatrix},
	\end{align*}
	where $\mat U$ is the orthogonal space of $\mat A$.
	% Notice that right multiplied by a full rank square matrix does not change the space. 
	Next, consider a $p\times q$ matrix  $\widehat{\mat X}$ that is orthogonal to $\left(\mat{U}_q^\top\mat{X}_L\right)$; that is, $\left(\mat{U}_q^\top\mat{X}_L\right)\widehat{\mat{X}} = \mat{0}$.
	Following the techniques introduced in \citep{mgu-subspace-iteration,halko2011finding}, we consider the following matrix:
	\begin{align*}
	\mat{X} =
	\left(\mat{U}_q^\top\mat{X}_L\right)^\dagger & \widehat{\mat{X}}.
	\end{align*}
	By definition, we then have that
	\begin{align*}
	\mat{X}_L\mat{X} = \mat{U} \begin{pmatrix}
	\mat{I} & \mat{0} &\mat{0} \\
	\mat{0} & \mat{I} & \mat{0} \\
	\mat{H}_1 & \mat{H}_2 & \mat{H}_3
	\end{pmatrix},
	\end{align*}
	where
	\begin{align*}
	\mat{H}_1 &= \left(\mat{U}_{d-q}^\top\mat{X}_L\right)\left(\mat{U}_q^\top\mat{X}_L\right)^\dagger\begin{pmatrix}
	\mat{I}_{k \times k}\\
	\mat{0}
	\end{pmatrix},\\
	\mat{H}_2 &= 
	\left(\mat{U}_{d-q}^\top\mat{X}_L\right)\left(\mat{U}_q^\top\mat{X}_L\right)^\dagger\begin{pmatrix}
	\mat{0}\\
	\mat{I}_{\left(q-k\right) \times \left(q-k\right)}
	\end{pmatrix},\\
	\mat{H}_3 &= \left(\mat{U}_{d-q}^\top\mat{X}_L\right)\widehat{\mat{X}}.
	\end{align*} 
	Note that $\norm{\mat{H}_1}_2 = h_L$ by definition.
	Under the condition of the lemma $h_L\leq\epsilon/4$, we have that $\|\mat H_1\|_2\leq\epsilon/4$.
	%Therefore, after sufficient iterations, we have $h_L \le \epsilon/4$. Now we show how the bound of $\mat{H}_1$ can be transformed into error bounds we are aiming for. Consider the $QR$-factorization of $\mat{X}_L\mat{X}$:
	We next consider an alternative QR decomposition of $\mat X_L\mat X$:
	\begin{equation*}
	\mat{X}_L\mat{X}  = \widehat{\mat{Q}}\widehat{\mat{R}} 
	= \begin{pmatrix}
	\widehat{\mat{Q}}_1 & \widehat{\mat{Q}}_2 & \widehat{\mat{Q}}_3 
	\end{pmatrix}
	\begin{pmatrix}
	\widehat{\mat{R}}_{11} & \widehat{\mat{R}}_{12} & \widehat{\mat{R}}_{13} \\
	& \widehat{\mat{R}}_{22} & \widehat{\mat{R}}_{23} \\
	&  & \widehat{\mat{R}}_{33}
	\end{pmatrix}.
	\end{equation*}
	Because the projection matrix $\widehat{\mat Q}$ is unique, we have $\widehat{\mat{Q}}\widehat{\mat{Q}}^\top = \mat{X}_L\mat{X}_L^\top$.
	Also note that the above QR decomposition embeds another smaller one:
	\begin{align*}
	\mat{U}\begin{pmatrix}
	\mat{I}\\
	\mat{0}\\
	\mat{H}_1
	\end{pmatrix} = \widehat{\mat{Q}}_1 \widehat{\mat{R}}_{11}.
	\end{align*} 
	The projection operator orthogonal to $\widehat{\mat{Q}}_1$ can be expressed as
	\begin{align*}
	\mat{I} -\widehat{\mat{Q}}_1\widehat{\mat{Q}}_1^\top 
	& = \mat{U}\mat{U}^\top - \widehat{\mat{Q}}_1\widehat{\mat{Q}}_1^\top \\
	& = \mat{U}\begin{pmatrix}
	\mat{I} \\
	\mat{0}\\
	\mat{H}_1
	\end{pmatrix}\widehat{\mat{R}}_{11}^{-1}\widehat{\mat{R}}_{11}^{-\top}\begin{pmatrix}
	\mat{I} & \mat{0} & \mat{H}_1^\top
	\end{pmatrix}\mat{U}^\top\\
	& = \mat{U}\begin{pmatrix}
	\mat{I} - \left(\mat{I}+\mat{H}_1^\top\mat{H}_1\right)^{-1} & \mat{0} & -\left(\mat{I}+\mat{H}_1^\top\mat{H}_1\right)\mat{H}_1^\top\\
	\mat{0} & \mat{I} & \mat{0} \\
	-\mat{H}_1\left(\mat{I}+\mat{H}_1^\top\mat{H}_1\right)^{-1} & \mat{0} & \mat{I} - \mat{H}_1\left(\mat{I}+\mat{H}_1^\top\mat{H}_1\right)^{-1}\mat{H}_1^\top
	\end{pmatrix}\mat{U}^\top,
	\end{align*}
	where in the last equation we use the fact that $\widehat{\mat{R}}_{11}\widehat{\mat{R}}_{11} = \left(\mat{I} + \mat{H}_1^\top\mat{H}_1\right)^{-1}$. 
	The principal angle $\theta_k(\mat U_k, \mat X_L)$ can then be bounded as
	\begin{align*}
	\sin\theta_k\left(\mat{U}_k,\mat{X}_L\right) & = \norm{\left(\mat{I} - \mat{X}_L\mat{X}_L^\top\right)\mat{U}_k}_2\\
	& = \norm{\left(\mat{I} - \widehat{\mat{Q}}\widehat{\mat{Q}}^\top\right)\mat{U}_k}_2 \\
	& \le \norm{\left(\mat{I} - \widehat{\mat{Q}}_1\widehat{\mat{Q}}_1^\top\right)\mat{U}_k}_2\\
	& = \norm{
		\mat{U}\begin{pmatrix}
		\mat{I} - \left(\mat{I}+\mat{H}_1^\top\mat{H}_1\right)^{-1} & \mat{0} & -\left(\mat{I}+\mat{H}_1^\top\mat{H}_1\right)\mat{H}_1^\top\\
		\mat{0} & \mat{I} & \mat{0} \\
		-\mat{H}_1\left(\mat{I}+\mat{H}_1^\top\mat{H}_1\right)^{-1} & \mat{0} & \mat{I} - \mat{H}_1\left(\mat{I}+\mat{H}_1^\top\mat{H}_1\right)^{-1}\mat{H}_1^\top
		\end{pmatrix}\mat{U}^\top\mat{U}_k
	}_2\\
	& = \norm{
		\mat{U}\begin{pmatrix}
		\mat{I} - \left(\mat{I}+\mat{H}_1^\top\mat{H}_1\right)^{-1} & \mat{0} & -\left(\mat{I}+\mat{H}_1^\top\mat{H}_1\right)\mat{H}_1^\top\\
		\mat{0} & \mat{I} & \mat{0} \\
		-\mat{H}_1\left(\mat{I}+\mat{H}_1^\top\mat{H}_1\right)^{-1} & \mat{0} & \mat{I} - \mat{H}_1\left(\mat{I}+\mat{H}_1^\top\mat{H}_1\right)^{-1}\mat{H}_1^\top
		\end{pmatrix}\begin{pmatrix}
		\mat{I}_{k \times k}\\
		\mat{0}\\
		\mat{0}
		\end{pmatrix}
	}_2\\
	& \le \norm{\mat{I} - \left(\mat{I}+\mat{H}_1^\top\mat{H}_1\right)^{-1}}_2 + \norm{\mat{H}_1\left(\mat{I}+\mat{H}_1^\top\mat{H}_1\right)^{-1}}_2,
	\end{align*} where the first inequality is due to the space projected by $\widehat{\mat{Q}}_1\widehat{\mat{Q}}_1^\top$ is a subspace of that by $\widehat{\mat{Q}}\widehat{\mat{Q}}^\top$.
	By Woodbury's identity, we have that
	\begin{align*}
	\norm{\mat{I} - \left(\mat{I}+\mat{H}_1^\top\mat{H}_1\right)^{-1}}_2 = \norm{\mat{H}_1^\top\left(\mat{I}+\mat{H}_1\mat{H}_1^\top\right)\mat{H}_1}_2 \le \frac{(\epsilon/4)^2}{1-(\epsilon/4)^2} \le \epsilon/2.
	\end{align*}For the other term, we have 
	\begin{align*}
	\norm{\mat{H}_1\left(\mat{I}+\mat{H}_1^\top\mat{H}_1\right)^{-1}}_2 \le \frac{\epsilon/4}{1-(\epsilon/4)^2} \le \epsilon/2.
	\end{align*} Combing these two inequalities, we get 
	\begin{align*}
	\sin\theta_k\left(\mat{U}_k, \mat{X}_L \right)  \le \epsilon.
	\end{align*}
	The proof is then completed by noting that $\sin\theta_k\left(\mat{U}_k, \mat{X}_L \right)  \le \epsilon/2$ yields
	$$\tan\theta_k\left(\mat{U}_k, \mat{X}_L \right)  = \frac{\sin\theta_k\left(\mat{U}_k, \mat{X}_L \right) }{\sqrt{1-\sin^2\left(\mat{U}_k, \mat{X}_L\right) }} \le \epsilon.$$
\end{proof}

\begin{lem}\label{lem:angle_preserving}
	Fix $0< \gamma < 1$. 
	If at each iteration $\ell$ the noise matrix $\mat G_\ell$ satisfies
	\begin{align*}
	\norm{\mat{G}_\ell}_2 = O\left(\gamma \sigma_q\right) \quad \text{and} \quad \norm{\mat{U}_q^\top\mat{G}_\ell}_2 = O\left(\frac{\sqrt{p}-\sqrt{q-1}}{\tau\sqrt{d}}\cdot \gamma \sigma_q\right),
	\end{align*} 
	then for all $\ell = O\left(1/\gamma\right)$, 
	the following holds with probability all but $\tau^{-\Omega\left(p+1-q\right)} + e^{-\Omega\left(d\right)}$ probability:
	\begin{equation*}
	\tan\theta_q\left(\mat{U}_q,\mat{X}_\ell\right) = O\left(\frac{\tau\sqrt{d}}{\sqrt{p}-\sqrt{q-1}}\right), \;\;\;\;\;\;
	\cos\theta_q\left(\mat{U}_q,\mat{X}_\ell\right) = \Omega\left(\frac{\sqrt{p}-\sqrt{q-1}}{\tau\sqrt{d}}\right).
	\end{equation*}
\end{lem}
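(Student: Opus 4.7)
The plan is to propagate the bound $T := \tau\sqrt{d}/(\sqrt{p}-\sqrt{q-1})$ on $\tan\theta_q(\mat U_q, \mat X_\ell)$ through all $O(1/\gamma)$ iterations by means of a one-step perturbation bound and a short induction. At initialization, since $\mat X_0$ is the column span of a $d\times p$ random Gaussian matrix, the same concentration result used in Lemma~\ref{thm:h_initialization} (Lemma 2.5 of \citep{noisy-power-method}) gives $\tan\theta_q(\mat U_q, \mat X_0) \le T$ with probability at least $1-\tau^{-\Omega(p+1-q)}-e^{-\Omega(d)}$, and therefore $\cos\theta_q(\mat U_q,\mat X_0) = 1/\sqrt{1+\tan^2\theta_q(\mat U_q,\mat X_0)} = \Omega(1/T)$.

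For the one-step step, I would write the QR update $\mat X_{\ell+1}\mat R_{\ell+1}=\mat A\mat X_\ell + \mat G_\ell$ and use the formula
\begin{align*}
\tan\theta_q(\mat U_q, \mat X_{\ell+1}) = \left\|\left(\mat\Sigma_{d-q}\mat U_{d-q}^\top \mat X_\ell + \mat U_{d-q}^\top \mat G_\ell\right)\left(\mat\Sigma_q\mat U_q^\top \mat X_\ell + \mat U_q^\top \mat G_\ell\right)^{\dagger}\right\|_2.
\end{align*}
Factoring the invertible $\mat\Sigma_q$ out of the right-hand factor and expanding $(\mat U_q^\top\mat X_\ell + \mat\Sigma_q^{-1}\mat U_q^\top\mat G_\ell)^\dagger$ around $(\mat U_q^\top\mat X_\ell)^\dagger$ via the Woodbury identity, in the same spirit as the proof of Lemma~\ref{thm:h_update_new}, yields
\begin{align*}
\tan\theta_q(\mat U_q, \mat X_{\ell+1}) \le \frac{\sigma_{q+1}}{\sigma_q}\tan\theta_q(\mat U_q, \mat X_\ell) + O\!\left(\frac{\|\mat G_\ell\|_2}{\sigma_q\cos\theta_q(\mat U_q,\mat X_\ell)}\right) + O\!\left(\frac{\|\mat U_q^\top\mat G_\ell\|_2\,\tan\theta_q(\mat U_q,\mat X_\ell)}{\sigma_q\cos\theta_q(\mat U_q,\mat X_\ell)}\right).
\end{align*}

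Next I would run an induction. Assume $\tan\theta_q(\mat U_q,\mat X_\ell) \le 2T$, so $\cos\theta_q(\mat U_q,\mat X_\ell)=\Omega(1/T)$. The hypotheses of the lemma then give $\|\mat G_\ell\|_2/(\sigma_q\cos\theta_q) = O(\gamma T)$ and $\|\mat U_q^\top\mat G_\ell\|_2/(\sigma_q\cos\theta_q)=O(\gamma)$, which reduce the recursion to $\tan\theta_q(\mat U_q,\mat X_{\ell+1}) \le (1+O(\gamma))\tan\theta_q(\mat U_q,\mat X_\ell) + O(\gamma T)$. Unrolling this affine recurrence gives $\tan\theta_q(\mat U_q,\mat X_\ell) \le T(1+O(\gamma))^\ell + O(\gamma T)\cdot\ell\cdot (1+O(\gamma))^\ell$; with $\ell=O(1/\gamma)$ both exponentials equal $e^{O(1)}$, so the right-hand side stays $O(T)$ and, by choosing the hidden constants sufficiently small, remains below $2T$, closing the induction. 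The cosine lower bound is immediate from $\cos\theta_q = 1/\sqrt{1+\tan^2\theta_q}$.

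The main obstacle is the tight interplay between the projected-noise assumption and the cosine. Because $\cos\theta_q$ is only $\Omega(1/T)$ — polynomially small in $\tau$, $d$, and $1/(\sqrt{p}-\sqrt{q-1})$ — the naive bound $\|\mat U_q^\top\mat G_\ell\|_2 \le \|\mat G_\ell\|_2 = O(\gamma\sigma_q)$ inflates the multiplicative coefficient to $(1+O(\gamma T))$ per step and the recursion blows up after $o(1/(\gamma T))$ iterations instead of $O(1/\gamma)$. The sharper assumption $\|\mat U_q^\top\mat G_\ell\|_2 = O(\gamma\sigma_q/T)$ is exactly what reduces this coefficient back to $(1+O(\gamma))$ and makes $L=O(1/\gamma)$ iterations cost only a constant-factor inflation of $T$. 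Verifying this delicate book-keeping, together with the Woodbury-based linearization of $(\mat U_q^\top\mat X_\ell + \mat\Sigma_q^{-1}\mat U_q^\top\mat G_\ell)^\dagger$ under the weak lower bound $\sigma_{\min}(\mat U_q^\top\mat X_\ell)=\Omega(1/T)$, is the technical crux of the argument.
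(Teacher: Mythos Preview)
Your proposal is correct and follows the same overall strategy as the paper: start from the Gaussian initialization bound $\tan\theta_q(\mat U_q,\mat X_0)\le T$, derive an affine one-step recursion of the form $\tan\theta_q(\mat U_q,\mat X_{\ell+1})\le (1+O(\gamma))\tan\theta_q(\mat U_q,\mat X_\ell)+O(\gamma T)$, and then unroll over $\ell=O(1/\gamma)$ iterations so that the accumulated factor is $e^{O(1)}$. Your identification of the crux --- that the projected-noise hypothesis $\|\mat U_q^\top\mat G_\ell\|_2=O(\gamma\sigma_q/T)$ is exactly what is needed to keep the multiplicative coefficient $1+O(\gamma)$ rather than $1+O(\gamma T)$ --- is precisely the point.

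The one place you and the paper differ is in how the one-step bound is obtained. You propose to expand $(\mat U_q^\top\mat X_\ell+\mat\Sigma_q^{-1}\mat U_q^\top\mat G_\ell)^\dagger$ via Woodbury, mirroring the machinery of Lemma~\ref{thm:h_update_new}. The paper instead uses the variational characterization
\[
\tan\theta_q(\mat U_q,\mat X_\ell)=\min_{\mat\Pi}\max_{\|\mat w\|=1,\ \mat\Pi\mat w=\mat w}\frac{\|\mat U_{d-q}^\top\mat X_\ell\mat w\|_2}{\|\mat U_q^\top\mat X_\ell\mat w\|_2},
\]
takes the minimizing $\mat\Pi^\star$ for $\mat X_\ell$ as a test projection for $\mat X_{\ell+1}$, and bounds numerator and denominator directly. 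This avoids the Woodbury bookkeeping entirely and is a bit cleaner, but your route works just as well and has the advantage of reusing computations already carried out for Lemma~\ref{thm:h_update_new}.
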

\begin{proof}
	By Lemma~\ref{thm:h_initialization}, the tangent of the $q$th principal angle between $\mat U_q$ and $\mat X_0$ can be bounded as
	\begin{equation}
	\tan\theta_q(\mat U_q,\mat X_0) \leq \frac{\tau\sqrt{d}}{\sqrt{p}-\sqrt{q-1}}
	\label{eq_tan_init}
	\end{equation}
	with high probability.
	We also consider the following inequality that upper bounds $\tan\theta_q(\mat U_q,\mat X_\ell)$ in terms of $\tan\theta_q(\mat U_q,\mat X_0)$:
	\begin{equation}
	\tan\theta_q\left(\mat{U}_q,\mat{X}_\ell\right) + \frac{c_1}{c_1+c_3}\le \left(\frac{1+c_1\gamma}{1-c_3\gamma}\right)^\ell\left(\tan\theta_q\left(\mat{U}_q,\mat{X}_0\right) +  \frac{c_1}{c_1+c_3}\right).
	\label{eq_tan_shrinkage}
	\end{equation}
	Here $c_1,c_2,c_3>0$ are universal constants.
	Eq.~(\ref{eq_tan_init}) and Eq.~(\ref{eq_tan_shrinkage}) imply $\tan\theta_q(\mat U_q,\mat X_\ell) = O(\frac{\tau\sqrt{d}}{\sqrt{p}-\sqrt{q-1}})$ for all $\ell=O(1/\gamma)$ because
	$$
	\left(\frac{1+c_1\gamma}{1-c_3\gamma}\right)^\ell  =  \left(1+\frac{\left(c_1+c_3\right)\gamma}{1-c_3\gamma}\right)^{\frac{\left(c_1+c_3\right)\gamma}{1-c_3\gamma}\cdot\left(\frac{1-c_3\gamma}{\left(c_1+c_3\right)\gamma}\right)\cdot \ell} 
	\le \exp\left(\frac{1-c_3\gamma}{\left(c_1+c_3\right)\gamma}\cdot \ell\right) 
	= O(1),
	$$
	if $\ell=O(1/\gamma)$. $\cos\theta_q(\mat U_q,\mat X_\ell)$ can subsequently be lower bounded as
	$$
	\cos\left(\mat{U}_q, \mat{X}_\ell\right) \ge \frac{1}{1+\tan\left(\mat{U}_q, \mat{X}_\ell\right)} = \Omega\left( \frac{\sqrt{p}-\sqrt{q-1}}{\tau\sqrt{d}}\right).
	$$
	
	The rest of the proof is dedicated to prove Eq.~(\ref{eq_tan_shrinkage}) via mathematical induction.
	When $\ell = 0$, the statement is trivially true.
	Suppose for Eq.~(\ref{eq_tan_shrinkage}) is true for all $\ell = 1,\cdots,s$. %the lemma is true. Now consider when $\ell = s+1 \le L$. First notice that, since \begin{align*}
	We want to prove that Eq.~(\ref{eq_tan_shrinkage}) is also true for $\ell=s+1$.
	By definition, 
	\begin{equation*}
	\tan\theta_q\left(\mat{U}_q, \mat{X}_\ell\right) 
	= \min_{\mat\Pi\in\mathcal P_p}\max_{\norm{\mat{w}}=1,\mat{\Pi} \mat{w}=\mat{w}} \frac{\norm{\mat{U}_{d-q}^\top\mat{X}_\ell \mat{w}}}{\norm{\mat{U}_q^\top\mat{X}_\ell \mat{w}}}
	= \max_{\norm{\mat{w}}=1,\mat{\Pi}^\star \mat{w}=\mat{w}} \frac{\norm{\mat{U}_{d-q}^\top\mat{X}_\ell \mat{w}}}{\norm{\mat{U}_q^\top\mat{X}_\ell \mat{w}}}.
	\end{equation*} 
	Here $\mathcal P_p$ denotes the set of all projection matrices on $\mathbb R^p$ and $\mat\Pi^*$ is the projection matrix that achieves the minimum value in the second term.
	We then have \begin{align}
	\tan\theta_q\left(\mat{U}_q,\mat{X}_{\ell+1}\right) &= \tan\theta_q\left(\mat{U}_q, \mat{A}\mat{X}_\ell+\mat{G}_\ell\right) \nonumber\\
	& = \min_{\mat{\Pi} \in \mathcal{P}_p}\max_{\norm{\mat{w}}_2=1, \mat{\Pi}\mat{w}=\mat{w}} \frac{\norm{\mat{U}_{d-q}^\top\left(\mat{A}\mat{X}_\ell + \mat{G}_\ell\right)\mat{w}}}{\norm{\mat{U}_q^\top\left(\mat{A}\mat{X}_\ell+\mat{G}_\ell\right)\mat{w}}} \nonumber\\
	& \le \max_{\norm{\mat{w}}_2=1,\mat{\Pi}^\star\mat{w}=\mat{w}} \frac{
		\norm{
			\mat{\Sigma}_{d-q}\mat{U}_{d-q}^\top\mat{X}_\ell\mat{w}
		}_2 + \norm{\mat{U}_{d-q}\mat{G}_\ell \mat{w}}_2
	}{
	\norm{
		\mat{\Sigma}_q\mat{U}_q^\top\mat{X}_\ell\mat{w}
	}_2  - \norm{\mat{U}_q^\top\mat{G}_\ell\mat{w}}_2
} \nonumber \\
& \le \max_{\norm{\mat{w}}_2=1,\mat{\Pi}^\star\mat{w}=\mat{w}} \frac{
	\sigma_{q+1}\norm{\mat{U}_{d-q}^\top\mat{X}_\ell\mat{w}}_2/\norm{\mat{U}_q^\top\mat{X}_\ell w}_2 + \norm{\mat{G}_\ell}_2/\norm{\mat{U}_q^\top\mat{X}_\ell \mat{w}}_2
}{\sigma_q - \norm{
	\mat{U}_q^\top\mat{G}_\ell \mat{w}
}_2/ \norm{\mat{U}_q^\top \mat{X}_\ell \mat{w}}_2
} \label{eqn:principal_angel}
\end{align}
By definition of the principal angles, we have \begin{align*}
\max_{\norm{\mat{w}}_2=1,\mat{\Pi}^\star\mat{w}=\mat{w}} \norm{\mat{U}_{d-q}^\top\mat{X}_\ell\mat{w}}_2/\norm{\mat{U}_q^\top\mat{X}_\ell w}_2 = \tan\left(\mat{U}_q, \mat{X}_\ell\right), \\ 
\max_{\norm{\mat{w}}_2=1,\mat{\Pi}^\star\mat{w}=\mat{w}} 
\frac{1}{\norm{\mat{U}_q^\top \mat{X}_\ell \mat{w}}_2} = \frac{1}{\cos\left(\mat{U}_q, \mat{X}_\ell\right)} \le 1+ \tan\left(\mat{U}_q, \mat{X}_\ell\right).
\end{align*} 
Also, conditions on the noise matrices $\mat G_\ell$ read 
\begin{align*}
\norm{\mat{G}_\ell}_2 &= \le c_1\gamma\sigma_q, \;\;\;\;\;\;
\norm{\mat{U}_q^\top\mat{G}_\ell}_2 \le c_3\gamma \sigma_q \cos\left(\mat{U}_q, \mat{X}_\ell\right).
\end{align*} 
Plugging these inequalities into Eq.~(\ref{eqn:principal_angel}), we obtain
\begin{align*}
\tan\left(\mat{U}_q, \mat{X}_{\ell+1}\right) & \le \frac{\sigma_{q+1} \tan\left(\mat{U}_q,\mat{X}_\ell\right) + c_1\gamma\left(1+\tan\left(\mat{U}_q, \mat{X}_\ell\right)\right)}{\sigma_q - c_3\gamma\sigma_q} \\
& \le \left(\frac{1+c_1\gamma}{1-c_3\gamma}\right)\tan\left(\mat{U}_q, \mat{X}_\ell\right) + \frac{c_1\gamma}{1-c_3\gamma} \\
&\leq \left(\frac{1+c_1\gamma}{1-c_3\gamma}\right)^\ell\left(\tan\theta_q\left(\mat{U}_q,\mat{X}_0\right) +  \frac{c_1}{c_1+c_3}\right),
\end{align*} 
where the last inequality is due to induction hypothesis placed on Eq.~(\ref{eq_tan_shrinkage}).
\end{proof}

\begin{cor}\label{thm:angle_preserving}
	Fix $\epsilon = O\left(\frac{\sigma_q}{\sigma_k}\cdot\min\left\{\frac{1}{\log\left(\frac{\sigma_k}{\sigma_q}\right)},\frac{1}{\log\left(\tau d\right)}\right\}\right)$.
	Suppose at each iteration the noise matrix $\mat G_\ell$ satisfies \begin{align*}
	\norm{\mat{G}_\ell}_2 = O\left(\epsilon\left(\sigma_k - \sigma_{q+1}\right)\right) \quad \text{and} \quad \norm{\mat{U}_q^\top\mat{G}_\ell} = O\left(\frac{\sqrt{p}-\sqrt{q-1}}{\tau\sqrt{d}}\cdot \min\{\epsilon\left(\sigma_k - \sigma_{q+1}\right), \sigma_q\}\right),
	\end{align*} 
	then for all $\ell = O\left(\frac{\sigma_k}{\sigma_k - \sigma_{q+1}}\log\left(\frac{\tau d}{\epsilon}\right)\right)$
	the following holds with all but $\tau^{-\Omega\left(p+1-q\right)} + e^{-\Omega\left(d\right)}$ probability:
	\begin{equation*}
	\tan\theta_q\left(\mat{U}_q,\mat{X}_\ell\right) = O\left(\frac{\tau\sqrt{d}}{\sqrt{p}-\sqrt{q-1}}\right),\;\;\;\;\;\;
	\cos\theta_q\left(\mat{U}_q,\mat{X}_\ell\right) = \Omega\left(\frac{\sqrt{p}-\sqrt{q-1}}{\tau\sqrt{d}}\right).
	\end{equation*}
\end{cor}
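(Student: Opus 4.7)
The plan is to reduce Corollary \ref{thm:angle_preserving} to Lemma \ref{lem:angle_preserving} via a judiciously chosen value of the parameter $\gamma$. Specifically, I would set
\[
\gamma \;=\; \Theta\!\left(\frac{\epsilon(\sigma_k-\sigma_{q+1})}{\sigma_q}\right),
\]
so that $\gamma\sigma_q = \Theta(\epsilon(\sigma_k-\sigma_{q+1}))$. With this identification, the two noise hypotheses of the corollary directly match (up to absolute constants) the two noise hypotheses of Lemma \ref{lem:angle_preserving}: the bound $\|\mat G_\ell\|_2 = O(\epsilon(\sigma_k-\sigma_{q+1}))$ is exactly $O(\gamma\sigma_q)$, while the bound on $\|\mat U_q^\top\mat G_\ell\|_2$ in the corollary is $O\!\left(\frac{\sqrt{p}-\sqrt{q-1}}{\tau\sqrt{d}}\cdot \min\{\epsilon(\sigma_k-\sigma_{q+1}),\sigma_q\}\right)$, which is no larger than $O\!\left(\frac{\sqrt{p}-\sqrt{q-1}}{\tau\sqrt{d}}\cdot\gamma\sigma_q\right)$ as required by the lemma.

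Once the noise conditions are matched, it remains only to verify that the iteration budget of the corollary lies inside the validity window $\ell = O(1/\gamma)$ of the lemma. Plugging in the chosen $\gamma$, this validity window becomes
\[
\frac{1}{\gamma} \;=\; \Theta\!\left(\frac{\sigma_q}{\epsilon(\sigma_k-\sigma_{q+1})}\right),
\]
so the condition $L = O\!\left(\frac{\sigma_k}{\sigma_k-\sigma_{q+1}}\log\!\frac{\tau d}{\epsilon}\right) \le O(1/\gamma)$ is equivalent to
\[
\epsilon\,\sigma_k\,\log\!\frac{\tau d}{\epsilon} \;\le\; C\,\sigma_q
\]
for an absolute constant $C$. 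Splitting $\log(\tau d/\epsilon) = \log(\tau d) + \log(1/\epsilon)$, this is guaranteed by the two-branch upper bound $\epsilon = O\!\left(\frac{\sigma_q}{\sigma_k}\cdot\min\{1/\log(\sigma_k/\sigma_q),\,1/\log(\tau d)\}\right)$ assumed in the corollary: the branch $\epsilon \le O(\sigma_q/(\sigma_k\log(\tau d)))$ controls the $\log(\tau d)$ piece, while the branch $\epsilon \le O(\sigma_q/(\sigma_k\log(\sigma_k/\sigma_q)))$ controls the $\log(1/\epsilon)$ piece (since any $\epsilon$ satisfying this branch admits $\epsilon\log(1/\epsilon) \le O(\sigma_q/\sigma_k)$ through a routine monotonicity argument, using that $x\log(1/x)$ is increasing on a suitable neighbourhood of the origin). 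With both pieces bounded, the desired inequality $L \le O(1/\gamma)$ follows, and invoking Lemma \ref{lem:angle_preserving} yields the stated tangent and cosine bounds for every $\ell$ in the iteration window, with the stated failure probability $\tau^{-\Omega(p+1-q)} + e^{-\Omega(d)}$.

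The only mildly subtle step I expect is the bookkeeping in the last paragraph: showing that $\epsilon\log(1/\epsilon) = O(\sigma_q/\sigma_k)$ follows from $\epsilon \le O(\sigma_q/(\sigma_k\log(\sigma_k/\sigma_q)))$. This is not hard, but it does require observing that $\log(1/\epsilon)$ is dominated (up to constants) by $\log(\sigma_k/\sigma_q)$ in the relevant regime, so that substituting the upper bound on $\epsilon$ into $\epsilon\log(1/\epsilon)$ yields the right order. Everything else is either a direct quotation of Lemma \ref{lem:angle_preserving} or a trivial constant rescaling of the noise hypotheses.
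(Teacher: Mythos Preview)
Your proposal is correct and follows essentially the same approach as the paper: the paper's proof is the single line ``Apply Lemma~\ref{lem:angle_preserving} with $\gamma=\min\{\frac{\epsilon(\sigma_k-\sigma_{q+1})}{\sigma_q}, 1\}$,'' and your choice $\gamma=\Theta(\epsilon(\sigma_k-\sigma_{q+1})/\sigma_q)$ agrees with this once you note that the assumed bound on $\epsilon$ already forces $\epsilon(\sigma_k-\sigma_{q+1})/\sigma_q<1$, so the cap at $1$ is inactive. You supply the verification that the iteration budget fits inside $O(1/\gamma)$, which the paper leaves implicit; your bookkeeping there is sound (the clean way to see it is to set $M=\max\{\log(\sigma_k/\sigma_q),\log(\tau d)\}$, use $\epsilon\le c\,\sigma_q/(\sigma_k M)$, and bound $\epsilon\log(1/\epsilon)$ via monotonicity of $x\log(1/x)$ together with $\log(1/r)\le M$ and $\log M\le M$).
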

\begin{proof}
	Apply Lemma~\ref{lem:angle_preserving} with $\gamma=\min\{\frac{\epsilon(\sigma_k-\sigma_{q+1})}{\sigma_q}, 1\}$.
\end{proof}

\section{Proof of Theorem \ref{thm:gap_independent}}\label{appsec:gapfree}

\begin{proof}
	Define $m = \argmax_{i}\{\sigma_i - \sigma_{k+1} \ge \epsilon\sigma_{k+1}\}$. If $m=0$, then we are done since $\norm{\mat{A} - \mat{X}_L\mat{X}_L^\top \mat{A}}_2\le \norm{\mat{A}}_2 \le \sigma_1 \le \left(1+\epsilon\right)\sigma_{k+1} = \left(1+\epsilon\right)\norm{\mat{A} -\mat{A}_k}_2$. Otherwise, consider the case that our target rank is $m$, and the leading rank-$k$ subspace. By our definition on $m$ and noise conditions,
	we have \begin{align*}
	\norm{\mat{G}}_2 & = O\left(\epsilon^2\sigma_{k+1}\right) = O\left(\epsilon\left(\sigma_m - \sigma_{k+1}\right)\right); \\
	\norm{\mat{U}_k^\top \mat{G}}_2 & = O \left( \frac{\epsilon^2\left(\sqrt{p}-\sqrt{k-1}\right)\sigma_{k+1}}{\tau\sqrt{d}} \right) = O \left( \frac{\epsilon\left(\sqrt{p}-\sqrt{k-1}\right)\left(\sigma_m -\sigma_{k+1}\right)}{\tau\sqrt{d}} \right).
	\end{align*} 
	Next, by Lemma~\ref{thm:angel_preserving_gap_independent}, for all $\ell = O\left(\frac{1}{\epsilon^2}\right)$
	the cosine principal angle $\cos\theta_q(\mat U_q,\mat X_\ell)$ can be lower bounded as
	\begin{align*}
	\cos\left(\mat{U}_k,\mat{X}_\ell\right) = \Omega\left(\frac{\sqrt{p}-\sqrt{k-1}}{\tau \sqrt{d}}\right).
	\end{align*}
	Note also that $\frac{\sigma_m}{\sigma_m - \sigma_{k+1}}\log\left(\frac{\tau d}{\epsilon}\right) \lesssim \frac{1}{\epsilon}\log\left(\frac{\tau d}{\epsilon }\right) \lesssim L$.
	Using the same argument as in Appendix~\ref{appsec:proof}, we have $\tan\theta_m\left(\mat{U}_m, \mat{X}_L\right) \le \epsilon$. 
	Applying Theorem 9.1 of~\citep{halko2011finding}, we obtain \begin{align*}
	\norm{\mat{A}- \mat{X}_L\mat{X}_L^\top \mat{A}}_2^2 & \le \left(1+ \tan\theta_m\left(\mat{U}_m, \mat{X}_l\right)^2 \right) \norm{\mat{A}-\mat{A}_m}_2^2 \\
	& \le \left(1+\epsilon^2\right)\sigma_{m+1}^2 \\
	& \le \left(1+\epsilon^2\right)\left(1+\epsilon\right)^2\sigma_{k+1}^2.
	\end{align*} 
	Rescaling the accuracy parameter $\epsilon$ we prove the desired result.
\end{proof}

\begin{lem}\label{thm:angel_preserving_gap_independent}
	Fix $\epsilon = O\left(1/{\log\left(\tau d\right)}\right)$.
	If at each iteration the noise matrix $\mat G_\ell$ satisfies 
	\begin{align*}
	\norm{\mat{G}_\ell}_2 = O\left(\epsilon^2\sigma_k\right) \quad \text{and} \quad \norm{\mat{U}_q^\top\mat{G}_\ell} = O\left(\frac{\sqrt{p}-\sqrt{q-1}}{\tau\sqrt{d}}\cdot \epsilon^2\sigma_k\right),
	\end{align*} 
	then for all $\ell = O\left(1/{\epsilon^2}\right)$ the following holds with all but $\tau^{-\Omega\left(p+1-q\right)} + e^{-\Omega\left(d\right)}$ probability:
	\begin{equation*}
	\tan\theta_q\left(\mat{U}_q,\mat{X}_\ell\right) = O\left(\frac{\tau\sqrt{d}}{\sqrt{p}-\sqrt{q-1}}\right), \;\;\;\;\;\;
	\cos\theta_q\left(\mat{U}_q,\mat{X}_\ell\right) = \Omega\left(\frac{\sqrt{p}-\sqrt{q-1}}{\tau\sqrt{d}}\right).
	\end{equation*}
\end{lem}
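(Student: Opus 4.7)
The plan is to obtain Lemma~\ref{thm:angel_preserving_gap_independent} as a direct invocation of the gap-dependent angle preservation lemma, Lemma~\ref{lem:angle_preserving}, with the parameter choice $\gamma := \epsilon^2$. This is the same template that Corollary~\ref{thm:angle_preserving} uses, only with the gap factor $(\sigma_k - \sigma_{q+1})$ there replaced by the singular value $\sigma_k$ here. The first step is to verify that the gap-independent noise hypotheses $\|\mat{G}_\ell\|_2 = O(\epsilon^2\sigma_k)$ and $\|\mat{U}_q^\top\mat{G}_\ell\|_2 = O(\tfrac{\sqrt{p}-\sqrt{q-1}}{\tau\sqrt{d}}\,\epsilon^2\sigma_k)$ imply the noise conditions of Lemma~\ref{lem:angle_preserving}, namely $\|\mat{G}_\ell\|_2 = O(\gamma\sigma_q)$ and $\|\mat{U}_q^\top\mat{G}_\ell\|_2 = O(\tfrac{\sqrt{p}-\sqrt{q-1}}{\tau\sqrt{d}}\,\gamma\sigma_q)$. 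In the intended application inside the proof of Theorem~\ref{thm:gap_independent}, $q$ plays the role of the intermediate rank $k$ from the original problem so that $\sigma_k = \sigma_q$ literally, and the two hypotheses coincide; if $\sigma_k\neq\sigma_q$, one simply replaces $\gamma$ by $\min\{\epsilon^2\sigma_k/\sigma_q, 1\}$ to absorb the ratio.

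Having matched the hypotheses, Lemma~\ref{lem:angle_preserving} immediately yields both desired conclusions: the tangent upper bound $\tan\theta_q(\mat{U}_q,\mat{X}_\ell) = O(\tau\sqrt{d}/(\sqrt{p}-\sqrt{q-1}))$, and the corresponding cosine lower bound $\cos\theta_q(\mat{U}_q,\mat{X}_\ell) = \Omega((\sqrt{p}-\sqrt{q-1})/(\tau\sqrt{d}))$, for every $\ell = O(1/\gamma) \supseteq O(1/\epsilon^2)$. The failure probability $\tau^{-\Omega(p+1-q)} + e^{-\Omega(d)}$ is inherited from the random-Gaussian initialization bound of Lemma~\ref{thm:h_initialization}, which is invoked identically in both lemmas. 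Internally, the induction in Lemma~\ref{lem:angle_preserving} unrolls to a geometric growth factor $\bigl((1+c_1\gamma)/(1-c_3\gamma)\bigr)^{\ell}$, which is uniformly $O(1)$ whenever $\ell\gamma = O(1)$; with $\gamma = \epsilon^2$ and $\ell = O(1/\epsilon^2)$, this condition is exactly met, so no fresh recursion needs to be run.

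The role of the side assumption $\epsilon = O(1/\log(\tau d))$ is a bookkeeping one: it ensures that the constant hidden inside the accumulated growth factor is a true universal constant independent of $\tau$ and $d$, so that the leading $\tau\sqrt{d}/(\sqrt{p}-\sqrt{q-1})$ factor in the tangent bound is preserved up to a pure constant (and in particular the resulting cosine lower bound is of the stated order). I do not anticipate any substantive obstacle; the proof is essentially a change-of-variables on top of Lemma~\ref{lem:angle_preserving}, exactly parallel to how Corollary~\ref{thm:angle_preserving} was obtained. The only minor check is to confirm that $\gamma \le 1$ under the stated smallness condition on $\epsilon$, which is immediate, and to re-derive the additive term from the recurrence to absorb it into the leading $O(\cdot)$ constant.
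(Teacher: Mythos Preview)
Your proposal is correct and follows the same route as the paper: the paper's proof is the one-liner ``Apply Lemma~\ref{lem:angle_preserving} with $q=k$ and $\gamma=\epsilon^2$'' (the printed ``$p=k$'' is evidently a typo for ``$q=k$''), which is exactly the specialization you describe. Your additional remarks about absorbing a possible $\sigma_k/\sigma_q$ ratio and about the role of the $\epsilon=O(1/\log(\tau d))$ assumption are sound elaborations but not needed beyond what the paper records.
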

\begin{proof}
	Apply Lemma~\ref{lem:angle_preserving} with $p=k$ and $\gamma=\epsilon^2$.
\end{proof}

\section{Proof of results for distributed private PCA}\label{appsec:dppca}

\begin{thm}[Distributed private PCA, Theorem \ref{thm:dppca}]
	Let $s$ be the number of computing nodes and $\mat A^{(1)},\cdots,\mat A^{(s)}\in\mathbb R^{d\times d}$ be data matrices stored separately on the $s$ nodes.
	Fix target rank $k$, intermediate rank $q\geq k$ and iteration rank $p$ with $2q\leq p\leq d$.
	Suppose the number of iterations $L$ is set as $L=\Theta(\frac{\sigma_k}{\sigma_k-\sigma_{q+1}}\log(d))$.
	Let $\varepsilon,\delta\in(0,1)$ be privacy parameters.
	Then Algorithm \ref{alg_dppca} solves the $(\varepsilon,\delta,\epsilon,M)$-distributed PCA problem with
	$$
	\epsilon = O\left(\frac{\nu\sqrt{\mu(\mat A)s\log d\log L}}{\sigma_k-\sigma_{q+1}}\right)
	\quad\text{and}\quad
	M=O(spdL)=O\left(\frac{\sigma_k}{\sigma_k-\sigma_{q+1}}spd\log d\right).
	$$
	Here assuming conditions in Theorem~\ref{thm:new_main} are satisfied,  $\nu=\varepsilon^{-1}\sqrt{4pL\log(1/\delta)}$ and $\mu(\mat A)$ is the \emph{incoherence} \citep{private-pca-incoherent}
	of the aggregate data matrix $\mat A=\sum_{i=1}^s{\mat A^{(i)}}$;
	more specifically, $\mu(\mat A)=d\|\mat U\|_{\infty}$ where $\mat A=\mat U\mat\Lambda\mat U^{\top}$ is the eigen-decomposition of $\mat A$.
\end{thm}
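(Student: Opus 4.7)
The plan is to establish three items separately: $(\varepsilon,\delta)$-differential privacy, the utility bound $\sin\theta_k(\mat{U}_k,\mat{X}_L)\leq\epsilon$, and the communication count $M$. The privacy guarantee follows from the Gaussian mechanism combined with advanced composition, the utility reduces to verifying the noise hypotheses of Theorem~\ref{thm:new_main} and invoking it, and the communication count is a direct tally.

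For privacy, I would argue one iteration at a time. Fix node $i$ and a neighbor $\mat{A}^{(i)\prime}$ differing from $\mat{A}^{(i)}$ in a single (symmetric) entry of magnitude at most $1$. Conditional on the public state $\mat{X}_{\ell-1}$ (a function only of previous rounds' releases), the map $\mat{A}^{(i)}\mapsto\mat{A}^{(i)}\mat{X}_{\ell-1}$ has Frobenius sensitivity $O(\sqrt{p}\,\|\mat{X}_{\ell-1}\|_\infty)$, since the perturbation alters at most two rows of the output, each of $\ell_2$ norm $\leq\sqrt{p}\,\|\mat{X}_{\ell-1}\|_\infty$. The Gaussian mechanism with entrywise variance $\|\mat{X}_{\ell-1}\|_\infty^2\nu^2$ therefore yields $(\varepsilon_0,\delta_0)$-DP per iteration with $\varepsilon_0 = O(\sqrt{p\log(1/\delta_0)}/\nu)$. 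Since $\mat{X}_0$ is data-independent, advanced composition over $L$ iterations produces total loss $(\varepsilon,\delta)$ for the calibrated choice $\nu = 4\varepsilon^{-1}\sqrt{pL\log(1/\delta)}$.

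For utility, I observe that the central node sees $\mat{Y}_\ell=\mat{A}\mat{X}_{\ell-1}+\mat{G}_\ell$, where $\mat{G}_\ell=\sum_{i=1}^s\mat{G}_\ell^{(i)}$ has iid $\nml(0,s\|\mat{X}_{\ell-1}\|_\infty^2\nu^2)$ entries, i.e., exactly the noisy power method of Algorithm~\ref{alg_noisy_power_method}. Gaussian matrix concentration gives $\|\mat{G}_\ell\|_2 = O(\|\mat{X}_{\ell-1}\|_\infty\nu\sqrt{sd})$ and $\|\mat{U}_q^\top\mat{G}_\ell\|_2 = O(\|\mat{X}_{\ell-1}\|_\infty\nu\sqrt{s}\,(\sqrt{p}+\sqrt{q}))$ with high probability per iteration, and a union bound over the $L$ iterations costs a $\sqrt{\log L}$ factor. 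The key remaining step is the uniform-in-$\ell$ bound $\|\mat{X}_{\ell-1}\|_\infty = O(\sqrt{\mu(\mat{A})\log d/d})$, which follows by combining incoherence of $\mat{U}$ with Corollary~\ref{thm:angle_preserving}, since $\mat{X}_\ell$ stays within a bounded principal angle of $\mat{U}_q$ and its row $\ell_2$ norms inherit the incoherence of $\mat{U}_p$ up to a noise-induced perturbation. Plugging this in yields $\|\mat{G}_\ell\|_2 = O(\nu\sqrt{\mu(\mat{A})s\log d\log L})$, and the assumption $2q\leq p$ gives $\sqrt{p}-\sqrt{q-1}\gtrsim\sqrt{p}$, so the projected-noise condition of Theorem~\ref{thm:new_main} is also met. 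Setting $\epsilon$ as in the statement and invoking Theorem~\ref{thm:new_main} with the prescribed $L$ then gives $\|(\mat{I}-\mat{X}_L\mat{X}_L^\top)\mat{U}_k\|_2 = O(\epsilon)$.

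The communication bound is immediate: each iteration transmits $\mat{X}_{\ell-1}$ ($dp$ reals) from the center to each of $s$ nodes and receives one $d\times p$ matrix back per node, giving $O(spd)$ reals per iteration and $M=O(spdL)$ overall. The main obstacle will be making the uniform-in-$\ell$ incoherence bound on $\|\mat{X}_{\ell-1}\|_\infty$ fully rigorous; naively the iterates could concentrate on few coordinates. My intended approach is inductive, in the spirit of Lemma~6.3 of \citep{noisy-power-method}: assume the bound at step $\ell-1$, use it to control the Gaussian noise added at step $\ell$, then combine the angle bound of Corollary~\ref{thm:angle_preserving} with the incoherence of $\mat{U}$ to propagate the bound to $\mat{X}_\ell$, absorbing a polylogarithmic factor at each step and union-bounding across all $L$ iterations.
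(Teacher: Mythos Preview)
Your proposal is correct and follows essentially the same three-part structure as the paper's proof: privacy via the Gaussian mechanism with composition (the paper simply cites Claim~4.2 of \citep{noisy-power-method}), utility by reducing to Theorem~\ref{thm:new_main} after bounding the aggregated Gaussian noise and the iterate incoherence $\|\mat X_\ell\|_\infty$ (the paper cites Lemma~A.2 and Theorem~4.9 of \citep{noisy-power-method} rather than unpacking the inductive argument you sketch), and the communication bound by direct count. The only difference is that you spell out the internals of the cited black-box lemmas, which is fine and matches what those results actually do.
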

\begin{proof}
	We prove privacy, utility and communication guarantees of Algorithm \ref{alg_dppca} separately.
	\paragraph{Privacy guarantee}
	By Claim 4.2 in \citep{noisy-power-method}, Algorithm \ref{alg_dppca} satisfies $(\varepsilon,\delta)$-differential privacy
	with respect to data matrix $\mat A^{(i)}$ on each computing node $i$.
	Because information of each data matrix $\mat A^{(i)}$ is only released by the corresponding computing node $i$ via the public communication channel,
	we immediately have that Algorithm \ref{alg_dppca} is $(\varepsilon,\delta)$-differentially private in terms of the definition in Eq.~(\ref{eq_privacy}).
	
	\paragraph{Utility guarantee}
	Let $\mat G_\ell = \mat G_\ell^{(1)} + \cdots + \mat G_\ell^{(s)}$.
	Because $\mat G_\ell^{(1)},\cdots,\mat G_\ell^{(s)}\overset{i.i.d.}{\sim}\nml(0, \|\mat X_{\ell-1}\|_\infty^2\nu^2)^{d\times p}$,
	we have that $\mat G_\ell\sim\nml(0,\|\mat X_{\ell-1}\|_\infty^2\tilde\nu^2)^{d\times p}$ for $\tilde\nu = \nu\sqrt{s}$.
	Properties of Gaussian matrices (e.g., Lemma A.2 in \citep{noisy-power-method}) show that with high probability
	$\mat G_\ell$ satisfies the noise conditions in Theorem~\ref{thm:new_main} with 
	$\epsilon=\frac{\nu\max_\ell\|\mat X_\ell\|_\infty\sqrt{ds\log L}}{\sigma_k-\sigma_{q+1}}$.
	In addition, Theorem 4.9 in \citep{noisy-power-method} shows that $\max_\ell\|\mat X_\ell\|_\infty^2 = O(\mu(\mat A)\log d/d)$ with high probability.
	The utility guarantee then holds by applying Theorem~\ref{thm:new_main} with bounds on $\epsilon$ and $\max_\ell\|\mat X_\ell\|_\infty^2$.
	
	\paragraph{Communication guarantee}
	For each iteration $\ell$, the central node broadcasts $\mat X_{\ell-1}$ to each computing node and receives $\mat A_\ell^{(i)}\mat X_{\ell-1}+\mat G_\ell^{(i)}$
	from computing node $i$, for each $i=1,\cdots, s$.
	Both matrices communicated on the public channel between the central node and each computing node is $d\times p$,
	which yields a per-iteration communication complexity of $O(spd)$.
	As a result, the total amount of communication is $O(spdL)$, where $L$ is the number of iterations carried out in Algorithm \ref{alg_dppca}.
	Because $L$ is set as $L=\Theta(\frac{\sigma_k}{\sigma_k-\sigma_{q+1}}\log d)$, we have that
	$
	M = O(spdL) = O\left(\frac{\sigma_k}{\sigma_k-\sigma_{q+1}}spd\log d\right).
	$
\end{proof}

\begin{cor}[Corollary \ref{cor_private_pca}]
	For the case of $s=1$ and $2p\leq q\leq d$, Algorithm \ref{alg_dppca} is $(\varepsilon,\delta)$-differentially private and $\mat X_L$ satisfies
	$$
	\|(\mat I-\mat X_L\mat X_L^\top)\mat U_k\|_2 \leq \epsilon = O\left(\frac{\nu\sqrt{\mu(\mat A)\log d\log L}}{\sigma_k-\sigma_{q+1}}\right)
	$$
	with probability at least 0.9. Here $\mat U_k$ is the top-$k$ eigen-space of input data matrix $\mat A\in\mathbb R^{d\times d}$.
\end{cor}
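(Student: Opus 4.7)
The proof will be essentially a direct specialization of Theorem~\ref{thm:dppca} to the single-node case. The plan is to observe that setting $s=1$ in Algorithm~\ref{alg_dppca} reduces exactly to the centralized noisy power method with Gaussian noise applied directly to $\mat A^{(1)} = \mat A$, which is precisely the $(\varepsilon,\delta)$-differentially private mechanism analyzed in~\citep{noisy-power-method}. The privacy claim then follows immediately from Claim~4.2 of~\citep{noisy-power-method}, since with only one computing node the public channel simply reveals the iterates $\mat Y_\ell = \mat A\mat X_{\ell-1}+\mat G_\ell$, each of which is a Gaussian mechanism with noise scale tuned (via the choice of $\nu=4\varepsilon^{-1}\sqrt{pL\log(1/\delta)}$) so that the composition across $L$ iterations remains $(\varepsilon,\delta)$-differentially private with respect to a single-entry change in $\mat A$.

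For the utility bound, I would simply invoke Theorem~\ref{thm:dppca} with $s=1$. The accuracy parameter in the theorem's statement is
\[
\epsilon = O\left(\frac{\nu\sqrt{\mu(\mat A)s\log d\log L}}{\sigma_k-\sigma_{q+1}}\right),
\]
and substituting $s=1$ collapses the $\sqrt{s}$ factor, giving exactly the bound claimed in the corollary. The only things to verify are that (i) the noise conditions of Theorem~\ref{thm:new_main} are met with high probability by the aggregated Gaussian noise (which for $s=1$ is just $\mat G_\ell\sim\nml(0,\|\mat X_{\ell-1}\|_\infty^2\nu^2)^{d\times p}$) and (ii) the per-iteration incoherence bound $\max_\ell\|\mat X_\ell\|_\infty^2=O(\mu(\mat A)\log d/d)$ holds. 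Both are already established inside the proof of Theorem~\ref{thm:dppca} via Lemma~A.2 and Theorem~4.9 of~\citep{noisy-power-method}, so no additional work is required.

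The only mildly nontrivial bookkeeping is confirming that the rank condition in the corollary (which should read $2q \le p \le d$ to be consistent with the hypotheses of Theorem~\ref{thm:dppca}) is inherited correctly, and that the success probability of at least $0.9$ follows by choosing the constant $\tau$ large enough so that $\tau^{-\Omega(p+1-q)}+e^{-\Omega(d)}\le 0.1$; this is possible precisely because $p-q\geq q\geq k\geq 1$ under the stated rank assumption. I do not anticipate any genuine obstacle here, as the proof is essentially a one-line reduction: \emph{apply Theorem~\ref{thm:dppca} with $s=1$}.
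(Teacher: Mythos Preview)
Your proposal is correct and matches the paper's approach exactly: the paper's proof is literally the single sentence ``Setting $s=1$ in Theorem~\ref{thm:dppca} we immediately get this corollary.'' Your additional remarks (unpacking the privacy and utility pieces from the proof of Theorem~\ref{thm:dppca}, noting the $2q\le p$ typo, and checking the success-probability constant) are all accurate elaborations but not required.
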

\begin{proof}
	Setting $s=1$ in Theorem \ref{thm:dppca} we immediately get this corollary.
\end{proof}

\begin{cor}[Corollary \ref{cor_distributed_pca}]
	Fix error tolerance parameter $\epsilon\in(0,1)$ and
	set $\nu=0$, $L=\Theta(\frac{\sigma_k}{\sigma_k-\sigma_{q+1}}\log(d/\epsilon))$ in Algorithm \ref{alg_dppca}. 
	We then have that with probability 1
	$$
	\|(\mat I-\mat X_L\mat X_L^\top)\mat U_k\|_2 \leq \epsilon.
	$$
	Here $\mat U_k$ is the top-$k$ eigen-space of the aggregated matrix $\mat A=\sum_{i=1}^{s}{\mat A^{(i)}}$.
\end{cor}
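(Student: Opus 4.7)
The plan is to observe that the corollary is an essentially trivial specialization of Theorem~\ref{thm:new_main} once the privacy noise is switched off, so the proof amounts to checking that Algorithm~\ref{alg_dppca} with $\nu=0$ is literally the exact (noiseless) power method applied to the aggregate matrix $\mat A=\sum_i\mat A^{(i)}$. First I would note that setting $\nu=0$ forces each $\mat G_\ell^{(i)}\sim\nml(0,\|\mat X_{\ell-1}\|_\infty^2\nu^2)^{d\times p}$ to be the zero matrix almost surely. Consequently the aggregated update at the central node reduces to
\begin{equation*}
\mat Y_\ell = \sum_{i=1}^{s}\bigl(\mat A^{(i)}\mat X_{\ell-1}+\mat G_\ell^{(i)}\bigr) = \Bigl(\sum_{i=1}^{s}\mat A^{(i)}\Bigr)\mat X_{\ell-1} = \mat A\mat X_{\ell-1},
\end{equation*}
so Algorithm~\ref{alg_dppca} is exactly the noisy power method of Algorithm~\ref{alg_noisy_power_method} on the input matrix $\mat A$ with effective per-iteration noise $\mat G_\ell=\mat 0$.

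Second, I would invoke Theorem~\ref{thm:new_main} with this input. The noise conditions $\|\mat G_\ell\|_2 = O(\epsilon(\sigma_k-\sigma_{q+1}))$ and $\|\mat U_q^\top\mat G_\ell\|_2 = O(\epsilon(\sigma_k-\sigma_{q+1})(\sqrt{p}-\sqrt{q-1})/(\tau\sqrt d))$ are both trivially met by $\mat G_\ell=\mat 0$ for every $\epsilon\in(0,1)$. The chosen iteration count $L=\Theta\bigl(\tfrac{\sigma_k}{\sigma_k-\sigma_{q+1}}\log(d/\epsilon)\bigr)$ matches the lower bound on $L$ in Theorem~\ref{thm:new_main} up to absorbing the $\log\tau$ term, which is a constant for the choice of $\tau$ made below. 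Theorem~\ref{thm:new_main} then delivers
\begin{equation*}
\bigl\|(\mat I-\mat X_L\mat X_L^\top)\mat U_k\bigr\|_2 \le \epsilon.
\end{equation*}

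Finally, I would address the probability statement. The only source of randomness in the $\nu=0$ case is the initialization $\mat X_0$, obtained by QR factorization of a standard Gaussian matrix. The guarantee from Theorem~\ref{thm:new_main} holds with probability at least $1-\tau^{-\Omega(p+1-q)}-e^{-\Omega(d)}$, which governs the good event in Lemma~\ref{thm:h_initialization} controlling $\tan\theta_q(\mat U_q,\mat X_0)$. Choosing $\tau$ to be a sufficiently large absolute constant (permitted because $2q\le p$ so $p+1-q\ge 1$) drives this failure probability below $0.1$, so the conclusion holds with probability at least $0.9$. Strictly speaking, this is the same ``with high probability'' statement as in the main text rather than probability one; the apparent discrepancy with the appendix restatement is immaterial, since in fact the randomness of $\mat X_0$ can only help.

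There is no genuine technical obstacle; the only minor care points are (i) verifying that the algorithmic identification with the noiseless power method is exact after zeroing out the Gaussian perturbations, and (ii) tracking the probability from the initialization step. Both are routine once Theorem~\ref{thm:new_main} is in hand.
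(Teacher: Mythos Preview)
Your proposal is correct and follows essentially the same approach as the paper: set $\nu=0$ so that $\mat G_\ell=\mat 0$, then apply Theorem~\ref{thm:new_main} with the given $L$ to conclude. Your additional remark about the probability statement is apt---the paper's own proof also concludes only ``with high probability'' (matching the main-text version of the corollary), not probability~1 as the appendix restatement claims.
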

\begin{proof}
	Because $\nu=0$, we are not adding any amount of noise in Algorithm \ref{alg_dppca}; that is, $\mat G_\ell = \mat 0$.
	Applying Theorem \ref{thm:new_main} with $\mat G_\ell=\mat 0$ and $L=\Theta(\frac{\sigma_k}{\sigma_k-\sigma_{q+1}}\log (d/\epsilon))$ we have
	$\|(\mat I-\mat X_L\mat X_L^\top)\mat U_k\|_2\leq \epsilon$ with high probability.
\end{proof}

\end{document}